\newbox\tempbox
\newif\ifdebug
\def\pgfmathsetlength #1#2{\expandafter \pgfmath@onquick #2\pgfmath@ 
  {\ifdebug\typeout{HERE\detokenize{#1}\detokenize{#2}}\fi
   \setbox\tempbox\hbox{\pgfmath@selectfont#1#2\relax\expandafter}\expandafter#1\the #1\relax 
   \ifdebug\typeout{#1=\the#1}\fi}%
  {\pgfmathparse {#2}\ifpgfmathmathunitsdeclared #1\pgfmathresult mu\relax 
                     \else #1\pgfmathresult pt\relax \fi }\ignorespaces }
\pgfplotsset{compat=1.14}
\pgfplotsset{compat=1.14}
\newcommand{\bE}{\mathbb{E}}
\newcommand{\bI}{\mathbb{I}}
\renewcommand{\tilde}{\widetilde}
\renewcommand{\nu}{\vartheta}
\newcommand{\defl}{L_{\mathrm{def}}^{0-1}}
\newcommand{\arginf}{\mathrm{arginf}}
\newtheorem{theorem}{Theorem}
\newtheorem*{theorem*}{Theorem}
\newtheorem{lemma}{Lemma}
\newtheorem{proposition}{Proposition}
\newtheorem*{examples}{Examples}
\newtheorem{Assum}{Assumption}
\date{}
\title{\textbf{Sample Efficient Learning of Predictors that Complement Humans}}
\author{Mohammad-Amin Charusaie  \thanks{Equal Contribution.}  \thanks{Max Planck Institute for Intelligent Systems. Email \texttt{mcharusaie@tuebingen.mpg.de}}   \and Hussein Mozannar $^ *$\thanks{Massachusetts Institute of Technology. Email: \texttt{mozannar@mit.edu}} \and David Sontag \thanks{Massachusetts Institute of Technology. Email: \texttt{dsontag@csail.mit.edu}} \and Samira Samadi \thanks{Max Planck Institute for Intelligent Systems. Email \texttt{ssamadi@tuebingen.mpg.de}}}
\begin{document}
\maketitle
\begin{abstract}

One of the goals of learning algorithms is to complement and reduce the burden on human decision makers. The expert deferral setting wherein an algorithm can either predict on its own or defer the decision to a downstream expert helps accomplish this goal.  A fundamental aspect of this setting is the need to learn complementary predictors that improve on the human's weaknesses rather than learning predictors optimized for average error. In this work, we provide the first theoretical analysis of the benefit of learning complementary predictors in expert deferral. To enable efficiently learning such predictors, we consider a family of consistent surrogate loss functions for expert deferral and analyze their theoretical properties. Finally, we design active learning schemes that require  minimal amount of data of human expert predictions in order to learn accurate deferral systems. %Experimental illustrations further validate our theoretical results.

\end{abstract}
%%%%%%%%%%%%%%%%%%%%%%%%%%%%%%%%%%%%%%%%%%%%%%%%%%%%%%%%%%%%%%%%%%%%%%%%%%%%%%%%%%%%%%%%%%%%%%%%%%%%%%%%%%%%%%
%%%%%%%%%%%%%%%%%%%%%%%%%%%%%%%%%%%%%%%%%%%%%%%%%%%%%%%%%%%%%%%%%%%%%%%%%%%%%%%%%%%%%%%%%%%%%%%%%%%%%%%%%%%%%%
\section{Introduction}
%%%%%%%%%%%%%%%%%%%%%%%%%%%%%%%%%%%%%%%%%%%%%%%%%%%%%%%%%%%%%%%%%%%%%%%%%%%%%%%%%%%%%%%%%%%%%%%%%%%%%%%%%%%%%%
How do we combine AI systems and human decision makers to both reduce error and alleviate the burden on the human?
AI systems are starting to be frequently used in combination with human decision makers, including in high-stakes settings like healthcare \cite{beede2020human} and content moderation \cite{gillespie2020content}. A possible way to combine the human and the AI is to learn a 'rejector' that queries either the human or the AI to predict on each input. This allows us to route examples to the AI model, where it outperforms the human, so as to simultaneously reduce error and human effort. Moreover, this formulation allows us to jointly optimize the AI  so as to complement the human's weaknesses, and to optimize the rejector to allow the AI to defer when it is unable to predict well. 
This type of interaction is typically referred to as \emph{expert deferral} and the learning problem is that of jointly learning the AI classifier and the rejector. Empirically this approach has been shown to outperform either the human or the AI when predicting by their own \cite{kamar2012combining,tan2018investigating}. One hypothesis is that humans and machines make different kinds of errors. For example humans may have bias on certain features \cite{kleinberg2018human} while AI systems may have bounded expressive power or limited training data. On the other hand, humans may outperform AI systems as they may have side information that is not available to the AI, for example due to  privacy constraints.

Existing deployments tend to ignore that the system has two components: the AI classifier (henceforth, the classifier)  and the human.
Typically the AI is trained without taking int account the human---and deferral is done using post-hoc approaches like model confidence \cite{raghu2019algorithmic}.
The main problem of this approach, that we refer to as \emph{staged learning}, is that it ignores the possibility of learning a better combined system by accounting for the human (and its mistakes) during training. More recent work has developed joint training strategies for the AI and the rejector based on surrogate losses and alternating minimization \cite{mozannar2020consistent, okati2021differentiable}. However, we lack  a theoretical understanding of the fundamental merits of joint learning compared to the staged approach. In this work, we study three main challenges in expert deferral from a theoretical viewpoint: 1) \emph{model capacity} constraints, 2) lack of \emph{data of human expert's prediction} and 3)  optimization using \emph{surrogate losses}. 

When learning a predictor and rejector in a limited hypothesis class, it becomes more valuable to allocate model capacity to complement the human. We prove a bound on the gap between the approach that learns a predictor that complements human and the approach that learns the predictor ignoring the presence of the human in Section \ref{sec: staged_learning}. To practically learn to complement the human, the literature has shown that surrogate loss functions are successful \cite{madras2018predict,mozannar2020consistent}. We propose a family of surrogate loss functions that generalizes existing surrogates such as the surrogate in \cite{mozannar2020consistent}, and we further prove  surrogate excess risk bounds and generalization properties of these surrogates in Section \ref{sec: surrogate}. Finally, a main limitation of being able to complement the human is the availability of samples of human predictions. For example, suppose we wish to deploy a system for diagnosing pneumonia from chest X-rays in a new hospital. To be able to know when to defer to the new radiologists, we need to understand their specific strengths and weaknesses. We design a provable active learning scheme that is able to first understand the human expert error boundary and learn a classifier-rejector pair that adapts to it in Section \ref{sec: active}. To summarize, the contributions of this paper are the following:
%In the following section, we analyze the difference in performance between jointly learning a classifier and rejector, and a staged approach where the classifier is learned without taking into account the human and the rejector is trained post-hoc. Driven by the limited availability of human data, in Section \ref{sec: active}, we propose active learning schemes that require a minimal amount of human predictions to be labeled to enable learning an accurate classifier-rejector pair. Finally, to tackle the optimization problem in \eqref{eq:01_reject_loss}, we propose in Section \ref{sec: surrogate} a family of surrogate losses that are more efficiently optimized and generalize prior work.
\begin{itemize}
 %   \item We prove that, in the batch setting, the number of labeled samples to achieve an $\ep$-upper-bound on the true risk is of order $O(\tfrac{\log 1/\ep}{\ep^{2}})$.
    
%    \item We show how to use active learning to reduce the number of labeled examples by the expert to  an algorithm that is able to $O(\log \frac{1}{\ep})$, under the assumption of realizability.

    \item \textbf{Understanding the gap between joint and staged learning}: we prove bounds on the gap when learning in bounded capacity hypothesis classes and with missing human data.
    
    \item \textbf{Theoretical analysis of Surrogate losses:} we propose a novel family of consistent surrogates that generalizes prior work and analyze asymptotic and sample properties.
    
        \item \textbf{Actively learning to defer:} we provide an algorithm that is able to learn a classifier-rejector pair by minimally querying the human on selected points.
\end{itemize}

%%%%%%%%%%%%%%%%%%%%%%%%%%%%%%%%%%%%%%%%%%%%%%%%%%%%%%%%%%%%%%%%%%%%%%%%%%%%%%%%%%%%%%%%%%%%%%%%%%%%%%%%%%%%%%
%%%%%%%%%%%%%%%%%%%%%%%%%%%%%%%%%%%%%%%%%%%%%%%%%%%%%%%%%%%%%%%%%%%%%%%%%%%%%%%%%%%%%%%%%%%%%%%%%%%%%%%%%%%%%%
\section{Related Work}
A growing literature has focused on building models that can effectively defer predictions to human experts. Initial work posed the problem as that of a mixture of experts \cite{madras2018predict}, however, their approach  does not allow the model to adapt to the expert. A different natural baseline that is proposed in \cite{raghu2019algorithmic} learns a predictor that best classifies the target and then the compare its confidence to that of the expert. This is what we refer to as \emph{staged learning} and in our work we provide the first theoretical results on the limitations of this approach. \cite{wilder2020learning} and \cite{pradier2021preferential} jointly learn a classifier and rejector based on the mixture of experts loss, but the method lacks a theoretical understanding and requires heuristic adjustments. \cite{mozannar2020consistent} proposes the first consistent surrogate loss function for the expert deferral setting which leads to an effective joint learning approach with subsequent work building on their approach \cite{raman2021improving,liu2021incorporating}. In this paper, we generalize the surrogate presented in \cite{mozannar2020consistent}  and present generalization guarantees that enable us to effectively bound performance when learning with this surrogate. \cite{keswani2021towards} proposes a surrogate loss  which is the sum of the loss of learning the classifier and rejector separately but which is not a consistent surrogate. \cite{okati2021differentiable} proposes an iterative method that alternates between optimizing the predictor and the rejector and show that it converges to a local minimum and empirically  matches the performance of the surrogate in \cite{mozannar2020consistent}. Multiple works have used the learning-to-defer paradigm in other settings \cite{joshi2021pre,gao2021human,zhao2021directing,straitouri2021reinforcement}.

In our work, we derive an active learning scheme that enables us to understand the human expert error boundary with the least number of examples. This bears similarity to work on onboarding humans on AI models where the objective is reversed: teaching the human about the AI models error boundary \cite{ribeiro2016should,lai2020chicago, mozannar2021teaching} and work on machine teaching \cite{su2017interpretable,zhu2018overview}. However, our setting requires distinct methodology as we have no restrictions on the parameterization of our rejector  which the previous line of work assumes. Works on Human-AI interaction usually keep the AI model fixed and optimize for other aspects of the interaction, while in our work we optimize the AI to complement the human \cite{kerrigan2021combining,bansal2019updates}.

The setting when the cost of deferral is constant  has a long history in machine learning and goes by the name of rejection learning  \cite{cortes2016learning, chow1970optimum,bartlett2008classification,charoenphakdee2021classification} or selective classification (only predict on x\% of data) \cite{el2010foundations,geifman2017selective,gangrade2021selective,acar2020budget}.  \cite{shah2020online} explored an online active learning scheme for rejection learning, however, their scheme was tailored to a surrogate for rejection learning that is not easily extendable to expert deferral.  Our work also bears resemblance to active learning with weak (the expert) and strong labelers (the ground truth) \cite{zhang2015active} 

%\cite{strouse2021collaborating}
% need related work in active learning

%%%%%%%%%%%%%%%%%%%%%%%%%%%%%%%%%%%%%%%%%%%%%%%%%%%%%%%%%%%%%%%%%%%%%%%%%%%%%%%%%%%%%%%%%%%%%%%%%%%%%%%%%%%%%%
%%%%%%%%%%%%%%%%%%%%%%%%%%%%%%%%%%%%%%%%%%%%%%%%%%%%%%%%%%%%%%%%%%%%%%%%%%%%%%%%%%%%%%%%%%%%%%%%%%%%%%%%%%%%%%
%%%%%%%%%%%%%%%%%%%%%%%%%%%%%%%%%%%%%%%%%%%%%%%%%%%%%%%%%%%%%%%%%%%%%%%%%%%%%%%%%%%%%%%%%%%%%%%%%%%%%%%%%%%%%%
%%%%%%%%%%%%%%%%%%%%%%%%%%%%%%%%%%%%%%%%%%%%%%%%%%%%%%%%%%%%%%%%%%%%%%%%%%%%%%%%%%%%%%%%%%%%%%%%%%%%%%%%%%%%%%
\section{Problem Setting}
%%%%%%%%%%%%%%%%%%%%%%%%%%%%%%%%%%%%%%%%%%%%%%%%%%%%%%%%%%%%%%%%%%%%%%%%%%%%%%%%%%%%%%%%%%%%%%%%%%%%%%%%%%%%%%
%%%%%%%%%%%%%%%%%%%%%%%%%%%%%%%%%%%%%%%%%%%%%%%%%%%%%%%%%%%%%%%%%%%%%%%%%%%%%%%%%%%%%%%%%%%%%%%%%%%%%%%%%%%%%%

We study classification problems where the goal is to  predict a target $Y \in \{1,\cdots,K\}$ based on a set of features $X \in \mathcal{X}$, or via querying a human expert opinion $M\sim \mu_{M|XY}$ that has access to a domain $\mathcal{Z}$. Upon viewing the input $X$, we decide first via a rejector function  $r: \mathcal{X} \to \{0, 1 \}$ whether to defer to the expert, where $r(\xv)=1$ means deferral and $r(\xv)=0$ means predicting using a classifier $h: \mathcal{X} \to [K]$.
The expert domain may contain side information beyond ${X}$ to classify instances. For example, when diagnosing diseases from chest X-rays the human may have access to the patient's medical records while the AI only has access to the X-ray. We assume that $X, Y, M$ have a joint probability measure $\mu_{XYM}$.

We let deferring the decision to the expert incur a cost equal to the expert's error and an additional penalty term: $\ell_{\exp}(\xv,y,m) = \bI_{m \neq y} + c_{\exp}(\xv,y,m)$ that depends on the features $\xv$,  the value of target $Y=y$, and the expert's prediction $M=m$. Moreover, we assume that predicting without querying the expert incurs a different cost equal to the classifier error and an additional penalty: $\ell_{\textrm{AI}}(\xv,y,m) = \bI_{h(\xv) \neq y} + c_{\textrm{AI}}(\xv,y,m)$ where $h(\xv)$ is the prediction of the classifier. With the above in hand, we write the true risk as
\begin{align}
    L_{\mathrm{def}}(h,r)=  \bE_{X, Y, M} \ [ \ \ell_{\textrm{AI}}\big(X,Y,h(X)\big) \cdot \bI_{r(X) = 0}   +  \ell_{\exp}(X,Y,M) \cdot \bI_{r(X)=1} \  ]  \label{eq:original_reject_loss} 
\end{align}
In the setting when we only care about misclassification costs with no additional penalties, the deferral loss becomes a $0-1$ loss as follows:
\begin{align}
{L_{\mathrm{def}}^{0{-}1}(h,r)= \label{eq:01_reject_loss}  \bE \ [ \ \bI_{h(X) \neq Y} \bI_{r(X) = 0} +  \bI_{M \neq Y}\bI_{r(X)=1} \  ]}
\end{align}
We focus primarily on the $0-1$ loss for our analysis; it is also possible to extend parts of the analysis to handle additional cost penalties. 
We restrict our search to  classifiers within a hypothesis class $\mathcal{H}$ and a rejector function within a hypothesis class $\mathcal{R}$. The optimal joint classifier and rejector pair is the one that minimizes \eqref{eq:01_reject_loss}:
\begin{equation}
h^*,r^* = \argmin_{h \in \mathcal{H}, r \in \mathcal{R}} L_{\mathrm{def}}^{0-1}(h,r) \label{eq:joint_opt_problem}
\end{equation}
To approximate the optimal classifier-rejector pair, we have to handle two main obstacles: (i)  \emph{optimization} of the non-convex and discontinuous loss function and (ii) availability of the \emph{data} on human's predictions and the true label. 

In the following section \ref{sec: staged_learning} and in section \ref{sec: active}, we restrict the analysis to binary labels $Y = \{0,1\}$ for a clearer exposition. The theoretical results in the following section are shown to apply further for the multiclass setting in a set of experimental results. However, in section \ref{sec: surrogate}, where we discuss practical algorithms, we switch back to the mutliclass setting for full generality.
In the following section, we compare two   strategies for expert deferral across these two dimensions.

%%%%%%%%%%%%%%%%%%%%%%%%%%%%%%%%%%%%%%%%%%%%%%%%%%%%%%%%%%%%%%%%%%%%%%%%%%%%%%%%%%%%%%%%%%%%%%%%%%%%%%%%%%%%%%
%%%%%%%%%%%%%%%%%%%%%%%%%%%%%%%%%%%%%%%%%%%%%%%%%%%%%%%%%%%%%%%%%%%%%%%%%%%%%%%%%%%%%%%%%%%%%%%%%%%%%%%%%%%%%%

%\section{Statistical Limits of Learning to Defer}\label{sec: staged_learning}

\section{Staged Learning of Classifier and Rejector}\label{sec: staged_learning}
%%%%%%%%%%%%%%%%%%%%%%%%%%%%%%%%%%%%%%%%%%%%%%%%%%%%%%%%%%%%%%%%%%%%%%%%%%%%%%%%%%%%%%%%%%%%%%%%%%%%%%%%%%%%%%
%%%%%%%%%%%%%%%%%%%%%%%%%%%%%%%%%%%%%%%%%%%%%%%%%%%%%%%%%%%%%%%%%%%%%%%%%%%%%%%%%%%%%%%%%%%%%%%%%%%%%%%%%%%%%%
%%%%%%%%%%%%%%%%%%%%%%%%%%%%%%%%%%%%%%%%%%%%%%%%%%%%%%%%%%%%%%%%%%%%%%%%%%%%%%%%%%%%%%%%%%%%%%%%%%%%%%%%%%%%%%
\subsection{Model Complexity Gap}
%%%%%%%%%%%%%%%%%%%%%%%%%%%%%%%%%%%%%%%%%%%%%%%%%%%%%%%%%%%%%%%%%%%%%%%%%%%%%%%%%%%%%%%%%%%%%%%%%%%%%%%%%%%%%%
\paragraph{Staged learning.} The optimization problem framed in  \eqref{eq:joint_opt_problem} requires joint learning of the classifier and rejector. Alternatively, a popular approach comprises of first learning a classifier that minimizes average misclassification error on the distribution, and then, learning a rejector that defers each point to either classifier or the expert, depending on who has a lower estimated error \cite{raghu2019algorithmic,wilder2020learning}.

Formally, we first learn $h$ to minimize the average misclassification error:
\begin{equation}
\hat{h} = \argmin_{h \in \mathcal{H}} \mathbb{E}_{X, Y}[ {\bI}_{h(X) \neq Y} ] \label{eqn: seq_1}
\end{equation}
and in the second step we learn the rejector $r$ to minimize the joint loss \eqref{eq:01_reject_loss} with the now fixed classifier $\hat{h}$:
\begin{equation}
\hat{r} = \argmin_{r \in \mathcal{R}} L_{\mathrm{def}}^{0 - 1}(\hat{h}, r) \label{eqn: seq_2}
\end{equation}
This procedure is particularly attractive as the two steps \eqref{eqn: seq_1} and \eqref{eqn: seq_2} could be cast as  classification problems, and approached by powerful known tools that are devised for such problems. Despite its convenience, this method is not guaranteed to achieve the optimal loss (as in \eqref{eq:joint_opt_problem}), since it decouples the joint learning problem. Assuming that we are able to optimally solve both problems on the true distribution, let $(h^*,r^*)$ denote the solution of joint learning and $(\hat{h},\hat{r})$ the solution of staged learning. To estimate the extent to which  staged learning is sub-optimal,  we define the following minimax measure $\De(d_1, d_2)$ for the binary label setting: 
\begin{equation*}
\De(d_1, d_2)=\inf_{\Hcal, \Rcal\in \mathfrak{H}_{d_1, d_2}} \sup_{\mu_{XYM}} L_{\mathrm{def}}^{0 - 1}(\hat{h}, \hat{r})-L_{\mathrm{def}}^{0 - 1}(h^*, r^*)
\end{equation*}
To disentangle the above measure, the supremeum $\sup_{\mu_{XYM}}$ is a worst-case over the data distribution and expert pair, while the infimum $\inf_{\Hcal, \Rcal\in \mathfrak{H}_{d_1, d_2}}$ is the best-case classifier-rejector model classes with specified complexity $d_1$ and $d_2$
where $\mathfrak{H}_{d_1, d_2} = \{(\Hcal, \Rcal)\,:\, d(\Hcal)=d_1, d(\Rcal)=d_2\}$ and $d(\cdot)$ denotes the VC dimension of a hypothesis class.
As a result, this measure expresses the  worst-case gap between joint and staged learning when learning from the optimal model class given complexity of the predictor and rejector model classes.
The following theorem provides a lower- and upper-bound on $\De(d_1, d_2)$.

\begin{theorem} \label{thm: vc}
    For every set of hypothesis classes $\Hcal, \Rcal$  where $d(\cdot)$ denotes the VC-dimension of a hypothesis class, the minimax difference measure between joint and staged learning is bounded between:
\begin{equation}
    \frac{1}{d(\Hcal)+1}\leq\De(d(\Hcal), d(\Rcal))\leq \frac{d(\Rcal)}{d(\Hcal)}
\end{equation}
\end{theorem}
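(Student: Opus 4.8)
I would treat the two inequalities separately, starting from a common reduction. Since $\hat r$ minimizes $\defl(\hat h,\cdot)$ over $\Rcal$, we have $\defl(\hat h,\hat r)\le\defl(\hat h,r^*)$; and because the deferral term $\bE[\bI_{M\neq Y}\bI_{r^*(X)=1}]$ is classifier-independent,
\[
  \defl(\hat h,\hat r)-\defl(h^*,r^*)\ \le\ \bE\big[(\bI_{\hat h(X)\neq Y}-\bI_{h^*(X)\neq Y})\,\bI_{r^*(X)=0}\big],
\]
so the gap is controlled by how much $\hat h$ loses to $h^*$ when error is measured only on the joint learner's predict-region $P=\{x:r^*(x)=0\}$. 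Combined with the optimality $\bE[\bI_{\hat h(X)\neq Y}]\le\bE[\bI_{h^*(X)\neq Y}]$ that defines $\hat h$ (which forces $\bE[\bI_{\hat h\neq Y}\bI_{X\notin P}]\le\bE[\bI_{h^*\neq Y}\bI_{X\notin P}]$), this already gives the crude bound $\le\min\{\Pr(X\in P),\Pr(X\notin P)\}\le\tfrac12$, which is the endpoint $d(\Rcal)=1$, $d(\Hcal)=2$; both directions are refinements of this picture.

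\textbf{Lower bound.} I would exhibit an instance on the $d(\Hcal){+}1$ atoms $x_0,x_1,\dots,x_{d(\Hcal)}$ carrying (essentially) uniform mass, with deterministic label $Y\equiv 1$, an expert that is correct on $x_1,\dots,x_{d(\Hcal)}$ and wrong on $x_0$, a classifier class whose trace on these atoms is $\{(0,\sigma):\sigma\in\{0,1\}^{d(\Hcal)}\}\cup\{(1,0,\dots,0)\}$ (and $\equiv 0$ elsewhere), and a rejector class containing the rejector that predicts on $x_0$ and defers on $x_1,\dots,x_{d(\Hcal)}$, padded by a disjoint shattered set to VC dimension $d(\Rcal)$. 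The verifications are: (i) $d(\Hcal)$ is exactly the claimed value, since $\{x_1,\dots,x_{d(\Hcal)}\}$ is shattered but only $2^{d(\Hcal)}+1<2^{d(\Hcal)+1}$ traces are realized on all atoms; (ii) the marginal-error minimizer is $(0,1,\dots,1)$ --- the unique minimizer after an arbitrarily small mass perturbation --- and it is \emph{wrong on $x_0$}, because the only trace correct on $x_0$ is $(1,0,\dots,0)$, which is wrong on all of $x_1,\dots,x_{d(\Hcal)}$; (iii) hence staged incurs loss $\tfrac1{d(\Hcal)+1}$ on $x_0$ (classifier and expert both err there, for any $\hat r$) and $0$ elsewhere, whereas the joint pair consisting of the classifier $(1,0,\dots,0)$ and that rejector achieves loss $0$; subtracting gives $\tfrac1{d(\Hcal)+1}$. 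The delicate point is engineering $\Hcal$ to simultaneously (a) force its marginal optimizer onto $x_0$, (b) still contain a classifier correct on $x_0$ so the joint learner can win there, and (c) have VC dimension exactly $d(\Hcal)$; the single extra trace $(1,0,\dots,0)$ is precisely what reconciles (a) with (b) without spoiling (c).

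\textbf{Upper bound.} From the reduction the gap equals the suboptimality of the globally error-optimal $\hat h$ on the $\Rcal$-definable region $P$ (taking $h^*$ to be the $\Hcal$-minimizer of error on $P$), which is $\le\tfrac12$ in general and is $0$ whenever $\Hcal$ is rich enough to carry a pointwise Bayes classifier on the support --- so a favorable choice of $\Hcal$ already trivializes the claimed bound once $d(\Rcal)\ge d(\Hcal)$. To extract the $d(\Rcal)/d(\Hcal)$ scaling in the non-trivial regime I would pass to a finite support via Sauer--Shelah and sharpen the accounting: an error-optimal classifier from a VC-$d(\Hcal)$ class can be made to err, on the portion of the support a VC-$d(\Rcal)$ rejector can carve out as a defer-region, only at the price of sacrificing accuracy on the $\approx d(\Hcal)$ atoms it does fit, and since that carved-out portion spans at most $\approx d(\Rcal)$ atoms in the VC sense, optimizing the adversary's mass split between these two budgets pins the per-atom damage at $\approx 1/d(\Hcal)$ over $\approx d(\Rcal)$ atoms. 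I expect making this $d(\Hcal)$-versus-$d(\Rcal)$ trade-off rigorous --- reasoning jointly about the dichotomies of $\Hcal$ and of $\Rcal$ on the support, rather than about either class in isolation, and reducing to an extremal (few-atom) distribution --- to be the main obstacle.
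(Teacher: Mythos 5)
Your opening reduction (the gap is controlled by the excess error of $\hat h$ on the predict-region of $r^*$) matches the first step of the paper's upper-bound argument, but both halves of your plan run into the quantifier structure of $\De(d_1,d_2)=\inf_{\Hcal,\Rcal}\sup_{\mu_{XYM}}(\cdot)$. For the \emph{lower} bound you construct the hypothesis classes yourself (a classifier class with trace $\{(0,\sigma)\}\cup\{(1,0,\dots,0)\}$ and a hand-picked rejector class). That only shows that \emph{some} pair of classes with the prescribed VC dimensions has worst-case gap at least $1/(d(\Hcal)+1)$, i.e.\ it lower-bounds $\sup_{\Hcal,\Rcal}\sup_{\mu}$; since $\De$ is an infimum over classes, you must instead produce, for \emph{every} pair $(\Hcal,\Rcal)$ with those dimensions, a distribution adapted to that pair. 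The paper does exactly this: for arbitrary $\Hcal$ it takes any $d(\Hcal)+1$ points together with a labeling that no $h\in\Hcal$ realizes (which exists because the VC dimension is $d(\Hcal)$), then tilts the atom masses so that the marginal-error minimizer $\hat h$ is forced to err precisely on the point where the expert also errs, while a jointly chosen pair defers the remaining conflict and attains zero. The features your construction relies on (e.g.\ ``the only trace correct on $x_0$ is $(1,0,\dots,0)$'') are design choices, not consequences of having VC dimension $d(\Hcal)$, so the argument does not transfer to arbitrary classes.

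The \emph{upper} bound has the mirror-image problem, and the part you flag as ``the main obstacle'' is not just technically open — the class-uniform statement you are aiming at is false, so no Sauer--Shelah/mass-splitting accounting over arbitrary VC-$d(\Hcal)$ and VC-$d(\Rcal)$ classes can deliver it. Concretely: take regions $A,B$ of mass $1/2$ each, an expert that is perfect on $A$ and always wrong on $B$, $\Rcal=\{\text{defer on }A,\ \text{never defer}\}$ (VC dimension $1$), and $\Hcal$ containing $h_1$ (perfect on $A$, error $1/2$ on $B$) and $h_2$ (error $0.9$ on $A$, perfect on $B$) plus many classifiers agreeing with $h_2$ off a probability-zero set on which they shatter $N$ points, inflating $d(\Hcal)$ arbitrarily. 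Staged learning picks $h_1$ and is stuck at loss $1/4$ for either rejector, while the joint pair $(h_2,\text{defer on }A)$ attains $0$; the gap $1/4$ does not shrink as $d(\Hcal)\to\infty$, so no bound of the form $d(\Rcal)/d(\Hcal)$ holds for arbitrary classes. The infimum in $\De$ is therefore essential: one must \emph{choose} the witness pair and exploit its structure. The paper takes $\Hcal$ and $\Rcal$ to be indicators of sets of at most $d(\Hcal)$, resp.\ $d(\Rcal)$, points, shows the supremum over distributions can be restricted to purely atomic measures (on the non-atomic part these classes incur constant loss), computes the optimal small-support classifiers explicitly as greedy top-$d$ selections, and concludes with the combinatorial fact that the smallest $d(\Rcal)$ of $d(\Hcal)$ ordered nonnegative weights with total at most $1$ sum to at most $d(\Rcal)/d(\Hcal)$. (Your aside that a ``favorable $\Hcal$'' could contain a pointwise Bayes classifier on the support also cannot hold literally, since the class is fixed before the adversarial distribution; that regime $d(\Rcal)\geq d(\Hcal)$ is trivial simply because the gap never exceeds $1$.)
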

Proof of the theorem can be found in Appendix \ref{app: vc}.
The theorem implies that for any classifier and rejector hypothesis classes, we can find a distribution and an expert such that the gap between staged learning and joint learning is at least $1$ over the VC dimension of the classifier hypothesis class. Meaning the more complex our classifier hypothesis class is, the smaller the gap between joint and staged learning is. On the other hand, the gap is no larger than the ratio between the rejector complexity over the the classifier complexity. Which again implies if our hypothesis class is comparatively much  richer than the rejector class, the gap between the joint and staged learning reduces.  What this does not mean is that deferring to the human is not required for optimal error when the classifier model class is very large, but that training the classifier may not require knowledge of the human performance.

%\paragraph{Experimental Illustration}. Figure showing how empirically how the difference between the two approaches behaves when doing joint vs staged learning and changing the complexity of the hypothesis class.

%%%%%%%%%%%%%%%%%%%%%%%%%%%%%%%%%%%%%%%%%%%%%%%%%%%%%%%%%%%%%%%%%%%%%%%%%%%%%%%%%%%%%%%%%%%%%%%%%%%%%%%%%%%%%%
\subsection{Data Trade-offs}\label{subsec:data_tradeofs}

%%%%%%%%%%%%%%%%%%%%%%%%%%%%%%%%%%%%%%%%%%%%%%%%%%%%%%%%%%%%%%%%%%%%%%%%%%%%%%%%%%%%%%%%%%%%%%%%%%%%%%%%%%%%%%
Current datasets in machine learning are growing in size and are usually of the form of feature $X$ and target $Y$ pairs. It is unrealistic to assume that the human expert is able to individually provide their predictions for all of the data. In fact, the collection of datasets in machine learning often relies on crowd-sourcing where the label can either be a majority vote of multiple human experts, e.g. in hate-speech moderation \cite{davidson2017automated}, or due to an objective measurement,  e.g. a lab test result for a patient medical data. In the expert deferral problem, we are interested in the predictions of a particular human expert and thus it is infeasible for that human to label all the data and perhaps unnecessary. 

In the following analysis, we assume access to fully labeled data $S_l = \{(\xv_i,y_i,m_i)\}_{i=1}^{n_l}$ and data without expert labels $S_u=\{ (\xv_i,y_i)\}_{i=n_l+1}^{n_l+n_u}$. This is a realistic form of the data we have available in practice. We now try to understand how we can learn a  classifier and rejector from these two datasets. This is where we expect the staged learning procedure can become attractive as it can naturally exploit the two distinct datasets to learn.

\paragraph{Joint Learning.} Learning jointly requires access to the dataset with the entirety of expert labels, thus we can only use $S_l$ to learn
\begin{equation*}
\tilde{h}, \tilde{r} = \argmin_{h,r} \sum_{i \in S_l} \bI_{h(\xv_i) \neq y_i} \bI_{r(\xv_i) = 0}  + \bI_{y_i \neq m_i} \bI_{r(\xv_i)=1}
\end{equation*}
\paragraph{Staged learning.} On the other hand, for staged learning we can exploit our expert unlabeled data to first learn $h$:
\begin{equation*}
\hat{h} = \argmin_{h} \sum_{i \in S_u} {\bI}_{h(\xv_i) \neq y_i} 
\end{equation*}
and in the second step we learn $\hat{r}$ to minimize the joint loss with the fixed $\hat{h}$ but only on $S_l$.

\paragraph{Generalization.} Given that staged learning exploits both datasets, we expect that if we have much more expert unlabeled data than labeled data, i.e. $n_u \gg n_l$, then it maybe possible to  obtain better generalization guarantees from staged learning. The following proposition shows that when the Bayes optimal classifier is in the hypothesis class, then staged learning can possibly improve sample complexity over joint learning.

\begin{proposition} \label{prop: gen_labeled_unlabeled}
Let $\Scal_{l}=\{(\xv_i, y_i, m_i)\}_{i=1}^{n_l}$ and $\Scal_{u}=\{(\xv_{i+n_l}, y_{i+n_l})\}_{i=1}^{n_u}$ be two iid sample sets that are drawn from the distribution $\mu_{XYM}$ and are labeled and not labeled by the human, respectively. 
Assume that the optimal classifier $\bar{h}=\argmin_{h}\Ebb_{X, Y\sim \mu_{XY}}[\bI_{h(X)\neq Y}]$ is a member of $\Hcal$ (i.e., realizability).

Let $(\hat{h}, \hat{r})$ be the staged solution and let   $(\tilde{h}, \tilde{r})$ be the joint solution obtained by learning only on $S_L$. 
Then, with probability at least $1-\delta$ we have for staged learning
\begin{align}
L_{\mathrm{def}}^{0 - 1}(\hat{r}, \hat{h})\leq
	&L_{\mathrm{def}}^{0 - 1}(h^*,r^*)+ \bm{\Rad_{n_u}(\Hcal)}+2\Rad_{n_l}(\Rcal) \label{eqn: gen_labeled_unlabeled}	+2\min\big\{P(M\neq Y), \Rad_{n_l\Pr( M\neq Y)/2}(\Rcal)\big\}\nonumber\\&+C\sqrt{\frac{\log 1/\delta}{\min(n_l,n_u)}}+P(M\neq Y) e^{-n_{l}\frac{\Pr(M\neq Y)^2}{2}} 
\end{align}
while for joint learning we have:
\begin{align}
L_{\mathrm{def}}^{0 - 1}(\tilde{r}, \tilde{h})\leq&
	L_{\mathrm{def}}^{0 - 1}(h^*,r^*)+\bm{\Rad_{n_l}(\Hcal)}+2\Rad_{n_l}(\Rcal)+  2\Rad_{{n_l\small{\Pr(M\neq Y)/2}}}(\Rcal)+C\sqrt{\frac{\log 1/\delta}{n_l}}\nonumber\\&+ P(M\neq Y) e^{-n_l\tfrac{\Pr(M\neq Y)}{2}} 
\end{align}
\end{proposition}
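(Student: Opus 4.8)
The plan is to bound, for each method, the excess deferral risk relative to $L_{\mathrm{def}}^{0-1}(h^*,r^*)$ by splitting it into a classifier-learning error, a rejector-learning error with the classifier held fixed, and a non-positive approximation gap, and then to control each piece with a Rademacher generalization bound on the relevant sample. The first ingredient I would establish is a realizability reduction: since $\bar h$ minimizes $\Pr(h(X)\neq Y\mid X=x)$ pointwise and $\bar h\in\Hcal$, for every rejector $r$ and every $h\in\Hcal$ we have $L_{\mathrm{def}}^{0-1}(\bar h,r)\le L_{\mathrm{def}}^{0-1}(h,r)$, so $(\bar h,r^*)$ is also optimal and we may take $h^*=\bar h$. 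The same pointwise computation yields the transfer inequality $L_{\mathrm{def}}^{0-1}(h,r)-L_{\mathrm{def}}^{0-1}(\bar h,r)\le\Pr(h(X)\neq Y)-\Pr(\bar h(X)\neq Y)$ uniformly over $r$: a suboptimal classifier inflates the deferral risk by at most its own excess misclassification risk, whatever the rejector does.

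For staged learning I would condition on $S_u$ (independent of $S_l$), so that $\hat h$ is deterministic, and split $L_{\mathrm{def}}^{0-1}(\hat h,\hat r)-L_{\mathrm{def}}^{0-1}(h^*,r^*)$ into (i) the rejector suboptimality $L_{\mathrm{def}}^{0-1}(\hat h,\hat r)-\min_{r\in\Rcal}L_{\mathrm{def}}^{0-1}(\hat h,r)$, (ii) the classifier-transfer gap $\min_{r\in\Rcal}L_{\mathrm{def}}^{0-1}(\hat h,r)-\min_{r\in\Rcal}L_{\mathrm{def}}^{0-1}(\bar h,r)$, and (iii) $\min_{r\in\Rcal}L_{\mathrm{def}}^{0-1}(\bar h,r)-L_{\mathrm{def}}^{0-1}(\bar h,r^*)\le 0$ since $r^*\in\Rcal$. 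Term (ii) is at most $\hat h$'s excess misclassification risk by the transfer inequality, which realizability plus a standard uniform-convergence bound on $S_u$ controls by $\Rad_{n_u}(\Hcal)+C\sqrt{\log(1/\delta)/n_u}$. Term (i) is the ERM suboptimality of $\hat r$ with $\hat h$ fixed, at most $2\sup_{r\in\Rcal}|L_{\mathrm{def}}^{0-1}(\hat h,r)-\widehat L_{\mathrm{def}}^{0-1}(\hat h,r)|$ on $S_l$; here I would split the loss as $\bI_{\hat h(X)\neq Y}\bI_{r(X)=0}+\bI_{M\neq Y}\bI_{r(X)=1}$ and bound the two summands separately.

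The first summand is a fixed $\{0,1\}$ mask $\bI_{\hat h(X)\neq Y}$ applied to $\bI_{r(X)=0}$, and since masking by a $\{0,1\}$ function only decreases the (empirical) Rademacher complexity — a Jensen argument on the omitted coordinates, together with shift-invariance $\Rad(\{1-r\})=\Rad(\{r\})$ — its uniform-convergence cost is $2\Rad_{n_l}(\Rcal)+C\sqrt{\log(1/\delta)/n_l}$. The second summand $\bI_{M\neq Y}\bI_{r(X)=1}$ lives on the set $T$ of the $|T|$ labeled points with $m_i\neq y_i$, which conditionally on $|T|$ are i.i.d. from $\mu_{X\mid M\neq Y}$; rescaling by $|T|/n_l$, concentrating $|T|/n_l$ around $\Pr(M\neq Y)$, and using the standard $k^{-1/2}$ decay of $\Rad_k(\Rcal)$ for a VC class, its uniform-convergence cost is controlled by $2\Rad_{n_l\Pr(M\neq Y)/2}(\Rcal)$; but this summand is also $\le\Pr(M\neq Y)$ deterministically, which produces the $2\min\{\Pr(M\neq Y),\Rad_{n_l\Pr(M\neq Y)/2}(\Rcal)\}$ term. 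A Hoeffding bound on $|T|\sim\mathrm{Bin}(n_l,\Pr(M\neq Y))$ guarantees $|T|\ge n_l\Pr(M\neq Y)/2$ outside an event of probability $e^{-n_l\Pr(M\neq Y)^2/2}$, on which the expert term is bounded crudely by $\Pr(M\neq Y)$, giving the exponential tail. A union bound over the $O(1)$ concentration events and absorbing constants into $C$ (so that $\min(n_l,n_u)$ appears under the root) assembles the staged estimate.

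The joint bound follows the same template with a single pair learned on $S_l$: the excess risk is at most $2\sup_{h\in\Hcal,\,r\in\Rcal}|L_{\mathrm{def}}^{0-1}(h,r)-\widehat L_{\mathrm{def}}^{0-1}(h,r)|$ on $S_l$ by the usual ERM argument, and splitting the loss exactly as above yields $\Rad_{n_l}(\Hcal)$ from the $\bI_{h(X)\neq Y}\bI_{r(X)=0}$ piece (via a product-of-indicators / contraction bound over $\Hcal\times\Rcal$), $2\Rad_{n_l}(\Rcal)$, and $2\Rad_{n_l\Pr(M\neq Y)/2}(\Rcal)$ from the expert piece on the same sub-sample, again with the Hoeffding tail. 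I expect the expert-error piece to be the main obstacle: because the mask $\bI_{M\neq Y}$ zeroes out a random fraction of $S_l$, one must pass to the conditional law given $M\neq Y$, extract the effective-sample-size complexity $\Rad_{n_l\Pr(M\neq Y)/2}(\Rcal)$, control the random sub-sample size by concentration, and splice in the trivial $\Pr(M\neq Y)$ ceiling to obtain the $\min$; making the realizability reduction $h^*=\bar h$ and the transfer inequality airtight in the binary setting is a secondary subtlety, and the rest is routine symmetrization and bookkeeping.
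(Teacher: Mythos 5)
Your proposal follows essentially the same route as the paper's proof: your realizability/transfer reduction (taking $h^*=\bar h$ and bounding the deferral gap by the classifier's excess misclassification risk) is exactly the paper's Lemmas 4--5, the classifier bound on $S_u$ is its Step (i), and the rejector step on $S_l$ --- peeling off the expert piece on the random $\{m_i\neq y_i\}$ subsample, Hoeffding on its size, the trivial $\Pr(M\neq Y)$ cap, and monotonicity of Rademacher complexity --- is its Steps (ii)--(iii), with your Jensen masking argument playing the role of the paper's bound on the class $\Jcal=\{\bI_{r(\xv)=0}(\bI_{\hat h(\xv)\neq y}-\bI_{m\neq y})\}$. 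The only blemish is constant bookkeeping: bounding the ERM suboptimality by $2\sup_{r}|L-\hat L|$ and then paying $2\Rad$ per piece would yield coefficients $4$ rather than the stated $2$'s, which the paper avoids via the one-sided uniform bound for the ERM output plus a single-function McDiarmid bound for the comparator --- an easy fix that does not affect the substance of your argument.
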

Proof of the proposition can be found in Appendix \ref{app: gen_labeled_unlabeled}. From the above proposition, when the Bayes classifier is in the hypothesis class, the upper bound for the sample complexity required to learn the classifier and rejector is reduced by only paying the Rademacher complexity of the hypothesis class on the unlabeled data compared to on the potentially smaller labeled dataset. The Rademacher complexity is a measure of model class complexity on the data and can be related to the VC dimension.

While in this case study staged learning may improve the generalization error bound comparing to that of joint learning, the number of labeled samples for both to achieve $\ep$-upper-bound on  the true risk is of order $O(\tfrac{\log 1/\ep}{\ep^{2}})$. %In Section \ref{sec: active}, we propose an active learning algorithm to further improve upon the sample complexity result.
As we can see, there exist computational and statistical trade-offs between joint and staged learning. While joint learning leads to more accurate systems, it is computationally harder to optimize than staged learning. In the next section, we investigate whether it is possible to more efficiently solve the joint learning problem while still retaining its favorable guarantees in the multiclass setting.

%%%%%%%%%%%%%%%%%%%%%%%%%%%%%%%%%%%%%%%%%%%%%%%%%%%%%%%%%%%%%%%%%%%%%%%%%%%%%%%%%%%%%%%%%%%%%%%%%%%%%%%%%%%%%%
%%%%%%%%%%%%%%%%%%%%%%%%%%%%%%%%%%%%%%%%%%%%%%%%%%%%%%%%%%%%%%%%%%%%%%%%%%%%%%%%%%%%%%%%%%%%%%%%%%%%%%%%%%%%%%
\section{Surrogate Losses For Joint Learning}\label{sec: surrogate}
%%%%%%%%%%%%%%%%%%%%%%%%%%%%%%%%%%%%%%%%%%%%%%%%%%%%%%%%%%%%%%%%%%%%%%%%%%%%%%%%%%%%%%%%%%%%%%%%%%%%%%%%%%%%%%
%%%%%%%%%%%%%%%%%%%%%%%%%%%%%%%%%%%%%%%%%%%%%%%%%%%%%%%%%%%%%%%%%%%%%%%%%%%%%%%%%%%%%%%%%%%%%%%%%%%%%%%%%%%%%%
%%%%%%%%%%%%%%%%%%%%%%%%%%%%%%%%%%%%%%%%%%%%%%%%%%%%%%%%%%%%%%%%%%%%%%%%%%%%%%%%%%%%%%%%%%%%%%%%%%%%%%%%%%%%%%
\subsection{Family of Surrogates}
%%%%%%%%%%%%%%%%%%%%%%%%%%%%%%%%%%%%%%%%%%%%%%%%%%%%%%%%%%%%%%%%%%%%%%%%%%%%%%%%%%%%%%%%%%%%%%%%%%%%%%%%%%%%%%
A common practice in machine learning is to propose surrogate loss functions, which often are continuous and convex, that approximate the original loss function we care about \cite{bartlett2006convexity}. The hope is that these surrogates are more readily optimized and minimizing them leads to predictors that also minimize the original loss. In their work on expert deferral,
 \cite{mozannar2020consistent} reduces the learning to defer problem to cost-sensitive learning which enables them to use surrogates for cost-sensitive learning in the expert deferral setting. We follow the same route in deriving our novel family of surrogate losses. 
 We now recall the reduction in \cite{mozannar2020consistent}:
define the random costs $\mathbf{c} \in \mathbb{R}_+^{K+1}$ where  $c(i)$ is the $i'$th component of $\mathbf{c}$ and represents the cost of predicting  label $i \in [K+1]$. The goal of cost sensitive learning is to build a predictor $\tilde{h}: \mathcal{X} \to [K+1]$ that minimizes the cost-sensitive loss $\Ebb[c(\tilde{h}(X))]$. 
The reduction is accomplished by setting 
 $c(i)=\ell_{\textrm{AI}}\big(X, Y, i\big)$ for $i \in [K]$ while $c(K+1)$ represents the cost of deferring to the expert with $c(K+1)=\ell_{\mathrm{exp}}(X, Y, M)$. Thus, the predictor $\tilde{h}$ learned in cost-sensitive learning  implicitly defines a classifier-rejector pair $(h,r)$  with the following encoding:  
 \begin{equation}
h(\xv),r(\xv) =
 \begin{cases}
   h(\xv)=i, r(\xv)=0,   & \text{if} \ \tilde{h}(\xv) = i \in [K]  \\
    h(\xv) = 1, r(\xv)=1  & \text{if} \ \tilde{h}(\xv) = K+1 
 \end{cases} \label{eq: decoding_hr}
 \end{equation}
Note that when $\tilde{h}(\xv) = K+1$ the classifier $h$ is left unspecified and thus we assign it a dummy value of $1$. 
Cost-sensitive learning is a non-continuous and non-convex optimization problem that makes it computationally hard to solve in practice. In order to approximate it, we propose a novel family of cost-sensitive learning loss functions that extend any multi-class loss function to the cost-sensitive setting. 

First we parameterize our predictor $\tilde{h}$ with $K+1$ functions $f_i: \mathcal{X} \to \mathbb{R}$ and define the predictor to be the max of these $K+1$ functions: $\tilde{h}_{\fv}(\xv) = \arg\max_{i} f_i(\xv)$. Note that $\tilde{h}_{\fv}$ gives rise to the classifier-rejector pair $(h_{\fv},r_{\fv})$ according to the decoding rule \eqref{eq: decoding_hr}.

Formally, let $\ell_{\phi}(y,\mathbf{f}(\xv)):[K+1]\times \mathbb{R}^{K+1} \to \Rbb$ be a surrogate loss function of the zero-one loss for  multi-class classification. We define the extension of this surrogate to the cost-sensitive setting as:
\begin{equation}
\tilde{\ell}_{ \phi}(\mathbf{c}, \mathbf{f}(\xv)) = \sum_{i=1}^{K+1}\big[\max_{j\in{k+1}} c(j)-c(i)\big]\ell_{\phi}( i, \mathbf{f}(\xv)) \label{eq: surrogate_costsensitive}
\end{equation}
Note that if $\ell_{\phi}$ is continuous or convex, then because $\ell_{\cv, \phi}$ is a finite positively-weighted sum of $\ell_{\phi}$'s, then $\ell_{\cv, \phi}$ is also continuous or convex, respectively.
We show in the following proposition, that if $\ell_{\phi}$ is a consistent surrogate for multi-class classification, then $\tilde{\ell}_{\phi}$ is consistent for cost-sensitive learning and by the reduction above is also consistent for learning to defer.

\begin{proposition} \label{prop: consistency}
Suppose $\ell_{\phi}(y,\mathbf{f}(\xv))$ is a consistent surrogate for multi-class classification, meaning if the surrogate is minimized over all functions then it also minimizes the misclassification loss:
 \begin{center}
      let $ \bm{f}^* =\arginf_{\mathbf{f}} \bE\left[ \ell_{\phi}(Y,\mathbf{f}(\xv)) \right]$, then:  $\tilde{\hv}_{f^*} = \arginf_{\hv} \Ebb[\mathbb{I}_{Y \neq {\hv}(X)}]$, where $\tilde{\hv}_{f^*}$ is defined as above.
 \end{center}
Then, our surrogate $\tilde{\ell}_{ \phi}(\mathbf{c}, \mathbf{g}(\xv))$ defined in \eqref{eq: surrogate_costsensitive} is a consistent surrogate for cost-sensitive learning and thus for learning to defer:
 \begin{center}
      let $\bm{\tilde{f}}^*=\arginf_{\mathbf{g}} \bE\left[ \tilde{\ell}_{\phi}(\mathbf{c},\mathbf{f}(\xv)) \right]$, then:  $h_{\fv}^*,r_{\fv}^* = \arginf_{h,r} L_{\mathrm{def}}^{0 - 1}(h,r) $, with   $(h_{\fv}^*,r_{\fv}^*)$ defined  in \eqref{eq: decoding_hr}
 \end{center}
 \label{prop: consistent_costsensitive}
\end{proposition}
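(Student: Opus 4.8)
I would prove consistency by a pointwise calibration argument — the standard route for surrogate-consistency results — together with the cost-sensitive reduction already set up in the text.

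\emph{Step 1: reduce to cost-sensitive consistency.} I would first record the reduction identity. With the $0-1$ cost encoding $c(i)=\bI_{i\neq Y}$ for $i\in[K]$ and $c(K+1)=\bI_{M\neq Y}$, the decoding rule \eqref{eq: decoding_hr} gives, for every $\fv$,
\[
\bE\big[c(\tilde h_{\fv}(X))\big]=\bE\big[\bI_{h_\fv(X)\neq Y}\,\bI_{r_\fv(X)=0}+\bI_{M\neq Y}\,\bI_{r_\fv(X)=1}\big]=L_{\mathrm{def}}^{0-1}(h_\fv,r_\fv),
\]
and every $(h,r)$ is realized (up to the immaterial value of $h$ on $\{r=1\}$) by some $\tilde h$ — take $\tilde h=h$ where $r=0$ and $\tilde h=K+1$ where $r=1$. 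Hence it suffices to show $\widetilde\ell_\phi$ is consistent for cost-sensitive learning: any minimizer $\widetilde\fv^{*}$ of $\bE[\widetilde\ell_\phi(\mathbf c,\fv(X))]$ induces a predictor $\tilde h_{\widetilde\fv^{*}}$ that minimizes $\bE[c(\tilde h(X))]$.

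\emph{Step 2: pointwise analysis.} Both risks equal $\bE_X$ of a conditional risk, and since we optimize over all measurable $\fv$, a global minimizer must minimize the conditional risk for a.e.\ $\xv$. Fixing $\xv$, set $\bar c_i:=\bE[c(i)\mid X=\xv]$ and $\bar C:=\bE[\max_j c(j)\mid X=\xv]$; then
\[
\bE\big[\widetilde\ell_\phi(\mathbf c,\fv(\xv))\mid X=\xv\big]=\sum_{i=1}^{K+1}w_i\,\ell_\phi\big(i,\fv(\xv)\big),\qquad w_i:=\bar C-\bar c_i\ge 0,
\]
the nonnegativity holding because $\max_j c(j)\ge c(i)$ pointwise. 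If $W:=\sum_i w_i=0$ then all $\bar c_i$ coincide and every label is cost-optimal at $\xv$, so there is nothing to prove; otherwise $p_i:=w_i/W$ defines a probability vector and the display equals $W\sum_i p_i\,\ell_\phi(i,\fv(\xv))$ — exactly the conditional $\ell_\phi$-risk of a multiclass problem with label law $p$ at $\xv$. Invoking the consistency of $\ell_\phi$ in its pointwise form (classification-calibration, i.e.\ holding for every conditional label distribution), any minimizer $\fv^{*}(\xv)$ of this quantity satisfies $\arg\max_i f^{*}_i(\xv)\subseteq\arg\max_i p_i=\arg\max_i w_i=\arg\min_i\bar c_i$. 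Hence $\tilde h_{\fv^{*}}(\xv)\in\arg\min_i\bar c_i$, i.e.\ $\tilde h_{\fv^{*}}$ is cost-sensitive Bayes-optimal at $\xv$; integrating over $X$ and invoking Step 1 completes the proof.

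\emph{Main obstacle.} The delicate point is not a calculation but the use of the hypothesis: ``$\ell_\phi$ is a consistent surrogate for multiclass classification'' must be applied in its pointwise (calibration) form, which is equivalent to consistency holding for all distributions rather than the single fixed $\mu_{XY}$ appearing in the statement; I would make this equivalence explicit. The remainder is routine: a ``one could strictly improve on a positive-measure set'' argument to pass from global to pointwise minimizers (with the obvious variant phrased via minimizing sequences when the $\arginf$ is only approached, as for hinge- or logistic-type $\ell_\phi$), and the bookkeeping for the harmless degenerate cases $W=0$ and non-unique $\arg\min_i\bar c_i$ noted above.
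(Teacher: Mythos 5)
Your proposal is correct and follows essentially the same route as the paper's proof: both reduce to the pointwise conditional risk, normalize the weights $\bE[\max_j c(j)-c(i)\mid X=\xv]$ into a probability vector, invoke the multiclass consistency of $\ell_\phi$ in its pointwise (calibration) form obtained via point-mass distributions, and conclude that the decoded $(h_{\fv},r_{\fv})$ coincides with the deferral-Bayes rule. Your extra care with the degenerate case $W=0$, the passage from global to pointwise minimizers, and the $\arginf$ subtlety only makes explicit what the paper leaves implicit.
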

Proof of the proposition can be found in Appendix \ref{app: consistency}. 
To illustrate the family of surrogates implied by Proposition \ref{prop: consistent_costsensitive}, we first start by recalling a family of surrogates for multi-class classification. 
 Theorem 4 of \cite{zhang2004statistical} shows that there is a family of  consistent surrogates for $0-1$ loss in multi-class classification parameterized by three functions  $(u, s, t)$ 
 and takes the form $\ell_{\phi}(y, \fv(\xv))=u(f_{y}(\xv))+s\Big(\sum_{j=1}^{K+1}t\big(f_j(x)\big)\Big)$. This family is consistent under certain conditions of the aforementioned functions. 

Now we show with a few of examples that this family encompasses some popular surrogates used in cost sensitive learning:
\begin{examples} \noindent
	(1) \ If we set $u(x)=-2x$, $s(x)= x$, and $t(x)= x^2$, then we can obtain a weighted quadratic loss:
	\begin{equation}
	    \tilde{L}_{2}= \Ebb[\pi\sum_{i=1}^{K+1}|f_i - q(i)|^2],
	\end{equation}
where $q(i)$ is the normalized expected value of $\max_{j\in [K+1]} c(j)-c(i)$ given $X=x$, and $\pi$ represents the normalization term.

% its equivalent! why complicate the notation! 
	(2) \ If we set $u(x)=-x$, and $s(x)=\log (x)$ and $t(x)=e^x$, then we have $a\psi'(x)+t'(x)=-a+e^x=0$, and as a result $x=\log a$, which is an increasing function of $a$. As a result, the surrogate loss
	\begin{equation} 
\tilde{L}_{CE}(\fv) = \Ebb[-\sum_{i=1}^{K+1} \big(\max_{j} c(j)- c(i)\big) \log \frac{e^{f_i(X)}}{\sum_{k=1}^{K+1} e^{f_k(X)}} ]\label{eqn:deferral_loss}
 	\end{equation}
	which is the loss defined in \cite{mozannar2020consistent} and used for learning to defer.
\end{examples}

\subsection{Theoretical Properties of Surrogate }

\paragraph{Goodness of a Surrogate.} Given a surrogate, how can we quantify how well it approximates our original loss? One avenue is through the surrogate excess-risk bound as follows.
Let $\tilde{L}$ be a surrogate for the loss function $L$, and let $\tilde{h}^*$ be the minimizer of the surrogate and $h^*$ the minimizer of $L$. 
 We call the \emph{excess surrogate risk} \cite{bartlett2006convexity} the following quantity if we can find a  \emph{calibration function} $\psi$ such that for any $h$ we have:
\begin{equation}
    	\psi(L(h) - L(h^*)) \leq \tilde{L}(h) - \tilde{L}(\tilde{h}^*)
\end{equation}
The excess surrogate risk bound tells us if we are $\ep$-close to the minimizer of the surrogate, then we are $\psi^{-1}(\ep)$-close to the minimizer of the original loss.

We now show that the family of surrogates defined in \eqref{eq: surrogate_costsensitive} has a polynomial excess-risk bound and furthermore prove an excess-risk bound for the surrogate loss function $\tilde{L}_{CE}$ defined in \cite{mozannar2020consistent}.

\begin{theorem} \label{thm: calib_CE}
 Suppose that $\psi(x)=C x^{\ep}$, for $\ep\in [1, \infty)$ is a calibration function for the multiclass surrogate $\ell_{\phi}$ and if $|c(i)|\leq M$ for all $i$, then $\psi'(x)=\frac{C}{M^{\ep-1}}x^{\ep}$ is a calibration function of $\tilde{\ell}_{\phi}(\cv, \cdot)$.
 
 As an example, for the surrogate $\tilde{L}_{CE}$ \eqref{eqn:deferral_loss} the calibration function is $\psi(x)=\frac{1}{16MK}x^2$.
\end{theorem}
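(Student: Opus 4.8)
The plan is to establish the calibration inequality first conditionally on $X=x$ and then lift it by integrating over $X$. Fix $x$ and write $\bar c(i)=\mathbb{E}[c(i)\mid X=x]$ for the conditional expected costs and $w(i)=\mathbb{E}\bigl[\max_{j}c(j)-c(i)\mid X=x\bigr]\ge 0$ for the conditional weights that appear in $\tilde{\ell}_{\phi}$. Since $\mathbf{f}(x)$ is constant given $x$, the conditional surrogate risk is $\mathbb{E}[\tilde{\ell}_{\phi}(\mathbf{c},\mathbf{f}(x))\mid X=x]=\sum_{i=1}^{K+1}w(i)\,\ell_{\phi}(i,\mathbf{f}(x))$, and the infimum of this quantity over all score functions decouples over $x$. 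Let $W=\sum_i w(i)$; if $W=0$ then $\bar c$ is constant and the conditional bound is trivial, so assume $W>0$ and set $\eta(i)=w(i)/W$, a probability vector on $[K+1]$. Then the conditional excess $\tilde{\ell}_{\phi}$-risk at $x$ equals exactly $W$ times the conditional excess $\ell_{\phi}$-risk of the ordinary $(K{+}1)$-class classification problem with conditional label distribution $\eta$.

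Next I would check that the conditional excess \emph{target} (cost-sensitive) risk at $x$ scales by the same factor. Since $\bar c(i)=\mathbb{E}[\max_j c(j)\mid x]-w(i)$, minimizing $\bar c$ over labels is the same as maximizing $w$, so the conditional excess cost of the decoded predictor $\tilde{h}_{\mathbf{f}}(x)=\arg\max_i f_i(x)$ is $\max_i w(i)-w(\tilde{h}_{\mathbf{f}}(x))=W\bigl(\max_i\eta(i)-\eta(\tilde{h}_{\mathbf{f}}(x))\bigr)$, i.e.\ $W$ times the multiclass excess $0$--$1$ risk under $\eta$ (using that $\tilde{h}_{\mathbf{f}}$ is precisely the $\arg\max$ predictor the multiclass calibration function refers to). Applying the hypothesis that $\psi(t)=Ct^{\ep}$ calibrates $\ell_{\phi}$, evaluated at the distribution $\eta$, gives $C\bigl(\Delta(x)/W\bigr)^{\ep}\le \Delta\tilde{\ell}_{\phi}(x)/W$, where $\Delta(x)$ and $\Delta\tilde{\ell}_{\phi}(x)$ denote the conditional excess target and surrogate risks; rearranging yields $\tfrac{C}{W^{\ep-1}}\,\Delta(x)^{\ep}\le \Delta\tilde{\ell}_{\phi}(x)$.

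To remove the $W$-dependence I would use $|c(i)|\le M$ (costs are nonnegative), so $0\le\max_j c(j)-c(i)\le M$ and hence $W\le (K{+}1)M$; since $\ep\ge 1$ the map $W\mapsto W^{\ep-1}$ is nondecreasing, so the pointwise bound weakens uniformly to $\tfrac{C}{((K+1)M)^{\ep-1}}\,\Delta(x)^{\ep}\le \Delta\tilde{\ell}_{\phi}(x)$. This is the step that produces the $M^{\ep-1}$ denominator of $\psi'$; the class-count factor also enters here, and in the statement it is absorbed into the constant (equivalently, $M$ is read as a bound on $\sum_i(\max_j c(j)-c(i))$) --- it is exactly the extra $K$ appearing in the $\tilde L_{CE}$ example. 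Finally, $t\mapsto \tfrac{C}{((K+1)M)^{\ep-1}}t^{\ep}$ is convex on $[0,\infty)$ because $\ep\ge 1$, so taking $\mathbb{E}_X$ and applying Jensen gives $\psi'\bigl(\mathbb{E}_X\Delta(X)\bigr)\le\mathbb{E}_X[\psi'(\Delta(X))]\le\mathbb{E}_X[\Delta\tilde{\ell}_{\phi}(X)]$; by the reduction \eqref{eq: decoding_hr} the left-hand expectation equals $L_{\mathrm{def}}^{0-1}(h_{\mathbf{f}},r_{\mathbf{f}})-L_{\mathrm{def}}^{0-1}(h^*,r^*)$ and the right-hand one is the global excess $\tilde{\ell}_{\phi}$-risk, so $\psi'$ is a calibration function for $\tilde{\ell}_{\phi}$. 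For the $\tilde L_{CE}$ example I would instantiate $\psi$ with the known calibration function of the $(K{+}1)$-class softmax cross-entropy loss (of the form $Cx^2$ with a universal constant) and combine it with $W\le (K{+}1)M$ to obtain $\psi(x)=\tfrac{1}{16MK}x^2$.

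The step I expect to be the crux is invoking the multiclass calibration hypothesis in its \emph{pointwise} (per conditional-distribution) form --- this is how such functions are obtained in \cite{zhang2004statistical}, but it should be made explicit --- together with correctly tracking that the normalization by $W$ rescales the excess target risk and the excess surrogate risk by the \emph{same} factor, so that $\psi(t)=Ct^{\ep}$ transforms cleanly into $\tfrac{C}{W^{\ep-1}}t^{\ep}$. Everything else (bounding $W$, the Jensen step, and identifying $\mathbb{E}_X\Delta(X)$ with the excess deferral risk via the reduction) is routine; one should also note that the infimum defining the optimal conditional surrogate risk is attained pointwise, so that the conditional and global ``excess'' quantities are consistent.
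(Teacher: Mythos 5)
Your proof is correct and follows essentially the same route as the paper's: condition on $X=x$, normalize the weights $\mathbb{E}[\max_j c(j)-c(i)\mid X=x]$ into a label distribution, invoke the multiclass calibration hypothesis pointwise so that the excess target and surrogate risks rescale by the same factor $W$ (the paper's $N_x$), bound that factor using the cost bound, and lift to the global statement via Jensen using convexity of $t^{\ep}$ for $\ep\ge 1$. The only differences are cosmetic: for $\tilde{L}_{CE}$ the paper re-derives the quadratic multiclass cross-entropy calibration directly via Pinsker and a KL-argmax lemma (using $N_x\le 2KM$) instead of citing it, and your observation that the stated $M^{\ep-1}$ really absorbs the class-count factor is exactly right --- the paper's own step bounding $N_x$ by ``$M$'' silently reads $M$ as a bound on the summed weights, which is why the extra $K$ reappears in the $\frac{1}{16MK}x^2$ example.
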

Proof of the theorem can be found in Appendix \ref{app: proof_calib_CE}. Note, that \cite{nowak2019general} proved that the for the cross-entropy loss the calibration function $\phi$ is of order $\Theta(\ep^2)$ which is in accordance with our results.

\paragraph{Generalization Error.} Equipped with the excess surrogate risk bound, we can now study the sample complexity properties of minimizing the surrogates proposed. For concreteness, we focus on the surrogate $\tilde{L}_{CE}$ of \cite{mozannar2020consistent} when reduced to the learning to defer setting. The following theorem proves a generalization error bound when minimizing the surrogate $\tilde{L}_{CE}$ for learning to defer.

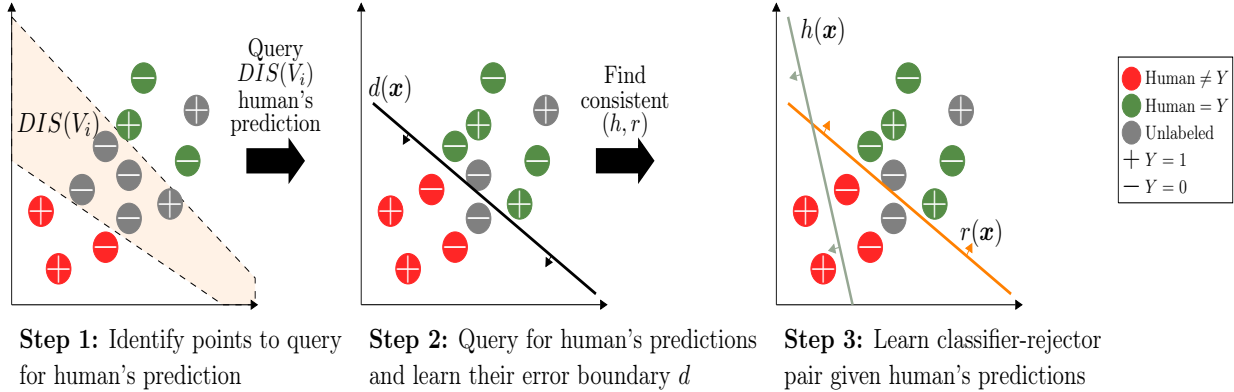
\begin{figure*}
\centering
\resizebox{1.0\textwidth}{150pt}{
\begin{tikzpicture}
\draw[fill=orange!10, dashed] (0, 4) -- (4.15, -16/4.6+4-0.15)  --(4.15, 0)-- (3.6, 0) -- (0, 2) -- (0, 4);
\draw [line width = 0.4 pt, -{Triangle[width=4pt]}] (0, 0) --(4.2, 0);
\draw[ -{Triangle[width=4pt]}] (0, 0) -- (0, 4.2);
\filldraw [gray] (3.5*.9, 3*.9) circle  (6pt) node [text=white] {\LARGE\textbf{$+$}};
\filldraw [OliveGreen!85] (2.5*.9, 3.5*.9) circle  (6pt) node [text=white] {\LARGE\textbf{$-$}};
\filldraw [OliveGreen!85] (3, 2)  circle  (6pt) node [text=white] {\LARGE\textbf{$-$}};
\filldraw [OliveGreen!85] (2, 2.5)  circle  (6pt) node [text=white]{\LARGE\textbf{$+$}};
\filldraw [gray] (2.7, 1.4) circle  (6pt) node [text=white] {\LARGE\textbf{$+$}};
\filldraw [gray] (2, 1.8) circle  (6pt) node [text=white] {\LARGE\textbf{$-$}};
\filldraw [gray] (1.6, 2.2) circle  (6pt) node [text=white] {\LARGE\textbf{$-$}};
\filldraw [gray] (2, 1.2) circle  (6pt) node [text=white] {\LARGE\textbf{$-$}};
\filldraw [gray] (1.2, 1.6) circle  (6pt) node [text=white] {\LARGE\textbf{$-$}};
\filldraw [red!85] (1.6, .8)  circle  (6pt) node [text=white] {\LARGE\textbf{$-$}};
\filldraw [red!85](0.5, 1.3)  circle  (6pt) node [text=white] {\LARGE\textbf{$+$}};
\filldraw [red!85] (0.8, 0.5) circle  (6pt) node [text=white] {\LARGE\textbf{$+$}};
\draw (0.8, 2.5) node {$DIS(V_i)$};
\draw[-{Triangle[width=18pt,length=8pt]}, line width=10pt](4,2) -- (5, 2);
\draw (4.5, 3.55)node {\small Query};
\draw (4.5, 3.2)node {\small $DIS(V_i)$ };
\draw (4.5, 2.85) node {\small human's};
\draw (4.5, 2.5) node {\small prediction};
\draw (0, -0.5) node [anchor = west] {\textbf{Step 1:} Identify points to query};
\draw (0, -1) node [anchor = west] {for human's prediction};
% \draw [ dashed] plot [smooth cycle] coordinates {(2.7*1.1, 1.4*.95)  (1.6*1.05, 2.2*1.05)  (1.2*0.95, 1.3*0.95)};

% \draw [dashed] (0, 4) -- (4, -16/4.6+4);
% \draw [dashed] (0, 2) -- (3.6, 0);
\end{tikzpicture}
\begin{tikzpicture}
\draw [ -{Triangle[width=4pt]}] (0, 0) --(4.2, 0);
\draw [ -{Triangle[width=4pt]}] (0, 0) -- (0, 4.2);
\filldraw [gray] (3.5*.9, 3*.9) circle  (6pt) node [text=white] {\LARGE\textbf{$+$}};
\filldraw [OliveGreen!85] (2.5*.9, 3.5*.9) circle  (6pt) node [text=white] {\LARGE\textbf{$-$}};
\filldraw [OliveGreen!85] (3, 2)  circle  (6pt) node [text=white] {\LARGE\textbf{$-$}};
\filldraw [OliveGreen!85] (2, 2.5)  circle  (6pt) node [text=white]{\LARGE\textbf{$+$}};
\filldraw [OliveGreen!85] (2.7, 1.4) circle  (6pt) node [text=white] {\LARGE\textbf{$+$}};
\filldraw [gray] (2, 1.8) circle  (6pt) node [text=white] {\LARGE\textbf{$-$}};
\filldraw [OliveGreen!85] (1.6, 2.2) circle  (6pt) node [text=white] {\LARGE\textbf{$-$}};
\filldraw [gray] (2, 1.2) circle  (6pt) node [text=white] {\LARGE\textbf{$-$}};
\filldraw [red!85] (1.2, 1.6) circle  (6pt) node [text=white] {\LARGE\textbf{$-$}};
\filldraw [red!85] (1.6, .8)  circle  (6pt) node [text=white] {\LARGE\textbf{$-$}};
\filldraw [red!85](0.5, 1.3)  circle  (6pt) node [text=white] {\LARGE\textbf{$+$}};
\filldraw [red!85] (0.8, 0.5) circle  (6pt) node [text=white] {\LARGE\textbf{$+$}};
% \draw (0.8, 2.5) node {$DIS(V_i)$};
% \draw [ dashed] plot [smooth cycle] coordinates {(2.7*1.1, 1.4*.95)  (1.6*1.05, 2.2*1.05)  (1.2*0.95, 1.3*0.95)};
\draw (0.5, 3) node {$d(\boldsymbol{x})$};
\draw [very thick] (0.2,2.8) -- (4., 0.15);
\draw [  -{Triangle[width=4pt]}] (3.24, 0.68) -- (3.14,0.52);
\draw [-{Triangle[width=4pt]}] (0.86-0.05, 2.38) -- (0.76-0.05,2.22);
\draw[-{Triangle[width=18pt,length=8pt]}, line width=10pt](4,2) -- (5, 2);
\draw (4.5, 3.2)node {\small Find};
\draw (4.5, 2.85)node {\small consistent };
\draw (4.5, 2.5) node {\small $(h, r)$};
% \draw [dashed] (0, 2) -- (3.6, 0);
\draw (0, -0.5) node [anchor = west] {\textbf{Step 2:} Query for human's predictions};
\draw (0, -1) node [anchor = west] {and learn their error boundary $d$  };\end{tikzpicture}
\begin{tikzpicture}
\draw [ -{Triangle[width=4pt]}] (0, 0) --(4.2, 0);
\draw [ -{Triangle[width=4pt]}] (0, 0) -- (0, 4.2);
\filldraw [gray] (3.5*.9, 3*.9) circle  (6pt) node [text=white] {\LARGE\textbf{$+$}};
\filldraw [OliveGreen!85] (2.5*.9, 3.5*.9) circle  (6pt) node [text=white] {\LARGE\textbf{$-$}};
\filldraw [OliveGreen!85] (3, 2)  circle  (6pt) node [text=white] {\LARGE\textbf{$-$}};
\filldraw [OliveGreen!85] (2, 2.5)  circle  (6pt) node [text=white]{\LARGE\textbf{$+$}};
\filldraw [OliveGreen!85] (2.7, 1.4) circle  (6pt) node [text=white] {\LARGE\textbf{$+$}};
\filldraw [gray] (2, 1.8) circle  (6pt) node [text=white] {\LARGE\textbf{$-$}};
\filldraw [OliveGreen!85] (1.6, 2.2) circle  (6pt) node [text=white] {\LARGE\textbf{$-$}};
\filldraw [gray] (2, 1.2) circle  (6pt) node [text=white] {\LARGE\textbf{$-$}};
\filldraw [red!85] (1.2, 1.6) circle  (6pt) node [text=white] {\LARGE\textbf{$-$}};
\filldraw [red!85] (1.6, .8)  circle  (6pt) node [text=white] {\LARGE\textbf{$-$}};
\filldraw [red!85](0.5, 1.3)  circle  (6pt) node [text=white] {\LARGE\textbf{$+$}};
\filldraw [red!85] (0.8, 0.5) circle  (6pt) node [text=white] {\LARGE\textbf{$+$}};
% \draw (0.8, 2.5) node {$DIS(V_i)$};
% \draw [ dashed] plot [smooth cycle] coordinates {(2.7*1.1, 1.4*.95)  (1.6*1.05, 2.2*1.05)  (1.2*0.95, 1.3*0.95)};
\draw (3.5, 1) node {$r(\boldsymbol{x})$};
\draw (0.8, 3.8) node {$h(\boldsymbol{x})$};
\draw [draw=orange,  -{Triangle[width=4pt]}] (3.24, 0.68) -- (3.34,0.86);
\draw [draw=orange, -{Triangle[width=4pt]}] (0.86-0.05, 2.38) -- (0.96-0.05,2.56);
\draw [very thick, draw=orange] (0.2,2.8) -- (4., 0.15);
\draw [very thick, draw= OliveGreen!50!black!50] (0.2, 4) -- (1.3, 0.);
\draw [draw=OliveGreen!50!black!50,  -{Triangle[width=4pt]}] (1.08, 0.8) -- (0.88,0.75);
\draw [draw=OliveGreen!50!black!50,  -{Triangle[width=4pt]}] (0.42, 3.2) -- (0.22,3.15);
\draw (0, -0.5) node [anchor = west] {\textbf{Step 3:} Learn classifier-rejector};
\draw (0, -1) node [anchor = west] {pair given human's predictions };
% \draw [dashed] (0, 2) -- (3.6, 0);
\end{tikzpicture}

\begin{tikzpicture}[scale=0.75, every node/.style={scale=0.7}]

\draw  (-0.3, 1.2) -- (2.3, 1.2) -- (2.3, 3.9) -- (-.3, 3.9) -- (-.3, 1.2);
\filldraw [red!85] (0, 3.5) circle  (6pt) node [text=black, anchor=west] { \hspace{0.2cm}$\textrm{Human} \neq Y$};
\filldraw [OliveGreen!85] (0, 3.) circle  (6pt) node [text=black, anchor=west] {\hspace{0.2cm}$\textrm{Human} = Y$};
\filldraw [gray] (0, 2.5) circle  (6pt) node [text=black, anchor=west] {\hspace{0.2cm}Unlabeled};
\draw (0, 1.5+0.5) node {\LARGE $+$} node [anchor=west] {\hspace{0.2cm}$Y=1$};
\draw (0, 1.+0.5) node {\LARGE $-$} node [anchor=west] {\hspace{0.2cm}$Y=0$};
\draw (0, -2.3) node {};
\end{tikzpicture}
}
\caption{Illustration of our active learning algorithm Disagreement on Disagreements (DoD) \eqref{alg:disagreement}. At each round, we compute the disagreement set for our predictors of the human label disagreement, we then query the human for their prediction on these points. After we learn the expert error boundary, we then learn a consistent classifier-rejector pair.  }
\end{figure*}

\begin{theorem} \label{thm: gen_cross}
Let $K$ denote the number of classes, and let $\Fcal$ be a hypothesis class of functions $f_i: \Xcal\to \Rbb$ with bounded infinity norm $\|f_i\|_{\infty}<C$. 
Given $\hat{\fv} \in \Fcal^{k+1}$ the empirical minimizer of the surrogate loss $\tilde{L}_{CE}$, then
we have  with probability at least $1-\delta$, we have
\ea{ \nonumber
\psi\big(L_{\mathrm{def}}^{0 - 1}(h_{\hat{\fv}}, r_{\hat{\fv}})-\min_{\fv} L_{\mathrm{def}}^{0 - 1}(h_{\fv},r_{\fv}) \big)&\leq 2(K+1) \Rad_{n}\big(\Fcal\big)+ \sqrt{\frac{(8C-4\log (K+1))\log 2/\delta}{n}}\nonumber\\&\quad+2(K+2)\min\{\Pr(M\neq Y), \Rad_{n\Pr(M\neq Y)/2}\big(\Fcal\big)\}\nonumber\\&\quad \nonumber\\&\quad+C\Pr(M\neq Y)(k+2)e^{-n\Pr^2(M\neq Y)/2} + e_{\phi-\mathrm{appr}},
}
 where $e_{\phi-\mathrm{appr}}$ is the approximation error for the $\phi$-surrogate, which is defined as
\ea{
e_{\phi-\mathrm{appr}} = \min_{\fv\in \Fcal^{k+1}} \tilde{L}_{CE}(\fv) - \min_{\fv} \tilde{L}_{CE}(\fv).
}

% \ea{
% \frac{1}{16 M K} \big(R(\gv)-\min_{\hv} R(\hv)\big)^2\leq &\hat{R}_{\phi}(\gv)-\min_{\hv\in \Hcal, \Rcal} \hat{R}_{\phi}(\hv) + M e^{2C} \sqrt{K} \big(\mathfrak{R}_m(\Hcal)+\mathfrak{R}_{m} (\Rcal)\big) \nonumber\\&+ \sqrt{\frac{M(4C-2\log 2K)\log 1/\delta}{m}} +\min_{\hv \in \Hcal, \Rcal} R_{\phi}(\hv)- \min_{\hv} R_{\phi}(\hv) 
% }
\end{theorem}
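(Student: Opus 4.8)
The plan is to chain together the calibration bound of Theorem~\ref{thm: calib_CE}, a standard estimation/approximation decomposition of the surrogate risk, and a Rademacher-complexity control of the estimation term. First, since $\tilde{L}_{CE}$ is a consistent surrogate for learning to defer (Proposition~\ref{prop: consistent_costsensitive}) admitting the quadratic calibration function $\psi$ of Theorem~\ref{thm: calib_CE}, we immediately get
\[
\psi\big(L_{\mathrm{def}}^{0-1}(h_{\hat{\fv}},r_{\hat{\fv}})-\min_{\fv}L_{\mathrm{def}}^{0-1}(h_{\fv},r_{\fv})\big)\;\le\;\tilde{L}_{CE}(\hat{\fv})-\min_{\fv}\tilde{L}_{CE}(\fv),
\]
so it suffices to bound the surrogate excess risk on the right. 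Letting $\fv^{\star}\in\arg\min_{\fv\in\Fcal^{k+1}}\tilde{L}_{CE}(\fv)$, write this as $\big(\tilde{L}_{CE}(\hat{\fv})-\tilde{L}_{CE}(\fv^{\star})\big)+e_{\phi-\mathrm{appr}}$; the second term is exactly the stated approximation error, and the first is the estimation error of empirical risk minimization over $\Fcal^{k+1}$, which by optimality of $\hat{\fv}$ for the empirical risk $\hat{L}_{CE}$ is at most $2\sup_{\fv\in\Fcal^{k+1}}|\tilde{L}_{CE}(\fv)-\hat{L}_{CE}(\fv)|$.

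Next I would bound this uniform deviation. By symmetrization and McDiarmid's inequality, $\sup_{\fv}|\tilde{L}_{CE}(\fv)-\hat{L}_{CE}(\fv)|\le 2\,\Rad_{n}(\tilde{\ell}_{CE}\circ\Fcal^{k+1})+B\sqrt{\log(2/\delta)/(2n)}$, where $B$ controls the range/second moment of the per-example loss $\tilde{\ell}_{CE}(\mathbf{c}(x,y,m),\fv(x))$; using $\|f_i\|_{\infty}<C$, each log-softmax coordinate lies in an interval of width $O(C+\log(K+1))$ and the cost weights $\max_j c(j)-c(i)\in[0,1]$ sum to at most $2$ in the $0$--$1$ case, which yields the $\sqrt{(8C-4\log(K+1))\log(2/\delta)/n}$ contribution. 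For the Rademacher term, decompose $\tilde{\ell}_{CE}(\mathbf{c},\fv(x))=-\sum_{i=1}^{K+1}(\max_j c(j)-c(i))\,\sigma_i(\fv(x))$ with $\sigma_i$ the $i$-th log-softmax coordinate, use sub-additivity of Rademacher complexity over the $K+1$ summands, and apply a vector-contraction argument (each summand is Lipschitz in $\fv(x)\in\mathbb{R}^{K+1}$ through $\sigma_i$, and the $[0,1]$-valued weight only contracts further), so the ``classification'' part contributes $2(K+1)\Rad_{n}(\Fcal)$.

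The remaining two terms come from the deferral coordinate $i=K+1$, whose weight is $\bI_{M=Y}$; equivalently there is a residual piece of the surrogate supported on the event $\{M\neq Y\}$ that governs how the rejector behaves on instances the expert mishandles. This piece admits two bounds: a crude one of order $\Pr(M\neq Y)$ (bounding the indicator by $1$ in expectation), and a refined one obtained by conditioning on the random index set $I=\{i:m_i\neq y_i\}$, where its empirical deviation becomes a Rademacher complexity over a subsample of size $|I|$. An additive Hoeffding/Chernoff bound gives $|I|\ge n\Pr(M\neq Y)/2$ except on an event of probability $e^{-n\Pr^{2}(M\neq Y)/2}$, on which the residual is at most $O\big(C(K+2)\Pr(M\neq Y)\big)$; on the complementary event it is at most $2(K+2)\Rad_{n\Pr(M\neq Y)/2}(\Fcal)$. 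Taking the minimum of the crude and refined bounds produces the $2(K+2)\min\{\Pr(M\neq Y),\Rad_{n\Pr(M\neq Y)/2}(\Fcal)\}$ term together with the exponential correction. Collecting all pieces and substituting into the calibration inequality gives the claim; the $\Pr(M\neq Y)$-handling here mirrors the proof of Proposition~\ref{prop: gen_labeled_unlabeled}.

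The main obstacle is precisely the Rademacher analysis of the composed loss in the last two steps. The surrogate depends on the whole vector $\fv(x)\in\mathbb{R}^{K+1}$ (through the log-sum-exp normalizer) rather than on a single scalar predictor, so the usual scalar Talagrand contraction lemma does not apply directly and one needs a coordinate-wise peeling or vector-contraction inequality, keeping track of the $\ell_\infty$-Lipschitz constant of the softmax. Moreover, isolating the $\Pr(M\neq Y)$-dependent residual cleanly---so that both the ``minimum of two bounds'' structure and the exponential term emerge with the stated constants---requires carefully combining the conditioning-on-$I$ argument with a union bound over the (random) size of $I$. The other steps (calibration, the ERM decomposition, and McDiarmid) are routine.
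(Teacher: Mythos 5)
Your overall architecture coincides with the paper's: calibration via Theorem \ref{thm: calib_CE}, the split into estimation error plus $e_{\phi-\mathrm{appr}}$, a Rademacher-complexity bound on the composed cross-entropy loss class, and the treatment of the deferral coordinate by conditioning on the random subsample $\{i: m_i\neq y_i\}$ with a binomial Hoeffding bound, which is exactly how the paper's Lemma \ref{lem: Rad_loss} produces the $\min\{\Pr(M\neq Y),\Rad_{n\Pr(M\neq Y)/2}(\Fcal)\}$ term and the exponential correction. Two points differ. First, you control the estimation error by $2\sup_{\fv}|\tilde{L}_{CE}(\fv)-\hat{L}_{CE}(\fv)|$; after symmetrization this costs $4\Rad_n$ of the loss class and hence would give $4(K+1)\Rad_n(\Fcal)$ and $4(K+2)\min\{\cdot\}$, i.e.\ the stated constants doubled. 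The paper instead applies the one-sided Rademacher generalization inequality only to $\hat{\fv}$ and bounds $\hat{L}_{CE}(\tilde{\hv})-\tilde{L}_{CE}(\tilde{\hv})$ for the \emph{fixed} in-class minimizer $\tilde{\hv}=\arg\min_{\fv\in\Fcal^{k+1}}\tilde{L}_{CE}(\fv)$ by plain Hoeffding (using ERM optimality so that $e_{\min}=0$); you need this refinement to recover the theorem with its constants. Second, the vector-contraction difficulty you flag as the main obstacle is resolved in the paper without any vector-contraction inequality: Lemma \ref{lem: rad_log_exp} shows the Rademacher complexity of $\{\log\sum_i e^{f_i}\}$ is at most the sum of the coordinate complexities by rewriting (for two coordinates) $\log(e^{f_1}+e^{f_2})=\tfrac{f_1+f_2}{2}+\Phi(f_1-f_2)$ with $\Phi(x)=\log(e^{x/2}+e^{-x/2})$ a $\tfrac12$-Lipschitz scalar map to which Ledoux--Talagrand applies, and Lemma \ref{lem: softmax} handles the label-dependent coordinate $f_{y_i}(x_i)$ by peeling over labels with signs $\ep_i=2\bI_{y_i=y}-1$, which leave the Rademacher distribution invariant; this is what yields the explicit $(K+1)$ and $(K+2)$ factors. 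A Maurer-type vector contraction would also work but with different (and less sharp) constants, so if your goal is the exact statement you should follow the paper's elementary peeling route.
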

Proof of the theorem can be found in Appendix \ref{app: gen_CE}.
Comparing the sample complexity estimate for minimizing the surrogate to that of minimizing the 0-1 loss as computed by \cite{mozannar2020consistent}, we find that we pay an additional penalty for the complexity of the hypothesis class in addition to the higher sample complexity that scales with $O(\frac{\log \ep}{\ep^{4}})$ due to the calibration function. To compensate for such increase in sample complexity, in the next section we seek to design active learning schemes that reduce the required amount of human labels for learning.  

%%%%%%%%%%%%%%%%%%%%%%%%%%%%%%%%%%%%%%%%%%%%%%%%%%%%%%%%%%%%%%%%%%%%%%%%%%%%%%%%%%%%%%%%%%%%%%%%%%%%%%%%%%%%%%
%%%%%%%%%%%%%%%%%%%%%%%%%%%%%%%%%%%%%%%%%%%%%%%%%%%%%%%%%%%%%%%%%%%%%%%%%%%%%%%%%%%%%%%%%%%%%%%%%%%%%%%%%%%%%%
\section{Active Learning for Expert Predictions}\label{sec: active}
%%%%%%%%%%%%%%%%%%%%%%%%%%%%%%%%%%%%%%%%%%%%%%%%%%%%%%%%%%%%%%%%%%%%%%%%%%%%%%%%%%%%%%%%%%%%%%%%%%%%%%%%%%%%%%
%%%%%%%%%%%%%%%%%%%%%%%%%%%%%%%%%%%%%%%%%%%%%%%%%%%%%%%%%%%%%%%%%%%%%%%%%%%%%%%%%%%%%%%%%%%%%%%%%%%%%%%%%%%%%%
%%%%%%%%%%%%%%%%%%%%%%%%%%%%%%%%%%%%%%%%%%%%%%%%%%%%%%%%%%%%%%%%%%%%%%%%%%%%%%%%%%%%%%%%%%%%%%%%%%%%%%%%%%%%%%

\subsection{Theoretical Understanding}\label{subsec:theory_active_learning}
%%%%%%%%%%%%%%%%%%%%%%%%%%%%%%%%%%%%%%%%%%%%%%%%%%%%%%%%%%%%%%%%%%%%%%%%%%%%%%%%%%%%%%%%%%%%%%%%%%%%%%%%%%%%%%
In Section \ref{sec: staged_learning}, we assumed that we have a randomly selected subset of data that is labeled by the human expert. In a practical setting, we may assume that we have the ability to choose which points we would like the human expert to predict on. For example, when we deploy an X-ray diagnostic algorithm to a new hospital, we can interact with each radiologist for a few rounds to build a tailored classifier-rejector pairs according to their individual abilities.

Therefore, we assume that we have access to the distribution of instances $\xv$ and their labels and we could query for the expert's prediction on each instance. The goal is to query the human expert on as few instances as possible while being able to learn an approximately optimal classifier-rejector pair.
To make progress in theoretical understanding, we assume that we can achieve zero loss with an optimal classifier-rejector pair:
\begin{Assum}[Realizability]\label{assump: real}
We assume that the data is realizable by our joint class $(\mathcal{H},\mathcal{R})$: there exists $h^*,r^* \in \mathcal{H} \times \mathcal{R}$ that have zero error $L(h^*,r^*)=0$.
\end{Assum}
In this section, the algorithms we develop apply to the multiclass setting but we restrict the theoretical analysis to binary labels. 
The fundamental algorithm in active learning in the realizable case for classification is the CAL algorithm
\cite{hanneke2014theory}. The algorithm keeps track of a version class of the hypothesis space that is consistent with the data so far and then at each step computes the disagreement set: the points on which there exists two classifiers in the hypothesis class with different predictions, and then picks at random a set of points from this disagreement set. 
We start by initializing our version space by taking advantage of Assumption \ref{assump: real}:
\begin{align}
V_0 = \{ h,r \in \Hcal \times \Rcal:& \forall \xv,   r(\xv)=0 \to  \ h(\xv)=y \}
\end{align}
The above initialization of the version space assumes we know the label of all instances in our support. Alternatively, one could collect at most $O\big(\frac{1}{\ep}(d(\Hcal)\log \frac{1}{\ep}+\log\frac{1}{\delta})\big)$ labels of instances and that would be sufficient to test for realizability of our classifier with error $\ep$ (see Lemma 3.2 of \cite{hanneke2014theory}).

The main difference with  active learning for classification is that we are not able to compute the disagreement set for the overall prediction of the deferral system as it requires knowing the expert predictions. However, we know that a necessary condition for disagreement is that there exists a feasible pair of classifiers-rejectors where the rejectors disagree. 
Suppose $(h_1,r_1)$ and $(h_2,r_2)$ are in our current version space. These two pairs can only disagree when on an instance $\xv$: $r_1(\xv) \neq r_2(\xv)$, since otherwise when both defer, the expert makes the same prediction, and when both do not defer, both classify the label correctly by the realizability assumption.
Thus, we define the disagreement set in terms of only the rejectors that are in the version space at each round $j$:
\begin{align}
    DIS(V_{j-1}) = \{ x \in \Xcal\mid\, \exists (h_1,r_1),(h_2,r_2) \in V_{j-1}  s.t. \ r_1(x) \neq r_2(x) \}
\end{align}
Then we ask for the labels of $k$ instances in $DIS(V_{j-1})$ to form $\Scal_j=\{(\xv_i, y_i, m_i)\,:\,\xv_i\in DIS(V_j)\}$ and we update the version space as
\ea{
V_j = \{(h, r)\in V_{j-1}\mid \, \forall (\xv, y, m)\in \Scal_j,  \text{if} \ r(\xv)=1\rightarrow y=m\}
}
% fix above to fit space
Now, we prove that the above rejector-disagreement algorithm will converge if the optimal unique classifier-rejector pair is unique: 
\begin{proposition}\label{prop: active_1}
Assume that there exists a unique pair $(h^*,r^*) \in \mathcal{H} \times \mathcal{R}$ that have zero error $L(h^*,r^*)=0$. Let $\Theta$ be defined as:
\begin{equation}
\Theta=\sup_{t>0}\frac{\Pr\left(X\in DIS\left(B\left((h^*, r^*), t\right)\right)\right)}{t}
\end{equation}
where $B\big((h, r), t\big)= \{(h', r')\in \Hcal\times \Rcal: \Pr(r(X)M+(1-r(X))h(X)\neq r'(x)M+(1-r'(X))h'(X))\leq t\}$.

Then, running the rejector-disagreement algorithm with $k =O\big( \Theta^2((d(\Hcal) + d(\Rcal)\log \Theta+ \log \tfrac{1}{\delta}+\log \log\tfrac{1}{\ep})\big) $ for $\log(1/\ep)$ iterations will return classifier-rejector with $\epsilon$ error and with probability at least $1- \delta$.
\end{proposition}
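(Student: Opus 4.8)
The plan is to carry out the analysis of the CAL algorithm of \cite{hanneke2014theory} in the realizable case, tracking the \emph{radius} of the version space in the pseudo-metric $\rho\big((h,r),(h',r')\big)=\Pr\big(\mathrm{sys}_{h,r}(X,M)\neq\mathrm{sys}_{h',r'}(X,M)\big)$, where $\mathrm{sys}_{h,r}(x,m)=(1-r(x))h(x)+r(x)m$ is the prediction of the deferral system; set $\rho_j=\sup_{(h,r)\in V_j}L_{\mathrm{def}}^{0-1}(h,r)$. I would first record two invariants. By Assumption~\ref{assump: real} applied to the unique zero-error pair, $(h^*,r^*)$ satisfies $r^*(x)=0\Rightarrow h^*(x)=y$ and $r^*(x)=1\Rightarrow m=y$ almost surely, so $(h^*,r^*)\in V_0$ and no update ever removes it; hence $(h^*,r^*)\in V_j$ for every $j$, and $V_j\subseteq V_{j-1}$. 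Because $(h^*,r^*)$ has zero error, $\mathrm{sys}_{h^*,r^*}(X,M)=Y$ almost surely, so $L_{\mathrm{def}}^{0-1}(h,r)=\rho\big((h,r),(h^*,r^*)\big)$ for every pair, and therefore $V_j\subseteq B\big((h^*,r^*),\rho_j\big)$.

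Next I would bound the mass of the query region. Every pair in $V_0$ classifies correctly wherever it does not defer, so two members of $V_0$ can disagree in their system prediction at a point $x$ only when their rejectors disagree there --- this is the observation made just before the statement --- and hence $DIS(V_{j-1})$ contains every point at which some member of $V_{j-1}$ disagrees with $(h^*,r^*)$. By monotonicity of $DIS(\cdot)$ and the inclusion $V_{j-1}\subseteq B\big((h^*,r^*),\rho_{j-1}\big)$ we get $DIS(V_{j-1})\subseteq DIS\big(B((h^*,r^*),\rho_{j-1})\big)$, so by the definition of $\Theta$,
\begin{equation*}
\Pr\big(X\in DIS(V_{j-1})\big)\;\le\;\Theta\,\rho_{j-1}.
\end{equation*}
Consequently the error of any $(h,r)\in V_{j-1}$ lives inside a set of probability at most $\Theta\rho_{j-1}$, so a pair of true error $\rho$ has error at least $\rho/(\Theta\rho_{j-1})$ under $\mu_{XYM}$ conditioned on $\{X\in DIS(V_{j-1})\}$.

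The third ingredient is the elimination step. At round $j$ the algorithm draws $k$ i.i.d.\ triples from $\mu_{XYM}$ conditioned on $\{X\in DIS(V_{j-1})\}$ --- costing $k$ human queries, since samples outside $DIS(V_{j-1})$ can be discarded using the freely available features and labels --- and retains the members of $V_{j-1}$ that make no system error on this sample. Since $(h^*,r^*)$ has zero conditional error, this is a realizable learning problem for the joint class $\Hcal\times\Rcal$, whose effective complexity is $d(\Hcal)+d(\Rcal)$ up to logarithmic factors, with an extra $\log\Theta$ attached to the rejector term because the query region is itself described through the rejectors. A realizable-case uniform-convergence bound then yields that, with probability at least $1-\delta/\log(1/\ep)$, every surviving pair has conditional error $O\big((d(\Hcal)+d(\Rcal)\log\Theta+\log(1/\delta)+\log\log(1/\ep))/k\big)$, hence true error $\rho_j\le\Theta\rho_{j-1}\cdot O\big((d(\Hcal)+d(\Rcal)\log\Theta+\log(1/\delta)+\log\log(1/\ep))/k\big)$. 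With $k$ of the order stated this forces $\rho_j\le\rho_{j-1}/2$; starting from $\rho_0\le1$, after $\lceil\log_2(1/\ep)\rceil$ rounds we reach $\rho\le\ep$, and a union bound over the rounds keeps the total failure probability below $\delta$. Since every pair remaining in the final version space then has error at most $\ep$, returning any one of them proves the claim.

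The main obstacle is the third step: one must check that conditioning on the data-dependent region $DIS(V_{j-1})$ preserves realizability and does not inflate the effective complexity of $\Hcal\times\Rcal$ beyond the claimed $d(\Hcal)+d(\Rcal)\log\Theta$, and that the geometric contraction of $\rho_j$ can be pushed through all $\log(1/\ep)$ rounds while paying only the stated dependence on $\Theta$. A secondary point, already invoked in the first step, is that uniqueness of $(h^*,r^*)$ is exactly what forces $B\big((h^*,r^*),t\big)$ to collapse to a single deferral system as $t\to0$, which makes $\Theta$ finite and allows the radius to be driven all the way down to $\ep$; without uniqueness the disagreement between two distinct optimal systems would persist in $DIS(V_j)$ at every round.
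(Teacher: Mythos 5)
Your proposal is correct and is essentially the same argument as the paper's: both are the CAL-style contraction in which $(h^*,r^*)$ is never eliminated, the error of any version-space member equals its disagreement with the optimal system and is therefore supported on the rejector-disagreement region, the definition of $\Theta$ converts an error-radius bound into a bound on $\Pr\big(X\in DIS(V_j)\big)$, a per-round generalization bound on the distribution conditioned on $DIS(V_j)$ produces a halving, and a union bound over the $O(\log(1/\epsilon))$ rounds finishes. The one substantive difference is the per-round ingredient, which is exactly the step you flag as your ``main obstacle'': the paper closes it by applying Theorem 2 of \cite{mozannar2020consistent} to the conditional distribution, a $\sqrt{(d(\Hcal)+d(\Rcal))/k}$-type bound, and this square-root rate is precisely why $k$ scales with $\Theta^2$ in the statement; you instead assert a realizable-case $\tilde{O}\big((d(\Hcal)+d(\Rcal))/k\big)$ uniform-convergence bound for the joint deferral class, which you do not prove but which, if carried out, would even allow $k$ to scale with $\Theta$ rather than $\Theta^2$ (and is a fortiori compatible with the stated $k$). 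A smaller bookkeeping difference: you halve the version-space error radius $\rho_j$ directly, whereas the paper halves the disagreement mass $\Pr\big(X\in DIS(V_i)\big)$ and only converts it into an $\epsilon$ bound on $\defl$ at the very end via $\defl(h,r)\le \Pr\big(r(X)\neq r^*(X)\big)\le \Pr\big(X\in DIS(V_{\mathrm{final}})\big)$; the two accountings are interchangeable through $\Pr\big(X\in DIS(V_j)\big)\le\Theta\rho_j$ and the conditional-error bound, so nothing is lost either way.
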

Proof of the proposition can be found in Appendix \ref{app: active_1}.
%%%%%%%%%%%%%%%%%%%%%%%%%%%%%%%%%%%%%%%%%%%%%%%%%%%%%%%%%%%%%%%%%%%%%%%%%%%%%%%%%%%%%%%%%%%%%%%%%%%%%%%%%%%%%%
\subsection{Disagreement on Disagreements}
%%%%%%%%%%%%%%%%%%%%%%%%%%%%%%%%%%%%%%%%%%%%%%%%%%%%%%%%%%%%%%%%%%%%%%%%%%%%%%%%%%%%%%%%%%%%%%%%%%%%%%%%%%%%%%

If we remove the uniqueness assumption  for the rejector-disagreement algorithm in the previous subsection, we show in Appendix \ref{app: counter_CAL}  with an example that the algorithm no longer converges as $DIS(V)$ can remain constant. We expect that the uniqueness assumption may not hold in practice, so we now hope to design algorithms that do not require it. Instead, we now make a new assumption that we can learn the error boundary of the human expert via a function $f \in \mathcal{D}$, that is given any sample $(x,y,m)$ we have  $f(x) = \bI_{y \neq m}$. This assumption is identical to those made in active learning for cost-sensitive classification \cite{krishnamurthy2017active}. This assumption is formalized as follows:

\begin{Assum}\label{assump: dis}
We assume that there exists $f^*\in \Dcal$ such that $\Pr(\bI_{M\neq Y}\neq f(X))=0$.
\end{Assum}

Our new active learning will now seek to directly take advantage of Assumption \ref{assump: dis}. The algorithm consists of two stages: the first stage runs a standard active learning algorithm, namely CAL, on the hypothesis space $\Dcal$ to learn the expert disagreement with the label with error at most $\ep$. In the second stage, we label our data with the predictor $\hat{f}$ that is the output of the first stage, and then learn a classifier-rejector pair from this pseudo-labeled data. Key to this two stage process, is to show that the error from the first stage is not too amplified by the second stage. The algorithm is named Disagreement on Disagreements (DoD)  and is described in detail in Algorithm box \ref{alg:disagreement}.

\begin{algorithm}[h]
	\DontPrintSemicolon 
	\SetAlgoLined
	\textbf{Input}: parameter $n_u$, $T$, $k$, class $\Dcal$, $\Hcal$, and $\Rcal$ \\
	1. $V \gets \Dcal$\\
2. \For{$i\in \{1, \ldots, T\}$}{
Sample from $\mu_{X}$ until you have $k$ samples $\{\xv_i\}_{i=1}^{k}$ within $DIS_2(V)$ \\
Query for $\{(y_i, m_i)\}_{i=1}^{k}$ for the samples $\{\xv_i\}_{i=1}^{k}$\\
Update $V\leftarrow \{d\in V:\, \forall j\, d(\xv_j)=\bI_{m_j\neq y_j}\}$
	}
4. Collect $n_u$ samples $\{(\xv_i', y_i')\}_{i=1}^{n_u}$ from $\mu_{XY}$\\
	\textbf{Return:}  $(h, r) \in \Hcal\times \Rcal$ such that $\sum_{(\xv_i', y_i')}\bI_{h(\xv_i')\neq y_i'}\big(1-r(\xv_i')\big)+r(\xv_i')d(\xv_i')=0$, for some $d\in V$ \\
	\caption{Active Learning algorithm DoD (Disagreement on disagreements)}
	\label{alg:disagreement}
\end{algorithm}

In the following we prove a label complexity guarantee for  Algorithm \ref{alg:disagreement}.

\begin{theorem}\label{thm: Dod}
Let us define $\Theta_2$ as
\ea{
\Theta_2 = \sup_{t>0}\frac{\Pr\left(X\in DIS_2\left(B_2(f^*, t)\right)\right)}{r},
}
where $B_2(f, t)=\{f'\in \Dcal\mid \, \Pr(f'(X)\neq f(X))\leq t\}$, and $DIS_2(V)=\{\xv\in \Xcal\mid\, \,\exists f_1, f_2\in V,\,  f_1(\xv)\neq f_2(\xv)\}$.

Assume we have $\Hcal, \Rcal, \Dcal$ that satisfy assumption \ref{assump: real} and \ref{assump: dis}. Then for $n_u=O(\frac{\log 1/\delta+\max\{d(\Hcal), d(\Rcal)\}\log 1/\ep}{\ep^{2}})$, and $k=d(\Dcal)\Theta_2\log(\tfrac{\Theta_2}{\delta}\log (\tfrac{1}{\ep}))$, then Algorithm \ref{alg:disagreement} takes $T=O(\log \frac{1}{\ep})$ iterations to output a solution with $\ep$-upper-bound on deferral loss with probability at least $1-\delta$. As a result, the sample complexity of labeled data $n_l$ is $O\big(d(\Dcal)\Theta_2\log(\tfrac{\Theta_2}{\delta}\log (\tfrac{1}{\ep}))\log (\tfrac{1}{\ep})\big)$.
\end{theorem}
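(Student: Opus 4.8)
The plan is to analyze the two stages of DoD (Algorithm~\ref{alg:disagreement}) separately and then show that the error incurred while learning the human error boundary in Stage one is not amplified by the pseudo-labeled learning in Stage two.

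\textbf{Stage one: learning the error boundary.} The loop in lines~2--3 is the round-based CAL active learner run on $\Dcal$ for the binary problem with target $\bI_{M\neq Y}$, which under Assumption~\ref{assump: dis} equals $f^*(X)$ almost surely, so the problem is realizable and noiseless. I would maintain the invariant that after round $i$ the version space obeys $V_i\subseteq B_2(f^*,2^{-i})$. Then $\Pr\big(X\in DIS_2(V_i)\big)\le \Pr\big(X\in DIS_2(B_2(f^*,2^{-i}))\big)\le \Theta_2\,2^{-i}$ by the definition of $\Theta_2$. Since every pair in $V_i$ — in particular $f^*$ — agrees outside $DIS_2(V_i)$, the disagreement $\Pr(d\neq f^*)$ of any surviving $d$ equals $\Pr(X\in DIS_2(V_i))$ times its conditional disagreement given $X\in DIS_2(V_i)$; conditioned on $X\in DIS_2(V_i)$ the restriction $\Dcal|_{DIS_2(V_i)}$ has VC dimension $\le d(\Dcal)$ and $f^*$ is consistent with all $k$ in-region queries (a.s.), so a realizable uniform-convergence bound gives conditional disagreement $O\big(\tfrac{d(\Dcal)\log k+\log(T/\delta)}{k}\big)$, hence $\Pr(d\neq f^*)\le \Theta_2 2^{-i}\cdot O\big(\tfrac{d(\Dcal)\log k+\log(T/\delta)}{k}\big)$, which is $\le 2^{-(i+1)}$ once $k=\Omega\big(\Theta_2(d(\Dcal)\log k+\log(T/\delta))\big)$; this is where the disagreement coefficient $\Theta_2$ and the stated choice of $k$ come from. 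Iterating for $T=O(\log 1/\ep)$ rounds and union-bounding over rounds (taking the per-round failure probability $\delta/(2T)$, which produces the $\log(\tfrac{\Theta_2}{\delta}\log\tfrac1\ep)$ factor), with probability $\ge 1-\delta/2$ the final version space satisfies $V\subseteq B_2(f^*,\ep/2)$. Only the $k$ queried triples per round count toward $n_l$ (the instances drawn from $\mu_X$ to fill the in-region batch are free), so $n_l=kT$.

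\textbf{Stage two: learning the pair.} Condition on the Stage-one event. The $n_u$ samples in line~4 are drawn fresh, independently of Stage one, so $V$ is a fixed set with $f^*\in V$ (line~3 only discards hypotheses inconsistent with $\bI_{m_j\neq y_j}=f^*(\xv_j)$ a.s.). The returned triple $(\hat h,\hat r,\hat d)$ with $\hat d\in V$ and zero empirical pseudo-deferral-loss exists because $(h^*,r^*,f^*)$ is feasible: $f^*\in V$ and, by Assumption~\ref{assump: real}, the per-example deferral loss of $(h^*,r^*)$ is zero a.s. Hence Stage two is realizable ERM over $\{(h,r,d)\in\Hcal\times\Rcal\times\Dcal\}$, and a uniform-convergence argument in the style of Proposition~\ref{prop: gen_labeled_unlabeled} shows that for the stated $n_u$, with probability $\ge 1-\delta/2$, $\Ebb\big[\bI_{\hat h(X)\neq Y}(1-\hat r(X))+\hat r(X)\hat d(X)\big]\le \ep/2$.

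\textbf{Combining.} It remains to pass from the $\hat d$-pseudo-loss to the true deferral loss, and here the key point is that $\hat d\in V\subseteq B_2(f^*,\ep/2)$, so $\Pr(\hat d(X)\neq\bI_{M\neq Y})\le\ep/2$ (using $f^*=\bI_{M\neq Y}$ a.s.). Then
\begin{align*}
L_{\mathrm{def}}^{0-1}(\hat h,\hat r)
&=\Ebb\big[\bI_{\hat h(X)\neq Y}(1-\hat r(X))\big]+\Ebb\big[\hat r(X)\bI_{M\neq Y}\big]\\
&=\Ebb\big[\bI_{\hat h(X)\neq Y}(1-\hat r(X))+\hat r(X)\hat d(X)\big]+\Ebb\big[\hat r(X)\big(\bI_{M\neq Y}-\hat d(X)\big)\big]\\
&\le \tfrac{\ep}{2}+\Pr\big(\hat d(X)\neq\bI_{M\neq Y}\big)\le\tfrac{\ep}{2}+\tfrac{\ep}{2}=\ep .
\end{align*}
A union bound over the two stages gives overall success probability $\ge 1-\delta$, and $n_l=kT=O\big(d(\Dcal)\Theta_2\log(\tfrac{\Theta_2}{\delta}\log\tfrac1\ep)\log\tfrac1\ep\big)$, as claimed.

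\textbf{Main obstacle.} The delicate part will be Stage one: one must establish the per-round contraction \emph{for the entire version space} rather than for a single output classifier, make the $\Theta_2$ factor surface in $k$, and track failure probabilities across the $T$ rounds so the final logarithmic dependence comes out as stated. Stage two is then comparatively routine, the only observation being that keeping $f^*$ inside $V$ makes the pseudo-labeled problem realizable, so the error of $\hat d$ enters the final bound only through the crude additive estimate $\Pr(\hat d(X)\neq\bI_{M\neq Y})\le\ep/2$ and is not amplified.
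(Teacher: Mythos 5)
Your overall skeleton is the same as the paper's: run CAL on $\Dcal$ to learn the expert-error boundary to accuracy $O(\ep)$, then do realizable ERM on pseudo-labeled fresh data and argue the Stage-one error is not amplified. For Stage one the paper simply invokes Theorem 5.1 of \cite{hanneke2014theory} (getting $\Pr(\hat d(X)\neq \bI_{M\neq Y})\le \ep/4$ with probability $1-\delta/4$ at the stated label cost), whereas you re-derive the per-round version-space contraction with the disagreement coefficient $\Theta_2$; that is equivalent in content and fine. The genuine divergence is in Stage two and the combination: the paper stays at the \emph{empirical} level — it writes the (virtual) empirical deferral loss on the $n_u$ fresh points, notes the return condition kills the pseudo-loss part, bounds what remains by the empirical disagreement $\tfrac{1}{n_u}\sum_i \bI_{\hat d(\xv_i)\neq \bI_{m_i\neq y_i}}$, concentrates this via the Stage-one guarantee, and then converts empirical to population deferral loss using the generalization bound of Theorem 2 of \cite{mozannar2020consistent}, which involves only $\Rad_{n_u}(\Hcal)$ and $\Rad_{n_u}(\Rcal)$. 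You instead work at the \emph{population} level: population pseudo-loss plus the correction $\Ebb[\hat r(X)(\bI_{M\neq Y}-\hat d(X))]\le \Pr(\hat d\neq \bI_{M\neq Y})\le \ep/2$. Your combination identity is correct and arguably cleaner than the paper's.

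The gap is in your Stage-two uniform-convergence claim ``for the stated $n_u$.'' The returned $\hat d$ is selected from $V$ using the \emph{same} $n_u$ sample as $(\hat h,\hat r)$, so bounding the population pseudo-loss from its zero empirical value requires uniform convergence over the class $\{(\xv,y)\mapsto \bI_{h(\xv)\neq y}(1-r(\xv))+r(\xv)d(\xv):\ h\in\Hcal,\ r\in\Rcal,\ d\in V\}$, whose complexity is governed by $d(\Hcal)+d(\Rcal)+d(\Dcal)$ in general: $V\subseteq B_2(f^*,\ep/2)$ is small in measure but can still have VC dimension as large as $d(\Dcal)$. The theorem's $n_u$ contains only $\max\{d(\Hcal),d(\Rcal)\}$, so along your route the claimed $\ep/2$ bound does not follow without letting $n_u$ also scale with $d(\Dcal)$ (the headline $n_l$ bound is unaffected, but the stated $n_u$ is). The paper's decomposition keeps $\Dcal$ out of the generalization class: only the empirical disagreement term depends on $\hat d$, and it is handled through the Stage-one population guarantee plus Hoeffding — though, to be fair, that Hoeffding step also quietly treats the data-dependent $\hat d$ as fixed, so a fully rigorous version of either argument needs uniform control of the empirical disagreement over $V$. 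If you want to match the stated $n_u$, mirror the paper: bound the empirical deferral loss by the empirical disagreement of $\hat d$ with $\bI_{M\neq Y}$, then apply the $(\Hcal,\Rcal)$-only generalization bound to pass to $\defl(\hat h,\hat r)$.
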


Proof of the proposition can be found in Appendix \ref{app: active}.
Recall that in Proposition \ref{prop: gen_labeled_unlabeled}, where the labeled data was chosen at random, the sample complexity $n_u$ is in order $O(\frac{1}{\ep^2}\log \frac{1}{\ep})$. As we see in Theorem \ref{thm: Dod}, the proposed active learning algorithm reduces sample complexity to $O(\log \frac{1}{\ep})$, with the caveat that realizability is assumed for active learning.
Further, note that for this algorithm, in contrast to previous subsection, the uniqueness of the consistent pair $(h, r)$ is not needed anymore. However, this algorithm ignores the classifier and rejector classes when querying for points, which  makes the sample complexity $n_l$  dependent only on the complexity of $\Dcal$ instead of $\Hcal, \Rcal$.  In the next section, we try to understand how to use surrogate loss functions to practically optimize for our classifier-rejector pair.

%%%%%%%%%%%%%%%%%%%%%%%%%%%%%%%%%%%%%%%%%%%%%%%%%%%%%%%%%%%%%%%%%%%%%%%%%%%%%%%%%%%%%%%%%%%%%%%%%%%%%%%%%%%%%%
\section{Experimental Illustration}
%%%%%%%%%%%%%%%%%%%%%%%%%%%%%%%%%%%%%%%%%%%%%%%%%%%%%%%%%%%%%%%%%%%%%%%%%%%%%%%%%%%%%%%%%%%%%%%%%%%%%%%%%%%%%%

Code for our experiments is found in \url{https://github.com/clinicalml/active_learn_to_defer}.
\paragraph{Dataset.} We use the CIFAR-10 image
classification dataset \cite{krizhevsky2009learning} consisting of $32 \times32$ color
images drawn from 10  classes. 
 We consider the human expert models considered in \cite{mozannar2020consistent}:  if the image is in the first 5 classes the human expert is perfect, otherwise the expert predicts randomly. Further experimental details are in Appendix \ref{app: experiments}.  
 
\paragraph{Model and Optimization.} We parameterize the classifier and rejector by a convolutional neural network consisting of two convolutional layers followed by two feedforward layers. For staged learning, we train the classifer on  the training data optimizing for performance on a validation set, and for the rejector we train a network to predict the expert error and defer at test time by comparing the confidence of the classifier and the expert as in \cite{raghu2019algorithmic}. For joint learning, we use the loss $L_{CE}^\alpha$, a simple extension of the loss \eqref{eqn:deferral_loss} in \cite{mozannar2020consistent}, optimizing the parameter $\alpha$ on a validation set.  

\paragraph{Model Complexity Gap.} In Figure \ref{fig:model_complex}, we plot the difference of accuracy between joint learning and staged learning  as we increase the complexity of the classifier class by increasing the filter size of the convolutional layers and the number of units in the feedforward layers. Model complexity is captured by the number of parameters in the classifier which serves only as a rough proxy of the VC dimension that varies in the same direction. The difference is decreasing as predicted by Theorem \ref{thm: vc} as we increase the classifier class complexity as we fix the complexity of the rejector. 
\begin{figure}[H]
    \centering
    \includegraphics[scale=.7]{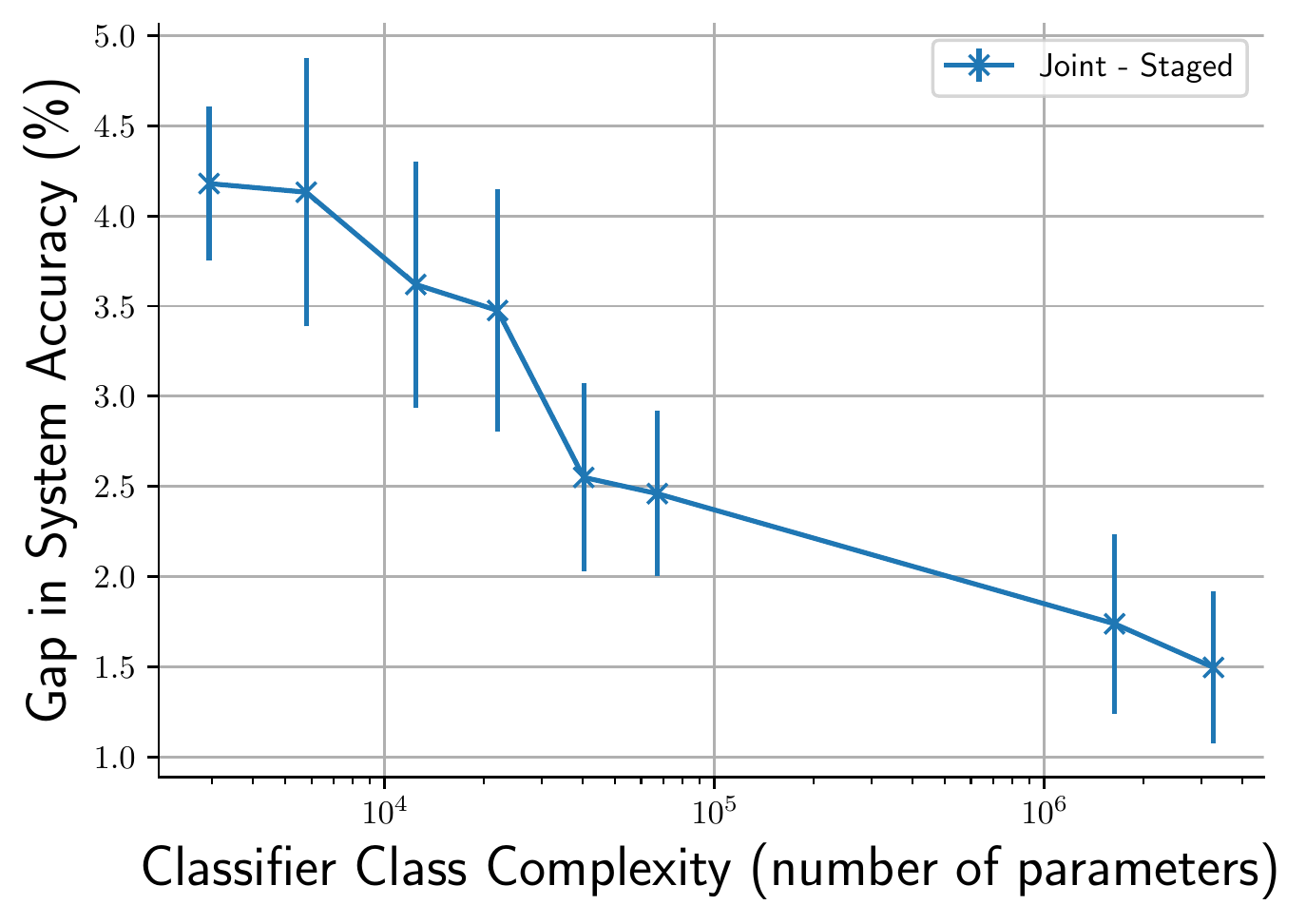}
    \caption{Difference of accuracy between joint learning and staged learning of the classifier-rejector pair (y-axis is log scale of number of parameters). }
    \label{fig:model_complex}
\end{figure}
\paragraph{Data Trade-Offs.} In Figure \ref{fig:sample_complex}, we plot the of accuracy between joint learning and staged learning when only a subset of the data is labeled by the expert as in Section \ref{subsec:data_tradeofs}.  We plot the average difference across 10 trials and error bars denote standard deviation. We only plot the performance of joint learning when initialized first on the unlabeled data to predict the target and then trained on the labeled expert data to defer, we denote this approach as 'Joint-SemiSupervised'.  For staged learning, the classifier is trained on all of the data $S_l \cup S_u$, while for joint learning we only train on $S_l$. We can see that when there is more unlabeled data than labeled, staged learning outperforms joint learning in accordance with Proposition \ref{prop: gen_labeled_unlabeled}. The heuristic method 'Joint-SemiSupervised' improves on the sample complexity of 'Joint' but still lags behind the Staged approach in low data regimes.
\begin{figure}[H]
    \centering
    \includegraphics[scale=.7]{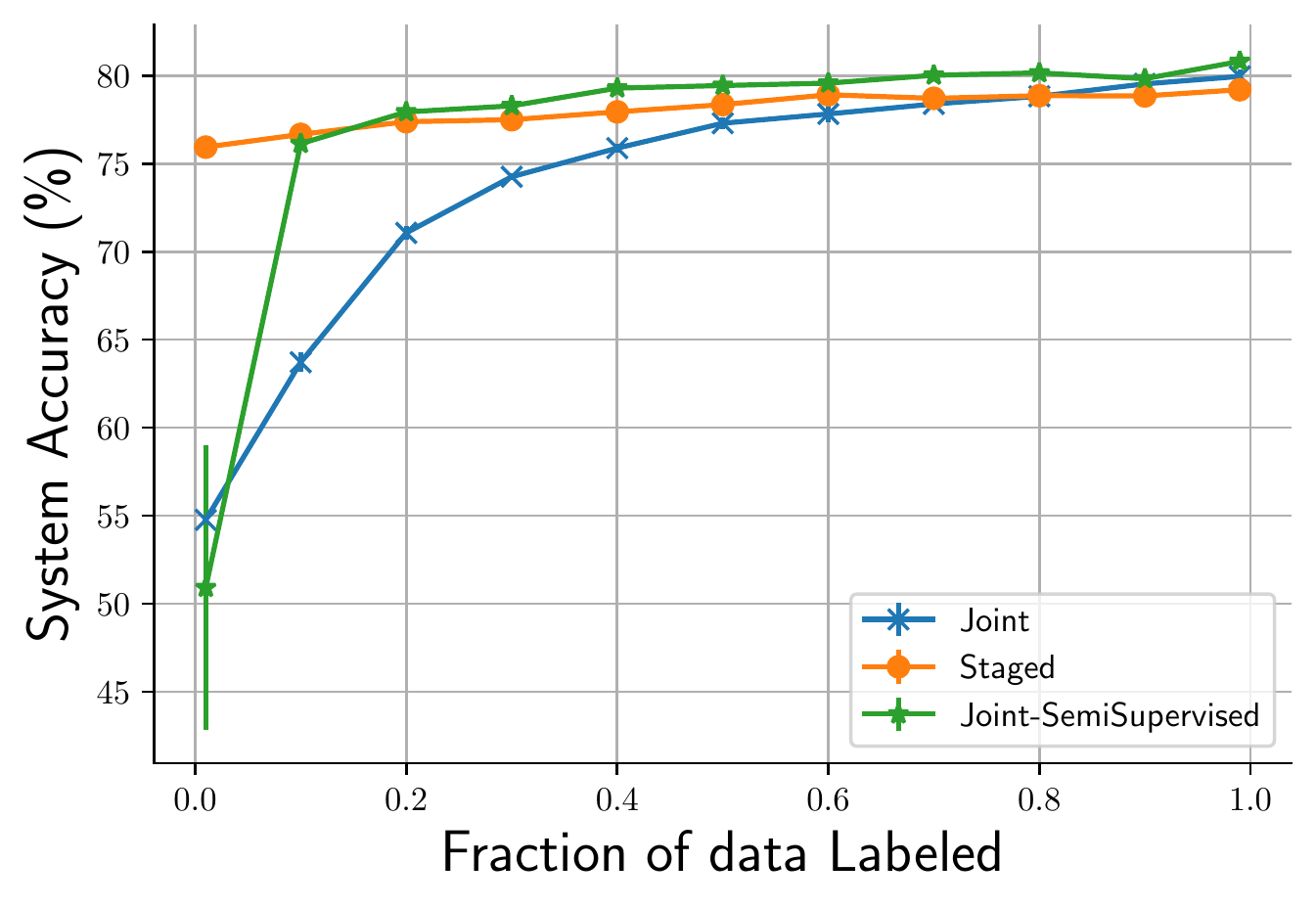}
    \caption{Performance of joint learning and staged learning as we increase the ratio  of the data labeled by the expert $\frac{n_l}{n_u + n_l} $.}
    \label{fig:sample_complex}
\end{figure}
%\vspace{-0.7cm}
\paragraph{DoD algorithm.}
In Figure \ref{fig:DoD}, we plot corresponding errors of the DoD algorithm and we compare them to the staged and joint learning. The features $\xv$ of the synthetic data in here is generated from a uniform distribution on interval $[0, 1]$, and the labels $y$ are equal to $1$ where $\xv>0.3$ (full-information region) and are equal to random outcomes of a $Bernoulli(0.5)$ distribution otherwise (no-information region). The human's decision is inaccurate ($M\neq Y$) for $X>0.3$ and accurate ($M=Y$) otherwise. We further assume each hypothesis class of rejectors and classifiers be 100 samples of half-spaces over the interval $[0, 1]$. The error plotted in Figure \ref{fig:DoD} is an average of 1000 random generations of training data. The test set is formed by $N_{test}=1000$ samples that are generated from the same distribution as training data. Here, we note that the number of unlabeled data in staged learning is set $N_{u}=100$. The result of this experiment shows that in the DoD algorithm, one needs less number of samples that are labeled by human to reach a similar level of error.
\begin{figure}[H]
    \centering
    \includegraphics[scale=.7]{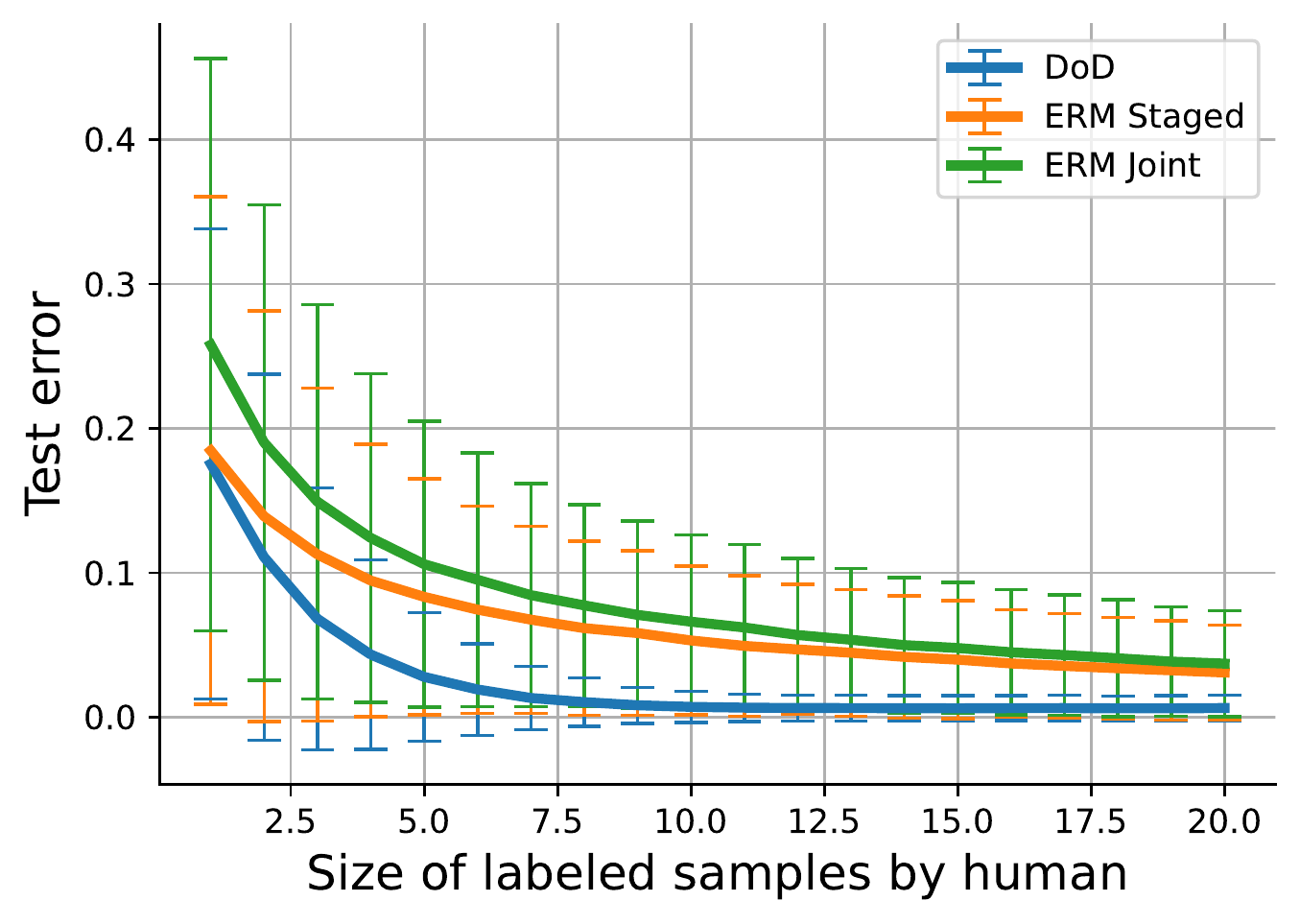}
    \caption{Error of the DoD algorithm compared to staged and joint learning for increasing number of training data that are labeled by human.}
    \label{fig:DoD}
\end{figure}
\section{Discussion}
In this work, we provided  novel theoretical analysis of learning in the expert deferral setting. We first analyzed the gap in performance between jointly learning a classifier and rejector, and a staged learning approach. While our theorem on the gap is a worst-case statement, an experimental illustration on CIFAR-10 indicates a more general trend. Further analysis could explicitly compute the gap for certain hypothesis classes of interest. We further analyzed a popular approach to jointly learning to defer, namely consistent surrogate loss functions. To that end, we proposed a novel family of surrogates that generalize prior work and give a criteria, namely the surrogate excess-risk bound for evaluating surrogates. Future work will try to instantiate  members of this family that minimize the excess-risk bound and provide improved empirical performances.
Driven by the limited availability of human data, we sought to design active learning schemes that requires a minimal amount of labeled data for learning a classifier-rejector pair. While our results hold for the realizable setting, we believe it is feasible to generalize to the agnostic setting. Future work will also build and test practical active learning algorithms inspired by our theoretical analysis.
% conclude in two sentences
% staged: result is a worst-case result
% active learning: how to remove realizability, and how to take the algorithm to practice 
% surrogates: is it possible to create new surrogates that perform better? maybe we need a different measure? 
% how to extend to incorporate penalties?

\section*{Acknowledgements}
M.A. Charusaie thanks the International Max Planck Research
School for Intelligent Systems (IMPRS-IS) for the support and funding of this project. The authors would like to thank Nico Gürtler, Jack Brady, Michael Muehlebach, and Hunter Lang for helpful feedback and other members of the Clinical ML group.

%references:
%https://openreview.net/pdf?id=iKYO63MOWwi
%https://arxiv.org/abs/1510.02847
%https://arxiv.org/pdf/1910.02923.pdf
% https://nowak.ece.wisc.edu/ActiveML.html
%https://arxiv.org/pdf/1906.08158.pdf

\bibliographystyle{alpha}
\bibliography{ref}
\clearpage
\appendix
\onecolumn
%%%%%%%%%%%%%%%%%%%%%%%%%%%%%%%%%%%%%%%%%%%%%%%%%%%%%%%%%%%%%%%%%%%%%%%%%%%%%%%%%%%%%%%%%%%%%%%%%%%%%%%%%%%%%%
\section*{Notations}
%%%%%%%%%%%%%%%%%%%%%%%%%%%%%%%%%%%%%%%%%%%%%%%%%%%%%%%%%%%%%%%%%%%%%%%%%%%%%%%%%%%%%%%%%%%%%%%%%%%%%%%%%%%%%%
We employ the notations $L_{\mathrm{def}}^{\mu_X}$, $L_{\mathrm{def}}^{\mu_X\mu_{Y|X}}$, $L_{\mathrm{def}}^{\mu_{XYM}}$ to indicate $\defl$ and stress on marginal, conditional, and joint probability measures on $X, Y,$ and $M$. We further use $L_{0-1}^{\mu_X\mu_{Y|X}}$ to indicate zero-one loss $L_{0-1}$ and to represent the underlying probability measures on $X$ and $Y$. The cardinality of a set $\Acal$ is indicated by $|\Acal|$.  The notation for the set of numbers from $1$ to $K$ is: $[K] = \{1,\cdots,K\}$.
% \section{Deferred Proofs and Derivations}
\section{Proof of Theorem \ref{thm: vc}} \label{app: vc}
We first introduce some useful lemmas as below. 
In Lemma \ref{lem: const_cs}, we show that there exists a pair of hypothesis classes $(\Hcal, \Rcal)$ such that for all non-atomic measures on $\Xcal$ the deferral loss takes a fixed value. In Lemma \ref{lem: bound_diff_discrete}, we use the aforementioned lemma to show that the difference of deferral losses for all two pairs of classifier/rejector $(h_1, r_1)$ and $(h_2, r_2)$ is bounded by the difference of two deferral losses with atomic measures on $\Xcal$. In Lemma \ref{lem: diff_ub}, we upper-bound the difference of two deferral losses for pairs of classifier/rejector that are obtained by staged and joint learning and on hypothesis classes that are defined in Lemma \ref{lem: const_cs}. Such upper-bound is in terms of expected loss of an optimal classifier on a certain hypothesis class. In Lemma \ref{lem: optimal_Hd}, we further calculate the optimal expected loss on such classes. In Lemma \ref{lem: ordered}, we show that on a set of events with size $n$, we could find a subset with size $a$ and probability at most $\frac{a}{n}$. 
Next, we uses these lemmas in the main proof of theorem.
\begin{lemma}\label{lem: const_cs}
For a probability measure $\mu_X$ with no atomic component on $\Xcal$, hypothesis class $\Hcal$ such that for every $h\in \Hcal$, we have $|\{\xv: h(\xv)=1\}|\leq d(\Hcal)$, and hypothesis class $\Rcal$ such that for every $r\in \Rcal$, we have $|\{\xv: r(\xv)=1\}|\leq d(\Rcal)$, for every choice of $(h, r)\in \Hcal\times \Rcal$, the loss $$\defl(h, r)= \Ebb_{X, Y, M} [\bI_{h(X)\neq Y}\bI_{r(X)=0}+\bI_{M\neq Y}\bI_{r(X)=1}],$$ takes a constant value.
\end{lemma}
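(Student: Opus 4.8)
The plan is to reduce everything to one measure-theoretic observation: a non-atomic probability measure assigns zero mass to every singleton, hence---by countable additivity---to every finite subset of $\Xcal$. By hypothesis, for each $h\in\Hcal$ the set $A_h:=\{\xv:\,h(\xv)=1\}$ has $|A_h|\leq d(\Hcal)<\infty$, and for each $r\in\Rcal$ the set $A_r:=\{\xv:\,r(\xv)=1\}$ has $|A_r|\leq d(\Rcal)<\infty$. Since $A_h$ and $A_r$ are finite, they are measurable, and since $\mu_X$ has no atomic component we conclude $\Pr(h(X)=1)=\mu_X(A_h)=0$ and $\Pr(r(X)=1)=\mu_X(A_r)=0$ for every admissible pair $(h,r)$.

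With this in hand I would split $\defl(h,r)$ into its two terms. The deferral term satisfies $\Ebb[\bI_{M\neq Y}\bI_{r(X)=1}]\leq \Pr(r(X)=1)=0$, so it vanishes. For the prediction term, $\bI_{r(X)=0}=1$ holds $\mu_X$-almost surely, and $h(X)=0$ holds $\mu_X$-almost surely (the event $\{h(X)=1\}$ is null); since we are in the binary setting $Y\in\{0,1\}$, on a set of full measure we have $\bI_{h(X)\neq Y}\bI_{r(X)=0}=\bI_{0\neq Y}=\bI_{Y=1}$. Integrating yields $\defl(h,r)=\Ebb[\bI_{Y=1}]=\Pr(Y=1)$.

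Because the right-hand side depends only on the marginal law of $Y$ and not on the chosen $(h,r)$, the deferral loss is the same for every $(h,r)\in\Hcal\times\Rcal$, which is exactly the claim. There is essentially no technical obstacle here; the only point that needs a moment's care is invoking non-atomicity of $\mu_X$ (as opposed to, say, continuity of a one-dimensional c.d.f.) so that finite sets of arbitrary points in the possibly abstract feature space $\Xcal$ are guaranteed to be $\mu_X$-null. This is precisely what is needed, since each $h\in\Hcal$ agrees with the identically-$0$ classifier off the finite set $A_h$ and each $r\in\Rcal$ agrees with the never-defer rejector off the finite set $A_r$, so both behaviors are almost surely irrelevant under $\mu_X$.
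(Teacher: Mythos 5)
Your proposal is correct and follows essentially the same route as the paper's proof: both use non-atomicity of $\mu_X$ to conclude that the finite sets $\{\xv: h(\xv)=1\}$ and $\{\xv: r(\xv)=1\}$ are $\mu_X$-null, so that almost surely $r(X)=0$ and $h(X)=0$, reducing the loss to $\Pr(Y=1)$ independently of $(h,r)$. Your write-up is, if anything, slightly more explicit than the paper's in justifying why finite sets are null under a measure with no atomic component.
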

\begin{proof}
Firstly, we know that
	\ea{
\defl(h, r)&=\Ebb_{X, Y, M}[\bI_{h(X)\neq Y}\bI_{r(X)=0}]+\Ebb_{X, Y, M}[\bI_{M\neq Y}\bI_{r(X)=1}] .\label{eqn: const_l}}

Since probability measure of the set $\{x: r(x)=1\}$ is zero in the absence of atomic components in $\mu_{X}$, one can show that $\Pr(r(X)=1)=0$ (, and equivalently $\Pr(r(X)=0)=1$). This fact together with \eqref{eqn: const_l} concludes that
\ea{
\defl(h, r)= \Ebb_{X, Y} [\bI_{h(X)\neq Y}] . \label{eqn: def_is_nondef}
}
Further, we have 
\ea{
\Ebb_{X, Y} [\bI_{h(X)\neq Y}] &= \Ebb_{X, Y}[\bI_{h(X)\neq Y}|h(X) = 0] \Pr(h(X)=0) +	\Ebb_{X, Y}[\bI_{h(X)\neq Y}|h(X) = 1] \Pr(h(X)=1)\\
&\overset{(a)}{=} \Ebb_{X, Y} [\bI_{Y=1}], \label{eqn: loss_cst}
}
where $(a)$ holds because probability measure of $\{\xv: h(\xv)=1\}$ is zero in the absence of atomic components in the measure, that concludes $\Pr(h(X) =0) = 1-\Pr(h(X)=1)=1$. The proof is complete by \eqref{eqn: def_is_nondef} and \eqref{eqn: loss_cst}.
\end{proof}

\begin{lemma} \label{lem: bound_diff_discrete}
 Let $\mu_X$ be a probability measure on $\Xcal$, and let $\Hcal$ and $\Rcal$ be hypothesis classes as in Lemma \ref{lem: const_cs}. Further, let $h_1, h_2\in \Hcal$ and $r_1, r_2\in \Rcal$. Then, we have
\ea{
\big|L_{\mathrm{def}}^{\mu_X}(h_1, r_1)-L_{\mathrm{def}}^{\mu_X}(h_2, r_2)\big|\leq \big|L_{\mathrm{def}}^{\mu_d}(h_1, r_1)-L_{\mathrm{def}}^{\mu_d}(h_2, r_2)\big|,
}
where $\mu_d$ is pure atomic (discrete) probability measure on $\Xcal$.
\end{lemma}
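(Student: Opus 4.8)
The plan is to split $\mu_X$ into its diffuse (non-atomic) and atomic parts and exploit the fact that the deferral loss is linear in the feature marginal, combined with Lemma \ref{lem: const_cs}.

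First I would invoke the standard decomposition of a finite Borel measure on $\Xcal$ into its non-atomic and atomic components and write $\mu_X = \lambda\,\mu_c + (1-\lambda)\,\mu_d$, where $\lambda\in[0,1]$, $\mu_c$ is a non-atomic probability measure, and $\mu_d$ is a pure atomic (discrete) probability measure (when $\lambda\in\{0,1\}$ one of the pieces is vacuous and the corresponding measure may be chosen arbitrarily). Next I would note that, for a fixed pair $(h,r)$ and fixed conditionals $\mu_{Y|X},\mu_{M|XY}$, the deferral loss is an integral against the feature marginal: $L_{\mathrm{def}}^{\mu}(h,r)=\int_{\Xcal} g_{h,r}(x)\,d\mu(x)$, where $g_{h,r}(x)=\Ebb\big[\bI_{h(x)\neq Y}\bI_{r(x)=0}+\bI_{M\neq Y}\bI_{r(x)=1}\,\big|\,X=x\big]$ is a bounded measurable function of $x$ alone. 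Consequently $\mu\mapsto L_{\mathrm{def}}^{\mu}(h,r)$ is linear, so for $i=1,2$ we get $L_{\mathrm{def}}^{\mu_X}(h_i,r_i)=\lambda\,L_{\mathrm{def}}^{\mu_c}(h_i,r_i)+(1-\lambda)\,L_{\mathrm{def}}^{\mu_d}(h_i,r_i)$.

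Now I would apply Lemma \ref{lem: const_cs} to the non-atomic measure $\mu_c$ and the classes $\Hcal,\Rcal$ (which satisfy the hypotheses of that lemma), obtaining that $L_{\mathrm{def}}^{\mu_c}(h,r)$ is constant over $\Hcal\times\Rcal$; in particular $L_{\mathrm{def}}^{\mu_c}(h_1,r_1)=L_{\mathrm{def}}^{\mu_c}(h_2,r_2)$. Subtracting the two linear decompositions, the $\mu_c$-terms cancel and we are left with $L_{\mathrm{def}}^{\mu_X}(h_1,r_1)-L_{\mathrm{def}}^{\mu_X}(h_2,r_2)=(1-\lambda)\big(L_{\mathrm{def}}^{\mu_d}(h_1,r_1)-L_{\mathrm{def}}^{\mu_d}(h_2,r_2)\big)$. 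Taking absolute values and using $0\le 1-\lambda\le 1$ gives the claimed inequality. The degenerate case $\lambda=1$ (so $\mu_X$ itself is non-atomic) is handled directly by Lemma \ref{lem: const_cs}, which forces the left-hand side to be $0$, trivially bounded by the nonnegative right-hand side for any discrete $\mu_d$.

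There is essentially no hard step here; the only items needing a word of care are the measure-theoretic validity of the diffuse/atomic decomposition on $\Xcal$ (standard for finite Borel measures) and the observation that, once the conditionals are fixed, the integrand $g_{h,r}$ depends only on $x$, so that the loss is genuinely linear in the marginal $\mu_X$. All the substantive content is carried by Lemma \ref{lem: const_cs}.
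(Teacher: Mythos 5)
Your proof is correct and follows essentially the same route as the paper's: decompose $\mu_X$ into its atomic and non-atomic parts, use linearity of the loss in the feature marginal, invoke Lemma \ref{lem: const_cs} to cancel the non-atomic contribution, and bound the remaining factor by $1$. The only difference is notational (your $1-\lambda$ is the paper's $p$).
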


\begin{proof}
We know that for probability measure $\mu_X$, there exists $p\in [0, 1]$ and probability measures $\mu_d$ and $\mu_{cs}$, such that
\ea{
\mu_X = p\mu_d+(1-p)\mu_{cs},
}
where $\mu_d$ is pure atomic and $\mu_{cs}$ has no atomic components. As a result, for every function $f(\cdot):\Xcal\to \Rbb$, we have
\ea{
\Ebb_{X\sim \mu_X}\big[f(X)\big]=p\Ebb_{x\sim \mu_d}\big[f(X)\big]+(1-p)\Ebb_{x\sim \mu_d}\big[f(X)\big].
}
With the same reasoning, we have
\ea{
L_{\mathrm{def}}^{\mu_X}(h, r)=pL_{\mathrm{def}}^{\mu_d}(h, r)+(1-p)L_{\mathrm{def}}^{\mu_{cs}}(h, r).
}
Next, we have
\ea{
L_{\mathrm{def}}^{\mu_X}(h_1, r_1)-L_{\mathrm{def}}^{\mu_X}(h_2, r_2) &= p\big[L_{\mathrm{def}}^{\mu_d}(h_1, r_1)-L_{\mathrm{def}}^{\mu_d}(h_2, r_2)\big]+(1-p)\big[L_{\mathrm{def}}^{\mu_{cs}}(h_1, r_1)-L_{\mathrm{def}}^{\mu_{cs}}(h_2, r_2)\big]\\
&\overset{(a)}{=} p\big[L_{\mathrm{def}}^{\mu_d}(h_1, r_1)-L_{\mathrm{def}}^{\mu_d}(h_2, r_2)\big], \label{eqn: remove_cs}
}
where $(a)$ holds because of Lemma \ref{lem: const_cs} that proves $L_{\mathrm{def}}^{\mu_{cs}}(h, r)$ is constant for every $(h, r)\in \Rcal\times \Hcal$.

Finally, using \eqref{eqn: remove_cs}, and since $p\in [0, 1]$, the proof is complete.
\end{proof}

\begin{lemma} \label{lem: diff_ub}
Let $\mathrm{supp}(h)=\max_{\Scal: \forall x\in \Scal, h(x)=1} |\Scal|$ and $\Hcal_d = \{h:\Xcal\to \{0, 1\}\,|\, \mathrm{supp}(h)\leq d\}$. Further, let $\mu_X$ be an atomic measure on $\Xcal$, and define
\ea{
\hat{h} := \argmin_{h\in \Hcal_{d(\Hcal)}} L_{0-1}^{\mu_X\mu_{Y|X}} (h), \label{eqn: htilde_def}
}
where
\ea{
L_{0-1}^{\mu_X\mu_{Y|X}} (h) = \Ebb_{\mu_{X}\mu_{Y|X}}\big[\bI_{h(X)\neq Y} \big],
}
and
\ea{
\hat{r}:= \argmin_{r\in \Hcal_{d(\Rcal)}} L_{\mathrm{def}}^{\mu_X}(\hat{h}, r). \label{eqn: rtilde_def}
}
Further, define the pair $(h^*, r^*)$ be the optimal classifier
\ea{
(h^*, r^*)=\argmin_{(h, r)\in \Hcal_{d(\Hcal)}\times \Hcal_{d(\Rcal)}} L_{\mathrm{def}}^{\mu_{X}\mu_{Y|X}}(h, r).
}
Then, if $d(\Hcal)\geq d(\Rcal)$, we have
\ea{
L_{\mathrm{def}}^{\mu_{X}\mu_{Y|X}}(\hat{h}, \hat{r}) - L_{\mathrm{def}}^{\mu_{X}\mu_{Y|X}}(h^*, r^*) \leq \min_{h\in \Hcal_{d(\Hcal)-d(\Rcal)}} L_{0-1}^{\mu'_X\mu_{Y|X}}(h)-\min_{h\in \Hcal_{d(\Hcal)}}L_{0-1}^{\mu'_X\mu_{Y|X}}(h),
}
where $\mu'_X$ is a purely atomic measure on $\Xcal$.
\end{lemma}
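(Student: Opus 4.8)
The plan is to relate the staged-learning gap to a purely combinatorial question about restricted-support hypothesis classes, exploiting that on an atomic (discrete) measure everything reduces to finitely many points with weights. First I would unpack the left-hand side. Since $\mu_X$ is atomic, write $\mu_X = \sum_j p_j \delta_{x_j}$ and let $\eta_j = \Pr(Y=1 \mid X = x_j)$ be the conditional label probability. With this notation, $L^{\mu_X\mu_{Y|X}}_{0-1}(h) = \sum_j p_j\big(\eta_j \bI_{h(x_j)=0} + (1-\eta_j)\bI_{h(x_j)=1}\big)$, and $L^{\mu_X}_{\mathrm{def}}(h,r) = \sum_j p_j\big[(1-r(x_j))(\eta_j\bI_{h(x_j)=0} + (1-\eta_j)\bI_{h(x_j)=1}) + r(x_j)\beta_j\big]$, where $\beta_j = \Pr(M\neq Y \mid X = x_j)$ is the expert error at $x_j$. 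The key structural observation is that, since $\Hcal_d$ consists of \emph{all} functions with support at most $d$ (not a restricted class), the staged classifier $\hat h$ of \eqref{eqn: htilde_def} is obtained by setting $h(x_j)=1$ on the $d(\Hcal)$ points with largest $\eta_j$ (those where $\eta_j > 1/2$, up to the budget), and $\hat r$ then defers on the $d(\Rcal)$ points among the support of $\hat h$ or its complement where the expert beats the residual classifier error.

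Second I would set up the comparison $(h^*,r^*)$ versus $(\hat h,\hat r)$. The optimal joint pair can be thought of as: choose a set $R$ of at most $d(\Rcal)$ points to defer (paying $\sum_{j\in R} p_j\beta_j$), and on the complement choose $h$ with support at most $d(\Hcal)$ optimally for the $0{-}1$ loss restricted to $\Xcal\setminus R$. The crucial reduction is to bound the suboptimality of the staged pair by the cost of the ``worst case" where staged learning wastes all $d(\Rcal)$ of its deferral budget inefficiently: on the non-deferred region, the staged classifier $\hat h$ has already committed $d(\Hcal)$ units of support based on the full distribution, whereas the joint solution may use its deferral set to ``cover" the $d(\Rcal)$ hardest points and thereby free up effective classifier capacity. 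Quantitatively, I expect the argument to show that the joint solution restricted to its non-deferred region is at least as good as $\min_{h\in\Hcal_{d(\Hcal)-d(\Rcal)}} L^{\mu'_X\mu_{Y|X}}_{0-1}(h)$ for an appropriate conditional-reweighted measure $\mu'_X$ (conditioning out the deferred points, which explains the $\mu'_X$ in the statement), while the staged solution is no worse than $\min_{h\in\Hcal_{d(\Hcal)}} L^{\mu'_X\mu_{Y|X}}_{0-1}(h)$ — because $\hat h$ is literally the minimizer over the full budget $d(\Hcal)$ and deferral can only lower this. Subtracting gives the claimed inequality. The condition $d(\Hcal)\geq d(\Rcal)$ is needed so $d(\Hcal)-d(\Rcal)\geq 0$ and $\Hcal_{d(\Hcal)-d(\Rcal)}$ makes sense.

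Third, the bookkeeping: I would need to be careful that the same atomic measure $\mu_X$ and conditional $\mu_{Y|X}$ are used throughout, and that $\mu'_X$ (the purely atomic measure appearing on the right) is precisely the measure obtained from $\mu_X$ by restricting to the complement of whichever set the joint/staged solutions defer on — or, to make the two $\min$ terms comparable, the worst such restriction. The step that controls the expert loss terms $\sum_{j\in R}p_j\beta_j$ must show they \emph{cancel} between $\hat r$ and $r^*$ up to the slack we want, which uses that both deferral sets have size at most $d(\Rcal)$ and that deferring is only chosen when $\beta_j$ is below the local classifier error.

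\textbf{Main obstacle.} The hard part will be the middle step: cleanly arguing that replacing the joint deferral set $R^*$ by the staged deferral set $\hat R$ (both of size $\le d(\Rcal)$) costs at most the difference between using a $d(\Hcal)$-support classifier and a $(d(\Hcal)-d(\Rcal))$-support classifier on a common conditioned measure. The subtlety is that $\hat h$ is fixed \emph{before} $\hat r$ is chosen, so on points where $r^*$ defers but $\hat r$ does not, the staged system is stuck with $\hat h(x_j)$, which need not be optimal there; one must show this mismatch is absorbed by the $d(\Rcal)$-unit ``capacity gap" on the right-hand side, essentially because $\hat h$ spent its budget greedily on the globally-hardest points. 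Handling ties in $\eta_j$ and the exact accounting of which $\mu'_X$ makes both sides comparable (it should be the conditional measure given $X\notin \hat R$, or given $X \notin R^*$, whichever is worse) is where I expect the delicate case analysis to live.
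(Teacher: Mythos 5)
Your proposal correctly identifies the two load-bearing objects --- that $\mu'_X$ should be a conditional measure on a non-deferred region, and that the capacity gap $d(\Rcal)$ enters because a rejector's support can absorb that many points --- but the actual mechanism of the bound is missing, and the two inequalities you propose to subtract point the wrong way. You claim the joint solution on its non-deferred region is at least as good as $\min_{h\in\Hcal_{d(\Hcal)-d(\Rcal)}}L_{0-1}^{\mu'_X\mu_{Y|X}}(h)$ and the staged solution is no worse than $\min_{h\in\Hcal_{d(\Hcal)}}L_{0-1}^{\mu'_X\mu_{Y|X}}(h)$; both of these are \emph{upper} bounds on losses, so subtracting them cannot bound the difference $L_{\mathrm{def}}(\hat h,\hat r)-L_{\mathrm{def}}(h^*,r^*)$ from above --- for that you need an upper bound on the staged loss and a \emph{lower} bound on the joint loss. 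Moreover the roles of the two budgets are swapped relative to what is true: the correct statements are $L_{0-1}^{\mu'_X\mu_{Y|X}}(\hat h)\le \min_{h\in\Hcal_{d(\Hcal)-d(\Rcal)}}L_{0-1}^{\mu'_X\mu_{Y|X}}(h)$ (the \emph{staged} classifier gets the reduced budget on the right) and $L_{0-1}^{\mu'_X\mu_{Y|X}}(h^*)\ge \min_{h\in\Hcal_{d(\Hcal)}}L_{0-1}^{\mu'_X\mu_{Y|X}}(h)$ (trivial, since $h^*\in\Hcal_{d(\Hcal)}$). Your justification for the staged bound --- that $\hat h$ is ``literally the minimizer over the full budget'' --- fails on the conditional measure: $\hat h$ minimizes over $\mu_X$, not over $\mu'_X$, and may have spent its support on the deferred region.

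The two missing ideas are these. First, the comparison of deferral sets that you flag as the ``main obstacle'' is a non-issue once you invoke the optimality of $\hat r$ for the fixed $\hat h$: $L_{\mathrm{def}}(\hat h,\hat r)\le L_{\mathrm{def}}(\hat h,r^*)$, so both systems can be evaluated with the \emph{same} rejector $r^*$, the expert terms cancel exactly, and the gap reduces to $\Pr\big(r^*(X)=0\big)\big[L_{0-1}^{\mu'_X\mu_{Y|X}}(\hat h)-L_{0-1}^{\mu'_X\mu_{Y|X}}(h^*)\big]$ with $\mu'_X=\mu_{X\mid r^*(X)=0}$. Second, the inequality $L_{0-1}^{\mu'_X\mu_{Y|X}}(\hat h)\le\min_{h\in\Hcal_{d(\Hcal)-d(\Rcal)}}L_{0-1}^{\mu'_X\mu_{Y|X}}(h)$ is proved by a patching contradiction: if some $h'$ with $\mathrm{supp}(h')\le d(\Hcal)-d(\Rcal)$ beat $\hat h$ on $\{r^*=0\}$, then the function equal to $h'$ on $\{r^*=0\}$ and to $\hat h$ on $\{r^*=1\}$ has support at most $d(\Hcal)$ (because $|\{\xv:r^*(\xv)=1\}|\le d(\Rcal)$) and strictly smaller global $0$--$1$ loss than $\hat h$, contradicting \eqref{eqn: htilde_def}. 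This gluing step is where the $-d(\Rcal)$ actually comes from and is the essential content of the lemma; without it the proposal does not go through.
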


\begin{proof}
Firstly, using \eqref{eqn: rtilde_def}, we know that
\ea{
L_{\mathrm{def}}^{\mu_{X}\mu_{Y|X}}(\hat{h}, \hat{r}) \leq L_{\mathrm{def}}^{\mu_{X}\mu_{Y|X}}(\hat{h}, r^*).
}
Hence, we have 
\ea{
\underbrace{L_{\mathrm{def}}^{\mu_{X}\mu_{Y|X}}(\hat{h}, \hat{r}) - L_{\mathrm{def}}^{\mu_{X}\mu_{Y|X}}(h^*, r^*)}_{D} &\leq L_{\mathrm{def}}^{\mu_{X}\mu_{Y|X}}(\hat{h}, r^*) - L_{\mathrm{def}}^{\mu_{X}\mu_{Y|X}}(h^*, r^*)\\
&=\Ebb\big[\bI_{r^*(X)=0}\bI_{\hat{h}(X)\neq Y}\big] - \Ebb\big[\bI_{r^*(X)=0}\bI_{h^*(X)\neq Y}\big]. \label{eqn: ub_d_1}
}
Next, we form the conditional probability measure $\mu'_X = \mu_{X|r^*(X)=0}$. Therefore, using \eqref{eqn: ub_d_1} we have
\ea{
D = \Pr\big(r^*(X)=0\big) \big[L_{0-1}^{\mu'_X\mu_{Y|X}}(\hat{h}) - L_{0-1}^{\mu'_X\mu_{Y|X}}(h^*)\big]. \label{eqn: def_big_D}
}

Next, since $h^*\in \Hcal_{d(\Hcal)}$, we know that
\ea{
L_{0-1}^{\mu'_X\mu_{Y|X}}(h^*) \geq \min_{h\in \Hcal_{d(\Hcal)}} L_{0-1}^{\mu'_X\mu_{Y|X}}(h). \label{eqn: lb_loss_x'}
}
Moreover, we prove that
\ea{
L_{0-1}^{\mu'_X\mu_{Y|X}}(\hat{h}) \leq \min_{h\in \Hcal_{d(\Hcal)-d(\Rcal)}} L_{0-1}^{\mu'_X\mu_{Y|X}}(h). \label{eqn: ub_loss_x'}
}
We prove this inequality by contradiction. If \eqref{eqn: ub_loss_x'} is not correct, then there exists $h'\in \Hcal_{d(\Hcal)-d(\Rcal)}$, such that 
\ea{
L_{0-1}^{\mu'_X\mu_{Y|X}}(h')< L_{0-1}^{\mu'_X\mu_{Y|X}}(\hat{h}). \label{eqn: contradiction_assumption}
}
Then, we define a function $h'':\Xcal\to\{0, 1\}$ as below
\ea{
h''(\xv)=\left\{\begin{array}{c c}
     h'(\xv)&r^*(\xv)=0 \\
     \hat{h}(\xv)&r^*(\xv)=1 
\end{array}\right. .
}
Using the definition of $\Hcal_d$ and since $\mathrm{supp}(r^*)\leq d(\Rcal)$, one could show that $h''\in \Hcal_{d(\Hcal)}$. Furthermore, we have
\ea{
L_{0-1}^{\mu_{X}\mu_{Y|X}}(h'') &= \Pr\big(r^*(X)=0\big) L_{0-1}^{\mu'_X\mu_{Y|X}}(h')+\Pr\big(r^*(X)=1\big) L_{0-1}^{\mu_{X|r^*(X)=1}\mu_{Y|X}}(\hat{h})\\
&\overset{(a)}{<} \Pr\big(r^*(X)=0\big)L_{0-1}^{\mu'_X\mu_{Y|X}}(\hat{h})+\Pr\big(r^*(X)=1\big) L_{0-1}^{\mu_{X|r^*(X)=1}\mu_{Y|X}}(\hat{h})\\
&= L_{0-1}^{\mu_X\mu_{Y|X}}(\hat{h}), \label{eqn: contradiction_result}
}
where $(a)$ holds using \eqref{eqn: contradiction_assumption}, and \eqref{eqn: contradiction_result} and $\hat{h}\in \Hcal_{d(\Hcal)}$ is a contradiction of \eqref{eqn: htilde_def}.

Using \eqref{eqn: def_big_D}, \eqref{eqn: lb_loss_x'}, \eqref{eqn: ub_loss_x'}, and since $\Pr(r^*(X)=0)\leq 1$, the proof is complete.
\end{proof}

\begin{lemma} \label{lem: optimal_Hd}
Let $\mu_{X}$ be a purely atomic measure on $\Xcal$. Further, let $\{\xv_{i, 1}\}_i$ be the points in $\Xcal$ for which we have
\ea{
\Pr(Y=1|X=\xv_{i, 1})\leq \Pr(Y=0|X=\xv_{i, 1}), \label{eqn: assumption_1}
}
and without loss of generality, assume that $\{\xv_{i, 2}\}$ are the points for which we have
\ea{
\Pr(Y=1|X=\xv_{i, 2})> \Pr(Y=0|X=\xv_{i, 2}),
}
and if $i<j$, then we have
\ea{
&\Pr(X=\xv_{i, 2})\big[\Pr(Y=1|X=\xv_{i, 2})-\Pr(Y=0|X=\xv_{i, 2})\big]\nonumber\\&\geq \Pr(X=\xv_{j, 2})\big[\Pr(Y=1|X=\xv_{j, 2})-\Pr(Y=0|X=\xv_{j, 2})\big]. \label{eqn: decreasing_order}
}

Then, we have
\ea{
\min_{h\in \Hcal_d} L_{0-1}^{\mu_X\mu_{Y|X}}(h) &= \sum_{i}\Pr(\xv_{i, 1}) \Pr(Y=1|X=\xv_{i, 1})+\sum_{i=1}^{d}\Pr(\xv_{i, 2})\Pr(Y=0|X=\xv_{i, 2}) \nonumber\\&~~~+\sum_{i=d+1}^{\infty} \Pr(\xv_{i, 2})\Pr(Y=1|X=\xv_{i, 2}), \label{eqn: min_Hd}
}
where $\Hcal_d$ is defined as in Lemma \ref{lem: diff_ub}.
\end{lemma}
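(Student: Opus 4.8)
The plan is to characterize the optimal classifier in $\Hcal_d$ directly. Observe that any $h \in \Hcal_d$ predicts $1$ on a set $S = \{\xv : h(\xv)=1\}$ of size at most $d$, and predicts $0$ everywhere else. The $0$--$1$ loss decomposes additively over atoms of $\mu_X$: for an atom $\xv$ with $h(\xv)=0$ the contribution is $\Pr(X=\xv)\Pr(Y=1\mid X=\xv)$, while for $h(\xv)=1$ it is $\Pr(X=\xv)\Pr(Y=0\mid X=\xv)$. Hence
\[
L_{0-1}^{\mu_X\mu_{Y|X}}(h) = \sum_{\xv \notin S} \Pr(X=\xv)\Pr(Y=1\mid X=\xv) + \sum_{\xv \in S} \Pr(X=\xv)\Pr(Y=0\mid X=\xv).
\]
Rewriting the second sum as $\sum_{\xv\in S}\Pr(X=\xv)\Pr(Y=1\mid X=\xv) - \sum_{\xv\in S}\Pr(X=\xv)\big[\Pr(Y=1\mid X=\xv)-\Pr(Y=0\mid X=\xv)\big]$ and adding it to a full sum, we get
\[
L_{0-1}^{\mu_X\mu_{Y|X}}(h) = \sum_{\xv}\Pr(X=\xv)\Pr(Y=1\mid X=\xv) - \sum_{\xv\in S}\Pr(X=\xv)\big[\Pr(Y=1\mid X=\xv)-\Pr(Y=0\mid X=\xv)\big].
\]
So minimizing the loss over $h \in \Hcal_d$ is equivalent to \emph{maximizing} $\sum_{\xv\in S}\Pr(X=\xv)\big[\Pr(Y=1\mid X=\xv)-\Pr(Y=0\mid X=\xv)\big]$ over sets $S$ with $|S|\le d$.

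Next I would argue that the maximizing $S$ picks only atoms where the bracket $\Pr(Y=1\mid X=\xv)-\Pr(Y=0\mid X=\xv)$ is strictly positive (including such an atom can only help; including one with non-positive bracket can only hurt), i.e.\ $S$ is a subset of $\{\xv_{i,2}\}$. Among those, by the decreasing-order assumption \eqref{eqn: decreasing_order} the quantity $\Pr(X=\xv_{i,2})\big[\Pr(Y=1\mid X=\xv_{i,2})-\Pr(Y=0\mid X=\xv_{i,2})\big]$ is largest for the smallest indices, so the greedy choice $S = \{\xv_{1,2},\dots,\xv_{d,2}\}$ is optimal (a standard exchange argument: if an optimal $S$ omits some $\xv_{i,2}$ with $i\le d$ but contains some $\xv_{j,2}$ with $j>d$, swapping them does not decrease the objective). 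Substituting this $S$ back into the decomposition and splitting the full sum over $\{\xv_{i,1}\}$, $\{\xv_{1,2},\dots,\xv_{d,2}\}$, and $\{\xv_{d+1,2},\dots\}$ yields exactly \eqref{eqn: min_Hd}: on the $\xv_{i,1}$ atoms and on the tail atoms $\xv_{i,2}$ with $i>d$ the classifier predicts $0$ and pays $\Pr(Y=1\mid\cdot)$, while on $\xv_{1,2},\dots,\xv_{d,2}$ it predicts $1$ and pays $\Pr(Y=0\mid\cdot)$.

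The only genuine subtlety, and the step I would be most careful about, is that $\Xcal$ may carry infinitely many atoms, so I should note that the optimum is still attained: the bracket values are summable (bounded by $\sum_\xv \Pr(X=\xv)\le 1$), so the supremum of $\sum_{\xv\in S}\Pr(X=\xv)(\cdots)$ over $|S|\le d$ is a maximum, achieved by the $d$ atoms with the largest positive contributions — which the ordering in \eqref{eqn: decreasing_order} identifies as $\xv_{1,2},\dots,\xv_{d,2}$. Everything else is bookkeeping with the conditional decomposition of the expectation. (I would also remark that one must handle possible ties in \eqref{eqn: decreasing_order} by "without loss of generality" choosing a consistent tie-break, which does not affect the value of the minimum.)
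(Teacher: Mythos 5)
Your proposal is correct and follows essentially the same route as the paper: rewrite the loss as a baseline term minus the "gain" $\sum_{\xv\in S}\Pr(X=\xv)\big[\Pr(Y=1\mid X=\xv)-\Pr(Y=0\mid X=\xv)\big]$ over the support $S$ of $h$, observe that the optimal $S$ contains only atoms with positive gain, and conclude by the ordering assumption \eqref{eqn: decreasing_order} that the top $d$ atoms $\xv_{1,2},\dots,\xv_{d,2}$ are optimal. The paper packages the first observation as a contradiction argument showing $h^*(\xv_{i,1})=0$ up to ties and then performs the same subset maximization, so the two proofs differ only in presentation; your remarks on attainment over infinitely many atoms and tie-breaking are sound.
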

\begin{proof}
Let $h^*$ be the optimal classifier
\ea{
h^* = \argmin_{h\in \Hcal_d} L_{0-1}^{\mu_X\mu_{Y|X}}(h).\label{eqn: optimal_class}
}

Then, firstly, either $h(\xv_{i, 1})=0$, or $\Pr(Y=1|X=\xv_{i, 1})=\Pr(Y=0|X=\xv_{i, 1})$ for all $i$s. We prove this claim by contradiction. If for some $i$, we have $h(\xv_{i, 1})=1$, and $\Pr(Y=1|X=\xv_{i, 1})\neq\Pr(Y=0|X=\xv_{i, 1})$, then we could define $h'$ as
\ea{
h'(\xv)=\left\{\begin{array}{cc}
     h^*(\xv)& \xv\neq \xv_{i, 1}  \\
     0& \xv = \xv_{i, 1}
\end{array}\right. .
}
One could see that $h'\in \Hcal_{d}$, and that 
\ea{
L_{0-1}^{\mu_{X}\mu_{Y|X}}(h')-L_{0-1}^{\mu_{X}\mu_{Y|X}}(h^*) = \Pr(\xv_{i, 1}) \big[\Pr(Y=1|X=\xv_{i, 1})-\Pr(Y=0|X=\xv_{i, 1})\big]\overset{(a)}{<}0, \label{eqn: contradiction_optimality}
}
where $(a)$ holds by \eqref{eqn: assumption_1} and since $\big[\Pr(Y=1|X=\xv_{i, 1})\neq\Pr(Y=0|X=\xv_{i, 1})\big]$.  The inequality \eqref{eqn: contradiction_optimality} has contradiction with \eqref{eqn: optimal_class}.

As a result, by forming a set $\Scal$ of indices $i$ for which $h^*(\xv_{i, 1})=1$, we have
\ea{
\min_{h\in \Hcal_d} L_{0-1}^{\mu_{X}\mu_{Y|X}}(h)&= \sum_{i\notin \Scal} \Pr(\xv_{i, 1})\Pr(Y=1|X=\xv_{i, 1})+ \sum_{i\in \Scal}\Pr(\xv_{i, 1})\Pr(Y=0|X=\xv_{i, 1})\nonumber\\
&~~~~+\sum_{i}\Pr(\xv_{i, 2})\Pr(Y=1|X=\xv_{i, 2})\nonumber\\
&~~~~+\min_{|S|}\min_{h\in \Hcal_{d-|\Scal|}} \sum_{i} \bI_{h(\xv_{i, 2})=1}\Pr(\xv_{i, 2})\big[\Pr(Y=0|X=\xv_{i, 2})-\Pr(Y=1|X=\xv_{i, 2})\big]\\
&\overset{(a)}{=} \sum_{i} \Pr(\xv_{i, 1})\Pr(Y=1|X=\xv_{i, 1})+\sum_{i}\Pr(\xv_{i, 2})\Pr(Y=1|X=\xv_{i, 2})\nonumber\\
&~~~~+\min_{h\in \Hcal_d}\sum_{i} \bI_{h(\xv_{i, 2})=1}\Pr(\xv_{i, 2})\big[\Pr(Y=0|X=\xv_{i, 2})-\Pr(Y=1|X=\xv_{i, 2})\big]\\
&\overset{(b)}{=}\sum_{i} \Pr(\xv_{i, 1})\Pr(Y=1|X=\xv_{i, 1})+\sum_{i}\Pr(\xv_{i, 2})\Pr(Y=1|X=\xv_{i, 2})\nonumber\\
&~~~~-\max_{\Pcal: |\Pcal|\leq d} \sum_{i\in \Pcal}\Pr(\xv_{i, 2})\big[\Pr(Y=0|X=\xv_{i, 2})-\Pr(Y=1|X=\xv_{i, 2})\big]\\
&\overset{(c)}{=}\sum_{i} \Pr(\xv_{i, 1})\Pr(Y=1|X=\xv_{i, 1})+\sum_{i}\Pr(\xv_{i, 2})\Pr(Y=1|X=\xv_{i, 2})\nonumber\\
&~~~~+\sum_{i=1}^d\Pr(\xv_{i, 2})\big[\Pr(Y=0|X=\xv_{i, 2})-\Pr(Y=1|X=\xv_{i, 2})\big], \label{eqn: final_Hd_optimal}
}
where $(a)$ holds using that for $i\in \Scal$ we have $\big[\Pr(Y=1|X=\xv_{i, 1})=\Pr(Y=0|X=\xv_{i, 1})\big]$, and since $\Hcal_{d-|\Scal|}\subseteq \Hcal_{d}$. Further, $(b)$ holds by the definition of $\Hcal_{d}$ in which $\mathrm{supp}(h)$ is assumed to be bounded by $d$, and, $(c)$ holds using the assumption \eqref{eqn: decreasing_order}. Finally, one could see that \eqref{eqn: final_Hd_optimal} is equal to \eqref{eqn: min_Hd}. 
\end{proof}
\begin{lemma} \label{lem: ordered}
For an ordered probability mass function
\ea{
p_1\leq p_2\leq  \ldots\leq p_n,
}
on a finite set, and for $a\in\{1, \ldots, n\}$, we have
\ea{
\sum_{i=1}^{a}p_i\leq \frac{a}{n}.
}
\begin{proof}
We prove this lemma by contradiction. Assume that
\ea{
\sum_{i=1}^{a} p_i> a.
}
Since $p_i$s are ordered, one could see that for all sets $\Scal_t\subseteq \{1, \ldots, n\}$ with $|\Scal_t|=a$, we have
\ea{
\sum_{i\in \Scal_t} p_i >\frac{a}{n}.
}
We know that $n\choose a$ number of such distinct sets exist. Hence, we have
\ea{
\sum_{t=1}^{a \choose n} \sum_{i\in \Scal_t} p_i>\frac{a}{n} {n \choose a}.\label{eqn: combination_ub}
}
Moreover, one could see that for each $i$, $p_i$ is repeated in LHS of \eqref{eqn: combination_ub} for ${n-1 \choose {a-1}}$ times. Consequently, we see that
\ea{
{{n-1}\choose {a-1}} = \sum_{j=1}^{{n-1}\choose {a-1}} \sum_{i=1}^n p_i >\frac{a}{n} {n \choose a} = {{n-1}\choose {a-1}},
}
that is a contradiction.
\end{proof}
\end{lemma}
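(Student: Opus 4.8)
The plan is to exploit the monotonicity of the $p_i$ directly, turning the statement into the elementary fact that the average of the $a$ smallest masses cannot exceed the global average $1/n$. Write $S = \sum_{i=1}^a p_i$ for the quantity of interest, and note that $\sum_{j=a+1}^n p_j = 1 - S$ since the $p_i$ form a probability mass function summing to $1$. The goal is then reduced to showing $S \le a/n$.

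The key step is a pairwise comparison driven by the ordering $p_1 \le \cdots \le p_n$. For any index $i \le a$ and any index $j > a$ we have $p_i \le p_a \le p_j$. Fixing $i \le a$ and summing the inequality $p_i \le p_j$ over the $n-a$ indices $j > a$ gives $(n-a)\,p_i \le \sum_{j=a+1}^n p_j$. Summing this in turn over $i = 1, \ldots, a$ yields $(n-a)\,S \le a \sum_{j=a+1}^n p_j = a(1-S)$. Rearranging, $(n-a)S + aS \le a$, i.e.\ $nS \le a$, which is exactly $S \le a/n$, completing the proof.

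I expect no genuine obstacle here, since the statement is elementary and the monotonicity does all the work; the only point requiring care is keeping the comparison non-circular, namely comparing the first block of $a$ terms against the second block of $n-a$ terms (rather than against the total) so that the tail identity $\sum_{j>a}p_j = 1-S$ is invoked exactly once at the end. If one prefers a symmetry argument, an alternative route is to assume for contradiction that $S > a/n$: because $p_1, \ldots, p_a$ realize the minimum of $\sum_{i \in \mathcal{S}} p_i$ over all size-$a$ subsets $\mathcal{S} \subseteq \{1,\dots,n\}$, every such subset would then have sum strictly exceeding $a/n$, and summing over all $\binom{n}{a}$ subsets, in which each $p_i$ appears $\binom{n-1}{a-1}$ times, gives $\binom{n-1}{a-1} = \binom{n-1}{a-1}\sum_i p_i > \frac{a}{n}\binom{n}{a} = \binom{n-1}{a-1}$, a contradiction. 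Either argument closes the lemma immediately.
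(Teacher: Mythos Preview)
Your proof is correct. In fact you give two arguments, and the \emph{second} one you sketch---the symmetry/contradiction route summing over all $\binom{n}{a}$ size-$a$ subsets and using that each $p_i$ appears $\binom{n-1}{a-1}$ times---is exactly the paper's proof.

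Your \emph{primary} argument, however, is genuinely different and more direct than the paper's. The paper goes by contradiction through a combinatorial double-count over all $\binom{n}{a}$ subsets; you instead compare the first block $\{1,\dots,a\}$ pairwise against the tail block $\{a+1,\dots,n\}$, obtaining $(n-a)S \le a(1-S)$ and hence $nS \le a$ in two lines. This avoids the detour through the full subset lattice and makes transparent that the result is just ``the average of the smallest $a$ values is at most the global average.'' The paper's approach has the mild advantage of being a standard averaging-over-symmetries trick that generalizes cleanly; your approach is shorter, constructive, and arguably the more natural proof of this particular inequality.
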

\begin{proof}[Proof of Theorem \ref{thm: vc}]
We derive the lower- and upper-bound in two steps as follows.

\noindent
	$\bullet$ {\bfseries Lower-bound}:
	To prove the lower-bound, for every hypothesis class $\Hcal$ and $\Rcal$, we design a distribution of $(x, y, m)$ such that 
\ea{
\defl(h^*, r^*)=0,
}
while
\ea{
\defl(\tilde{h}, \tilde{r}) = \frac{1}{d(\Hcal)+1}.
}

For every $d(\Hcal)+1$ samples $\Xv=(\xv_1, \ldots, \xv_{d(\Hcal)+1})$, using the definition of VC dimension, we can find labels $\yv = (y_1, \ldots, y_{d(\Hcal)+1})$ such that no classifier $h\in \Hcal$ can obtain them (i.e., there is no $h$ such that $h(\xv_i)=y_i$, for $i\in [1:d(\Hcal)+1]$). We set
\ea{
p(\xv_i) =\left\{\begin{array}{cc}
    \frac{1+\ep}{d(\Hcal)+1} & i=1 \\
     \frac{1}{d(\Hcal)+1}& i\in [2:d(\Hcal)]\\
     \frac{1-\ep}{d(\Hcal)+1} & i=d(\Hcal)+1
\end{array}\right.,
}
\ea{
p(y|\xv_i)= \left\{\begin{array}{cc}
     1& y= y_i, \\
     0 & o.w.
\end{array}\right.,
}
and
\ea{
p(m|\xv_i, y)= \left\{\begin{array}{cc}
     1& m=y_i, i=1, \\
     1 & m=1-y_i, i=d(\Hcal)+1,\\
     0& o.w.
\end{array}\right. .}

If we train $\hat{h}$ and $\hat{r}$ separately, it means
\ea{
\hat{h} &= \argmin_{h\in \Hcal} \Ebb_{(\xv, y, m)\sim p}  [{\bI}_{h(\xv) \neq y} ] \\
&= \argmin_{h\in \Hcal} \frac{1+\ep}{d(\Hcal)+1} \bI_{h(\xv_1)\neq y_1} +\sum_{i=2}^{d(\Hcal)} \frac{1}{d(\Hcal)+1} \bI_{h(\xv_i)\neq y_i} + \frac{1-\ep}{d(\Hcal)+1} \bI_{h(\xv_{d(\Hcal)+1})\neq y_{d(\Hcal)+1}}.\label{eqn: sum_l_h}
}

By the definition of $\yv$, we know that at least one of the terms in the RHS of \eqref{eqn: sum_l_h} is non-zero. In such case, for every subset $T$ of $[1:d(\Hcal)+1]$ of size $d(\Hcal)$, one can find $h\in \Hcal$, such that $h(\xv_i)=y_i$ for $i\in T$. Hence, to minimize RHS of \eqref{eqn: sum_l_h}, we should have $\hat{h}(\xv_i)\neq y_i$ only for $i=d(\Hcal)+1$.

Further, $\hat{r}$ is obtained as 
\ea{
\hat{r} = \argmin_{r\in \Rcal} \Ebb_{(\xv, y, m)\sim p} [{\bI}_{\hat{h}(\xv) \neq y }{\bI}_{r(\xv) = 0 } + {\bI}_{m \neq y} {\bI}_{r(\xv)=1}]. \label{eqn: min_l_r}
}
By the definition of $p(m|y, \xv)$ and $\hat{h}$, we can rewrite \eqref{eqn: min_l_r} as 
\ea{
\hat{r} = \argmin_{r\in \Rcal} \frac{1-\ep}{d(\Hcal)+1} \bI_{r(\xv_{d+1})=1}+\frac{1-\ep}{d(\Hcal)+1} \bI_{r(\xv_{d+1})=0}.
}
One can see that by any choice of $\hat{r}(\cdot)$, we have
\ea{
\defl(\hat{h}, \hat{r}) = \frac{1-\ep}{d(\Hcal)+1}.
}
Finally, by the arbitrariness of $\ep$, we have
\ea{
\defl(\hat{h}, \hat{r}) = \frac{1}{d(\Hcal)+1}.
}

Further, we prove that $\defl(h^*, r^*)=0$ by constructing $h^*$ and $r^*$. Since $d(\Rcal)\geq 2$, we can shatter $\{\xv_1, \xv_{d(\Hcal)+1}\}$ by $\Rcal$, which means that there exists $r^*\in \Rcal$ such that $r(\xv_1)=1$, and $r(\xv_{d(\Hcal)+1})=0$. As a result, we have
\ea{
\defl (h^*, r^*) = \sum_{i=2}^{d(\Hcal)} \frac{1}{d(\Hcal)+1}\bI_{r^*(\xv_i)=0}\bI_{h^*(\xv_i)\neq y_i}+\frac{1-\ep}{d(\Hcal)+1}\bI_{h^*(\xv_{d(\Hcal)+1})\neq y_i}.\label{eqn: l01_joint}
}
Since VC dimension of $\Hcal$ is $d(\Hcal)$, we can find $h^*$ such that all terms in the RHS of \eqref{eqn: l01_joint} is zero. Hence, we have $\defl(h^*, r^*)=0$, that completes the proof.

\noindent
	$\bullet$ {\bfseries Upper-bound}: For $d(\Hcal)\leq d(\Rcal)$ the upper-bound is trivial. Then, we asssume $d(\Hcal)>d(\Rcal)$. Let $D_{\mu_{XYM}}$ be
	\ea{
	D_{\mu_{XYM}}^{\Hcal, \Rcal} = L_{\mathrm{def}}^{\mu_X\mu_{Y|X}}(\hat{h}, \hat{r}) - L_{\mathrm{def}}^{\mu_X\mu_{Y|X}}(h^*, r^*).
	}
	To upper-bound $\inf_{\Hcal, \Rcal}\sup_{\mu_{XYM}} D_{\mu_{XYM}}^{\Hcal, \Rcal}$, we find a pair of hypothesis classes $\Hcal$ and $\Rcal$, such that for all joint probability measures $\mu_{XYM}$, we have $D_{\mu_{XYM}}^{\Hcal, \Rcal}\leq \frac{d(\Rcal)}{d(\Hcal)}$.
	
	We choose $\Hcal = \Hcal_{d(\Hcal)}$, and $\Rcal = \Hcal_{d(\Rcal)}$, where $\Hcal_{d}$ is defined in Lemma \ref{lem: diff_ub}. One could check that $VC(\Hcal)= d(\Hcal)$, and $VC(\Rcal)=d(\Rcal)$.  Further, using Lemma \ref{lem: bound_diff_discrete}, we know that $D_{\mu_{XYM}}^{\Hcal, \Rcal}$ is bounded by $D_{\mu'_{XYM}}^{\Hcal, \Rcal}$, in which $\mu'_X$ is purely atomic. For such measures, Lemma \ref{lem: diff_ub} proves that
	\ea{
	D_{\mu_{XYM}}^{\Hcal, \Rcal} \leq \min_{h\in \Hcal_{d(\Hcal)-d(\Rcal)}} L_{0-1}^{\mu_X'\mu_{Y|X}}(h)-\min_{h\in \Hcal_{d(\Hcal)}} L_{0-1}^{\mu'_X\mu_{Y|X}}(h).
	}
	As a result, we have
	\ea{
	\sup_{\mu_{XYM}}D_{\mu_{XYM}}^{\Hcal, \Rcal}&\leq \sup_{\mu_{XYM}: \, \mu_X\, \text{ atomic}}D_{\mu_{XYM}}^{\Hcal, \Rcal} \\&\leq \sup_{\mu_{XY}: \, \mu_X\, \text{ atomic}}\min_{h\in \Hcal_{d(\Hcal)-d(\Rcal)}} L_{0-1}^{\mu_X\mu_{Y|X}}(h)-\min_{h\in \Hcal_{d(\Hcal)}} L_{0-1}^{\mu_{X}\mu_{Y|X}}(h).
	}
	Next, by applying Lemma \ref{lem: optimal_Hd}, we have
	\ea{
	\sup_{\mu_{XYM}}D_{\mu_{XYM}}^{\Hcal, \Rcal}\leq \sup_{\mu_{XY}: \, \mu_X\, \text{ atomic}} \sum_{i=d(\Hcal)-d(\Rcal)+2}^{d(\Hcal)} \Pr(x_{i, 2}) \big[\Pr(Y=1|X=x_{i, 2})-\Pr(Y=0|X=x_{i, 2})\big], \label{eqn: last_ub}
	}
	where $\{x_{i, 2}\}_i$ are defined in Lemma \ref{lem: optimal_Hd}. 
	
	Since $\Pr(Y=1|X=x_{i, 2})> \Pr(Y=0|X=x_{i, 2})$ we could define 
	\ea{
	q_i = \frac{\Pr(x_{i, 2}) \big[\Pr(Y=1|X=x_{i, 2})-\Pr(Y=0|X=x_{i, 2})\big]}{\sum_{j=1}^{d(\Hcal)}\Pr(x_{j, 2}) \big[\Pr(Y=1|X=x_{j, 2})-\Pr(Y=0|X=x_{j, 2})\big]}.
	}
	Then, by the definition of $x_{i, 2}$ we know that
	\ea{
	q_1\geq q_2\geq \ldots \geq q_{d(\Hcal)},
	}
	and $\sum_{i=1}^{d(\Hcal)} q_i = 1$. Hence, using Lemma \ref{lem: ordered} we have
	\ea{
	\frac{\sum_{j=d(\Hcal)-d(\Rcal)+1}^{d(\Hcal)}\Pr(x_{j, 2}) \big[\Pr(Y=1|X=x_{j, 2})-\Pr(Y=0|X=x_{j, 2})\big]}{\sum_{j=1}^{d(\Hcal)}\Pr(x_{j, 2}) \big[\Pr(Y=1|X=x_{j, 2})-\Pr(Y=0|X=x_{j, 2})\big]} = \sum_{j=d(\Hcal)-d(\Rcal)+1}^{d(\Hcal)} q_i \leq \frac{d(\Rcal)}{d(\Hcal)},
	}
	which concludes that
	\ea{
	&\sum_{j=d(\Hcal)-d(\Rcal)+1}^{d(\Hcal)}\Pr(x_{j, 2}) \big[\Pr(Y=1|X=x_{j, 2})-\Pr(Y=0|X=x_{j, 2})\big]\nonumber\\&\leq \frac{d(\Rcal)}{d(\Hcal)} \sum_{j=1}^{d(\Hcal)}\Pr(x_{j, 2}) \big[\Pr(Y=1|X=x_{j, 2})-\Pr(Y=0|X=x_{j, 2})\big]\leq \frac{d(\Rcal)}{d(\Hcal)}.
	}
	This, together with \eqref{eqn: last_ub} completes the proof.
\end{proof}

\section{Proof of Proposition \ref{prop: gen_labeled_unlabeled}}\label{app: gen_labeled_unlabeled}

We will prove the following proposition from which Proposition \ref{prop: gen_labeled_unlabeled} can be obtained from by re-arranging the terms.

Let $\Scal_{l}=\{(\xv_i, y_i, m_i)\}_{i=1}^{n_l}$ and $\Scal_{u}=\{(\xv_{i+n_l}, y_{i+n_l})\}_{i=1}^{n_u}$ be two iid sample sets that are drawn from the joint distribution $P_{X, Y, M}$ and are labeled and not labeled by human, respectively. 
Assume that the optimal classifier $\bar{h}=\argmin_{h}\Ebb_{X, Y\sim \mu_{XY}}[\bI_{h(X)\neq Y}]$ is a member of $\Hcal$ (i.e., realizability).
Then, with probability at least $1-\delta$ we have
\begin{align}
&L_{\mathrm{def}}^{0{-}1}(\hat{r}, \hat{h})\leq
	L_{\mathrm{0-1}}(h^*,r^*)+ \bm{\Rad_{n_u}(\Hcal)}+2\Rad_{n_l}(\Rcal)\nonumber\\&+2\min\big\{\Pr(M\neq Y), \Rad_{n_l\Pr(M\neq Y)/2}(\Rcal)\big\}+C\sqrt{\frac{\log 1/\delta}{n_l}}+e^{-n_{l}\Pr(M\neq Y)^2/2} \nonumber\\
	&+{\bm{C'\sqrt{\frac{\log 1/\delta}{n_u}}}}\label{eqn: gen_labeled_unlabeled_app}
\end{align}
where  $h^*, r^*=\arg\min_{(h, r)\in \Hcal\times \Rcal} L_{0-1}(h, r)$.

Compare this to using only $S_l$ to learn jointly $\tilde{h}, \tilde{r}$ we get \cite{mozannar2020consistent}\footnote{Note that in \cite{mozannar2020consistent}, they set the notation in a manner that $r\in \{-1, 1\}$. Hence, $\Rad_{n}(\Rcal)$ under such notation is twice as much as the case in this paper (i.e., $r\in \{0, 1\}$). Here, we express their results with our choice of notation.}:
\begin{align}
&L_{\mathrm{def}}^{0{-}1}(\tilde{r}, \tilde{h})\leq
	L_{\mathrm{0-1}}(h^*,r^*)+\bm{\Rad_{n_l}(\Hcal)}+2\Rad_{n_l}(\Rcal)\nonumber\\&+2\Rad_{n_l\Pr(M\neq Y)/2}(\Rcal)+C'\sqrt{\frac{\log 1/\delta}{n_l}}\nonumber\\&+\tfrac{\Pr(M\neq Y)}{2}e^{-n\Pr(M\neq Y)/2}
\end{align}

We start by introducing some useful lemmas, and then we continue with the proof of proposition.

\begin{lemma} \label{lem: defer_less}
Let $h^*(x)=\argmin_{h\in \Fcal} L_{0-1}(h)$, where $\Fcal$ is the class of all functions $h(\cdot): \Xcal\to \Ycal$. Then, for every function $r(\cdot):\Xcal\to \{0, 1\}$, we have
\ea{
\Ebb_{X, Y} \big[\bI_{r(X)=0} \bI_{h^*(x)\neq y}\big]\leq \Ebb_{X, Y} \big[\bI_{r(X)=0} \bI_{h(x)\neq y}\big],
}
for all function $h(\cdot)\in \Fcal$.
\end{lemma}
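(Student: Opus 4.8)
The plan is to reduce the claim to a pointwise statement about conditional misclassification probabilities. First I would condition on $X$: writing $\eta_h(x) := \Pr\big(Y\neq h(x)\mid X=x\big)$, the tower rule gives
\[
\Ebb_{X,Y}\big[\bI_{r(X)=0}\bI_{h(X)\neq Y}\big] = \Ebb_X\big[\bI_{r(X)=0}\,\eta_h(X)\big]
\]
for every $h$ and every rejector $r$. The key observation is that because $\Fcal$ is the class of \emph{all} functions $\Xcal\to\Ycal$, the minimizer $h^*$ of $L_{0-1}(h)=\Ebb_X[\eta_h(X)]$ may be taken to be a pointwise minimizer: for $\mu_X$-almost every $x$,
\[
\eta_{h^*}(x) = \min_{a\in\Ycal}\Pr\big(Y\neq a\mid X=x\big).
\]
Indeed, if some $h$ failed this on a set of positive $\mu_X$-measure, redefining it there to a pointwise minimizer would strictly decrease $L_{0-1}$, contradicting optimality; such a measurable selection exists since $\Ycal$ is finite. (The pointwise $\arginf$ need not be unique, but the argument works for any measurable version, which is all the lemma asserts.)

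Second, from the displayed identity, for an arbitrary $h\in\Fcal$ we have $\eta_{h^*}(x)\le\eta_h(x)$ for $\mu_X$-a.e.\ $x$; multiplying by the nonnegative factor $\bI_{r(x)=0}$ and integrating against $\mu_X$ preserves the inequality, so
\[
\Ebb_X\big[\bI_{r(X)=0}\,\eta_{h^*}(X)\big]\le\Ebb_X\big[\bI_{r(X)=0}\,\eta_h(X)\big],
\]
which is exactly the claim after undoing the conditioning.

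There is no genuinely hard step here; the only point requiring a little care is the measure-theoretic justification that the unconstrained Bayes classifier is a pointwise minimizer. Once that is in hand, its optimality \emph{restricted} to the event $\{r(X)=0\}$ for an arbitrary rejector $r$ is immediate from nonnegativity of the indicator $\bI_{r(X)=0}$. I would also remark that this lemma is precisely what later allows the staged-learning analysis in Proposition~\ref{prop: gen_labeled_unlabeled} to upper bound the non-deferred part of the deferral loss by substituting the Bayes classifier for the separately learned classifier.
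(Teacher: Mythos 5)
Your proof is correct and follows essentially the same route as the paper's: identify the unconstrained minimizer $h^*$ as a pointwise (conditional-risk) minimizer, deduce $\Ebb_{Y|X=x}[\bI_{h^*(x)\neq Y}]\leq \Ebb_{Y|X=x}[\bI_{h(x)\neq Y}]$ almost everywhere, and integrate against the nonnegative weight $\bI_{r(X)=0}$. Your added remarks on measurable selection and the a.e.\ contradiction argument merely make explicit a step the paper treats as ``easy to show.''
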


\begin{proof}
Since $h^*(\cdot)$ could be any function, it is easy to show that for $x\in \Dcal$, where $\Dcal = \{x: f_X(x)\neq 0\}$, we have
\ea{
h^*(x)=\argmin_{v} \Ebb_{Y|X=x}[\bI_{v\neq Y}],
}
which concludes that 
\ea{
\Ebb_{Y|X=x}[\bI_{h^*(x)\neq Y}]\leq \Ebb_{Y|X=x}[\bI_{h(x)\neq Y}],
}
for all $h(\cdot)\in \Fcal$. Hence, we have
\ea{
\Ebb_{X, Y}[\bI_{r(X)=0}\bI_{h^*(X)\neq Y}]&=\Ebb_{X}\big[\bI_{r(X)=0}\Ebb_{Y|X=x}[\bI_{h^*(x)\neq Y}]\big]\\&\leq \Ebb_{X}\big[\bI_{r(X)=0}\Ebb_{Y|X=x}[\bI_{h(x)\neq Y}]\\&=\Ebb_{X, Y}[\bI_{r(X)=0}\bI_{h(X)\neq Y}],
}
which completes the proof.

\end{proof}

\begin{lemma} \label{lem: r_h_lem}
Let $h^*(x)=\argmin_{h\in \Fcal} L_{0-1}(h)$, where $\Fcal$ is the class of all functions $h(\cdot): \Xcal\to \Ycal$. If we have $h^*(\cdot)\in \Hcal$, then there exists $r\in \Rcal$ such that the pair $(h^*, r)$ is a minimizer of the optimization problem
\ea{
\argmin_{(h, r)\in \Hcal\times \Rcal} \Ebb_{X, Y, M}\big[\bI_{r(X)=0}\bI_{h(X)\neq Y}+\bI_{r(X)=1} \bI_{M\neq Y}\big].
}
\end{lemma}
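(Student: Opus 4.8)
The plan is to start from an arbitrary minimizer of the joint problem and show that we may swap in $h^*$ for its classifier component without increasing the objective. Concretely, let $(\tilde{h},\tilde{r})\in\Hcal\times\Rcal$ be any pair attaining
\[
\min_{(h,r)\in\Hcal\times\Rcal}\ \Ebb_{X,Y,M}\big[\bI_{r(X)=0}\bI_{h(X)\neq Y}+\bI_{r(X)=1}\bI_{M\neq Y}\big],
\]
and set $r:=\tilde{r}$. I will argue that $(h^*,r)$ is also a minimizer.

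The key step is to compare $L_{\mathrm{def}}^{0-1}(h^*,\tilde{r})$ with $L_{\mathrm{def}}^{0-1}(\tilde{h},\tilde{r})$. Writing out both,
\[
L_{\mathrm{def}}^{0-1}(h,\tilde{r})=\Ebb_{X,Y}\big[\bI_{\tilde{r}(X)=0}\bI_{h(X)\neq Y}\big]+\Ebb_{X,Y,M}\big[\bI_{\tilde{r}(X)=1}\bI_{M\neq Y}\big],
\]
the second summand does not depend on the classifier, so it suffices to control the first summand. Since $h^*=\argmin_{h\in\Fcal}L_{0-1}(h)$ is the unrestricted Bayes classifier, Lemma~\ref{lem: defer_less} applied with the rejector $\tilde{r}$ gives
\[
\Ebb_{X,Y}\big[\bI_{\tilde{r}(X)=0}\bI_{h^*(X)\neq Y}\big]\leq\Ebb_{X,Y}\big[\bI_{\tilde{r}(X)=0}\bI_{\tilde{h}(X)\neq Y}\big],
\]
because $\tilde{h}\in\Hcal\subseteq\Fcal$. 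Adding the (common) expert term to both sides yields $L_{\mathrm{def}}^{0-1}(h^*,\tilde{r})\leq L_{\mathrm{def}}^{0-1}(\tilde{h},\tilde{r})$.

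Finally, I would close the argument by feasibility: the hypothesis $h^*\in\Hcal$ is exactly what is assumed, and $\tilde{r}\in\Rcal$, so $(h^*,\tilde{r})$ lies in $\Hcal\times\Rcal$; since its objective value is at most that of the minimizer $(\tilde{h},\tilde{r})$, it must itself attain the minimum. Hence $(h^*,r)$ with $r=\tilde{r}$ is a minimizer of the stated optimization problem. I do not anticipate a real obstacle here; the only point requiring a word of care is the existence of a minimizer $(\tilde{h},\tilde{r})$ to begin with, which is implicit in the statement (otherwise one replaces ``minimizer'' by ``infimizing sequence'' and passes to the limit, the inequality above being preserved along the sequence).
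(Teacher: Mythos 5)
Your proposal is correct and follows essentially the same route as the paper: both arguments hinge on Lemma~\ref{lem: defer_less} to show that, for any fixed rejector, replacing the classifier by the unrestricted Bayes classifier $h^*$ cannot increase the non-deferred error term, while the expert term is unaffected; the paper phrases this as a two-sided bound equating $\min_{(h,r)}$ with $\min_r$ at $h=h^*$, whereas you swap $h^*$ into an arbitrary minimizer, which is the same idea. Your closing remark about existence of a minimizer is a reasonable caveat that the paper leaves implicit as well.
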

\begin{proof}
We prove this lemma by showing 
\ea{
\min_{(h, r)\in \Hcal\times \Rcal} \Ebb_{X, Y, M}\big[\bI_{r(X)=0}&\bI_{h(X)\neq Y}+\bI_{r(X)=1} \bI_{M\neq Y}\big] \nonumber\\&= \min_{r\in \Rcal} \Ebb_{X, Y}\big[\bI_{r(X)=0}\bI_{h^*(X)\neq Y}+\bI_{r(X)=1} \bI_{M\neq Y}\big]. \label{eqn: equality_two_mins}
}
To show \eqref{eqn: equality_two_mins}, using Lemma \ref{lem: defer_less}, we know that 
\ea{
&\min_{(h, r)\in \Hcal\times \Rcal} \Ebb_{X, Y, M}\Big[\bI_{r(X)=0}\bI_{h(X)\neq Y}+\bI_{r(X)=1} \bI_{M\neq Y}\Big]\nonumber\\&
\geq \min_{(h, r)\in \Hcal\times \Rcal}\Ebb_{X, Y, M}[\bI_{r(X)=0}\bI_{h^*(X)\neq Y}]+\Ebb_{X, Y, M}\Big[\bI_{r(X)=1}\bI_{M\neq Y}\Big]\\
&= \min_{r\in \Rcal} \Ebb_{X, Y, M}[\bI_{r(X)=0}\bI_{h^*(X)\neq Y}]+\Ebb_{X, Y, M}\Big[\bI_{r(X)=1}\bI_{M\neq Y}\Big]. \label{eqn: lbound_min_h_r}
}

On the other hand, using the minimum property, one could show that
\ea{
&\min_{h\in \Hcal}\min_{r\in \Rcal} \Ebb\Big[\bI_{r(X)=0}\bI_{h(X)\neq Y} +\bI_{r(X)=1}\bI_{M\neq Y}\Big]\nonumber\\
&\leq \min_{r\in \Rcal} \Ebb_{X, Y, M}[\bI_{r(X)=0}\bI_{h^*(X)\neq Y}]+\Ebb_{X, Y, M}\Big[\bI_{r(X)=1}\bI_{M\neq Y}\Big].\label{eqn: ubound_min_h_r}
}
Hence, using the lower- and upper-bound in \eqref{eqn: lbound_min_h_r} and \eqref{eqn: ubound_min_h_r}, one could show \eqref{eqn: equality_two_mins} and complete the proof.
\end{proof}

\begin{proof}[Proof of Proposition \ref{prop: gen_labeled_unlabeled}]
We prove  \eqref{eqn: gen_labeled_unlabeled_app} in three steps: (i) we bound the expected $0-1$ loss of the classifier $\hat{h}$ when deferral does not happen by a function of the optimal expected $0-1$ loss in such cases, (ii) we bound the joint loss $L_{\mathrm{def}}$ by a function of the optimal joint loss and the Rademacher complexity of a hypothesis class, and (iii) we bound the Rademacher complexity of the aforementioned class by the Rademacher complexity of the deferral hypothesis class $\Rcal$.

\noindent
	$\bullet$ {\bfseries Step (i)}: Using Rademacher inequality (Theorem 3.3 of \cite{mohri2018foundations}), with probability $1-\delta/4$, we have
	\ea{
	L_{0-1}(\hat{h})\leq \hat{L}_{0-1}(\hat{h})+2\Rad_{n_u}(\Gcal)+\sqrt{\frac{\log 4/\delta}{2n_u}},\label{eqn: gen_classifier}
	}
	where $\Gcal = \{\xv, y\to \bI_{h(\xv)\neq y}\,:\, h\in \Hcal\}$.
	
	Furthermore, using \eqref{eqn: gen_classifier}, since $\hat{h}$ is an optimizer of the empirical loss in $\Hcal$ and since $h^*\in \Hcal$, with probability $1-\delta/2$ we have
	\ea{
	L_{0-1}(\hat{h}) &\leq \hat{L}_{0-1}(h^*)+2\Rad_{n_u} (\Gcal)+\sqrt{\frac{\log 4/\delta}{2n_u}}\\
	&\overset{(a)}{\leq} L_{0-1}(h^*)+2\Rad_{n_u}(\Gcal)+\frac{3\sqrt{2}}{2}\sqrt{\frac{\log 4/\delta}{n_u}},\label{eqn: gen_class_mc}
	}
	where $(a)$ holds using McDiarmid's inequality, union bound, and by that the empirical loss is $\frac{2}{n}$-bounded difference.
	
	Next, using Lemma 3.4 of \cite{mohri2018foundations} we know that $\Rad_n(\Gcal) = \frac{1}{2}\Rad_n(\Hcal)$. By means of such identity and \eqref{eqn: gen_class_mc}, with probability $1-\delta/2$ we have
	\ea{
	L_{0-1}(\hat{h})\leq L_{0-1}(h^*)+\Rad_n(\Hcal)+\frac{3\sqrt{2}}{2}\sqrt{\frac{\log 4/\delta}{n_u}}. \label{eqn: class_rad_h}
	}
	
	It remains to show that for each function $r(\cdot):\Xcal\to \{0, 1\}$, we could bound $\Ebb_{X, Y}[\bI_{r(X)=0}\bI_{\hat{h}(X)\neq Y}]$ by sum of $\Ebb_{X, Y}[\bI_{r(X)=0}\bI_{h^*(X)\neq Y}]$ and a term that is corresponded to the concentration of measure for large sample size. For proving such inequality, first we know that
	\ea{
	L_{0-1}(h^*) &= \Ebb_{X, Y}\big[\bI_{r(X)=0}\bI_{h^*(X)\neq Y}\big]+\Ebb_{X}\Big[\bI_{r(X)=1} \Ebb_{Y|X=\xv}[\bI_{h^*(X)\neq Y}]\Big]\\
	&\overset{(a)}{\leq }\Ebb\big[\bI_{r(X)=0}\bI_{h^*(X)\neq Y}\big]+\Ebb_{X}\Big[\bI_{r(X)=1} \Ebb_{Y|X=\xv}[\bI_{h(X)\neq Y}]\Big], \label{eqn: bound_0-1}
	}
	for all $h\in \Fcal$, where $(a)$ is followed by Lemma \ref{lem: defer_less}.
	
	Using \eqref{eqn: class_rad_h} and \eqref{eqn: bound_0-1}, we have
	\ea{
    &\Ebb_{X, Y}\big[\bI_{r(X)=0}\bI_{\hat{h}(X)\neq Y}\big]+\Ebb_{X}\Big[\bI_{r(X)=1} \Ebb_{Y|X=\xv}[\bI_{\hat{h}(X)\neq Y}]\Big]\nonumber\\&{\leq }\Ebb\big[\bI_{r(X)=0}\bI_{h^*(X)\neq Y}\big]+\Ebb_{X}\Big[\bI_{r(X)=1} \Ebb_{Y|X=\xv}[\bI_{\hat{h}(X)\neq Y}]\Big] + \Rad_{n_u}(\Hcal)+\frac{3\sqrt{2}}{2}\sqrt{\frac{\log 4/\delta}{n_u}},
	}
	which concludes
	\ea{
	\Ebb_{X, Y}\big[\bI_{r(X)=0}\bI_{\hat{h}(X)\neq Y}\big]{\leq }\Ebb\big[\bI_{r(X)=0}\bI_{h^*(X)\neq Y}\big] + \Rad_{n_u}(\Hcal)+\frac{3\sqrt{2}}{2}\sqrt{\frac{\log 4/\delta}{n_u}}.
	}
	
\noindent
	$\bullet$ {\bfseries Step (ii)}: We know that $\hat{r}(\cdot)$ is obtained as 
	\ea{
	\hat{r}(x)= \argmin_{r\in \Rcal} \frac{1}{n_l} \sum_{i=1}^{n_l}\big[\bI_{r(\xv_i)=0}\bI_{h(\xv_i)\neq y_i}+\bI_{r(\xv_i)=1}\bI_{m_i\neq y_i}\big],
	}
	or equivalently,
	\ea{
	\hat{r}(x) = \argmin_{r\in \Rcal} \frac{1}{n_l} \sum_{i=1}^{n_l}\big[\bI_{r(\xv_i)=0}[\bI_{h(\xv_i)\neq y_i}-\bI_{m_i\neq y_i}]\big].
	}
	
	Hence, using Rademacher inequality (Theorem 3.3 of \cite{mohri2018foundations}), with probability $1-\delta/4$, we have
	\ea{
	\Ebb_{X, Y, M}\big[\bI_{\hat{r}(X)=0}[\bI_{\hat{h}(X)\neq Y}-\bI_{M\neq Y}]\big]\leq \frac{1}{n_l} \sum_{i=1}^{n_l} \bI_{\hat{r}(\xv_i)=0}[\bI_{\hat{h}(\xv_i)\neq y_i}-\bI_{m_i\neq y_i}] +2\Rad_{n_l}(\Jcal) +\sqrt{\frac{\log 4/\delta}{n_l}},\label{eqn: gen_diff}
	}
	where \ea{\Jcal =\big\{\xv, y, m\to \bI_{r(\xv)=0}[\bI_{\hat{h}(\xv)\neq y}-\bI_{m\neq y}]\, :\, r\in \Rcal\big\}.\label{eqn: jcal_def}}
	
	Using Lemma \ref{lem: r_h_lem}, we know that there exists $r^*\in \Rcal$ such that $(r^*, h^*)$ are the minimizers of the joint loss $\defl(h, r)$ in $\Hcal\times \Rcal$. Next, since $\hat{r}$ is the minimizer of the empirical joint loss given the classifier be $\hat{h}$, and using \eqref{eqn: gen_diff}, we have
	\ea{
	\Ebb_{X, Y, M}\big[\bI_{\hat{r}(X)=0}[\bI_{\hat{h}(X)\neq Y}-\bI_{M\neq Y}]\big]\leq \frac{1}{n_l} \sum_{i=1}^{n_l} \bI_{r^*(\xv_i)=0}[\bI_{\hat{h}(\xv_i)\neq y_i}-\bI_{m_i\neq y_i}] +2\Rad_{n_l}(\Jcal) +\sqrt{\frac{\log 4/\delta}{n_l}}, \label{eqn: gen_r_*}
	}
	for $r^*(\cdot)$ defined as above.
	
	Next, using McDiarmid's inequality, union bound, and since the empirical loss in RHS of \eqref{eqn: gen_r_*} is $\frac{2}{n}$-bounded difference, then with probability at least $1-\delta/2$ we have
	\ea{
	\Ebb_{X, Y, M}\big[\bI_{\hat{r}(X)=0}[\bI_{\hat{h}(X)\neq Y}-\bI_{M\neq Y}]\big]\leq &\Ebb_{X, Y, M}\big[\bI_{r^*(X)}[\bI_{\hat{h}(X)\neq Y}-\bI_{M\neq Y}]\big]+2\Rad_n (\Jcal) \nonumber\\&+(\sqrt{2}+1)\sqrt{\frac{\log 4/\delta}{n_l}}.
	}
	
	Therefore, using step (i), and by means of union bound, one could prove that with probability at least $1-\delta$ we have
	\ea{
	\Ebb_{X, Y, M}\big[\bI_{\hat{r}(X)}[\bI_{\hat{h}(X)\neq Y} - \bI_{M\neq Y}]\big]\leq &\Ebb_{X, Y, M}\big[\bI_{r^*(X)}[\bI_{h^*(X)\neq Y} - \bI_{M\neq Y}]\big]+\Rad_{n_u}(\Hcal)+2\Rad_{n_l}(\Jcal)\nonumber\\&+\frac{3\sqrt{2}}{2}\sqrt{\frac{\log 4/\delta}{n_u}}+(\sqrt{2}+1)\sqrt{\frac{\log 4/\delta}{n_l}}, 
	}
	or equivalently
	\ea{
	\defl(\hat{r}, \hat{h})\leq
	&\defl(h^*,r^*)+\Rad_{n_u}(\Hcal)+2\Rad_{n_l}(\Jcal)+\frac{3\sqrt{2}}{2}\sqrt{\frac{\log 4/\delta}{n_u}}+(\sqrt{2}+1)\sqrt{\frac{\log 4/\delta}{n_l}}, \label{eqn: gen_without_j}
	}

\noindent
	$\bullet$ {\bfseries Step (iii)}: In this step, we bound $\Rad_n(\Gcal)$ to complete the proof. By recalling the definition of $\Jcal$ in \eqref{eqn: jcal_def}, we bound $\Rad_n(\Jcal)$ as
	\ea{
	\Rad_n(\Jcal)&=\Ebb_{\{(\xv_i, y_i, m_i)\}_{i=1}^n}\Ebb_{\siv}\big[\frac{1}{n}\sup_{g\in \Jcal}\sum_{i=1}^n\sigma_i g(\xv_i, y_i, m_i)\big]\\
	&=\Ebb_{\{(\xv_i, y_i, m_i)\}_{i=1}^n}\Ebb_{\siv}\big[\frac{1}{n}\sup_{r\in \Rcal} \sum_{i=1}^n \sigma_i \big[\bI_{r(\xv_i)=0}(\bI_{\hat{h}(\xv_i)\neq y_i}-\bI_{m_i\neq y_i})\big]\big]\\
	&\overset{(a)}{\leq} \Ebb_{\{(\xv_i, y_i, m_i)\}_{i=1}^n}\Ebb_{\siv}\big[\frac{1}{n}\sup_{r\in \Rcal} \sum_{i=1}^n \sigma_i \big[r(\xv_i)\bI_{\hat{h}(\xv_i)\neq y_i}\big]\big]\nonumber\\&\phantom{\overset{(a)}{\leq}}+\Ebb_{\{(\xv_i, y_i, m_i)\}_{i=1}^n}\Ebb_{\siv}\big[\frac{1}{n}\sup_{r\in \Rcal} \sum_{i=1}^n \sigma_i \big[r(X)\bI_{m_i\neq y_i}\big]\big]\\
	&\overset{(b)}{\leq} \Rad_n(\Rcal)+\Ebb_{\{(\xv_i, y_i, m_i)\}_{i=1}^n}\Ebb_{\siv}\big[\frac{1}{n}\sum_{i=1}^{n}\sigma_i\bI_{\hat{h}(\xv_i)\neq y_i}\big]+\underbrace{\Ebb_{\{(\xv_i, y_i, m_i)\}_{i=1}^n}\Ebb_{\siv}\big[\frac{\sum_{i=1}^n\bI_{m_i\neq y_i}}{n}\hat{\Rad}_{\Scal}(\Rcal)\big]}_{A}\\
	&\overset{(c)}{=} \Rad_n (\Rcal)+A, \label{eqn: rad_j}
	}
	where $\Scal=\{(\xv_i, y_i, m_i)\,:\, m_i\neq y_i\}$ and $(a)$ holds because of sub-linearity of supremum, $(b)$ holds by sub-linearity of supremum, since for two events $E_1$ and $E_2$ we have $\bI_{E_1}\cdot\bI_{E_2}\leq \bI_{E_1}+\bI_{E_2}$, and using Lemma 3.4 of \cite{mohri2018foundations}, and $(c)$ is followed by $\sigma_i$ being zero-mean.
	
	Now, we should bound $A$. Since $u=\sum_{i=1}^n \bI_{m_i\neq y_i}$ is a random variable with distribution $\mathrm{Binomial}(n, \Pr(M\neq Y))$ and using Hoeffding's inequality, we have
	\ea{
	\Pr\big(\tfrac{u}{n}<\Pr(M\neq Y)-t\big)\leq e^{-2nt^2}.
	}
	
	Next, by decomposing $A$, we have
	\ea{
	A &= \Pr\big(\tfrac{u}{n}<\Pr(M\neq Y)-t\big)\Ebb_{\{(\xv_i, y_i, m_i)\}_{i=1}^n}\big[\tfrac{u}{n}\hat{\Rad}_{\Scal}(\Rcal)|\, \tfrac{u}{n}<\Pr(M\neq Y)-t\big]\nonumber\\&~~+\Pr\big(\tfrac{u}{n}\geq\Pr(M\neq Y)-t\big)\Ebb_{\{(\xv_i, y_i, m_i)\}_{i=1}^n}\big[\tfrac{u}{n}\hat{\Rad}_{\Scal}(\Rcal)|\, \tfrac{u}{n}\geq\Pr(M\neq Y)-t\big]\\
	&\leq |\Pr(M\neq Y)-t|e^{-2nt^2}+\min\{\Pr(M\neq Y), \Rad_{n(\Pr(M\neq Y)-t)}(\Rcal)\},
	}
	where the inequality holds since Rademacher complexity is bounded by $1$ and is non-increasing in terms of sample-space size, followed by $\frac{u}{n}\leq 1$, and by means of Lemma 3.4 of \cite{mohri2018foundations}. As a result, by setting $t=\frac{\Pr(M\neq Y)}{2}$, we have
	\ea{
	A\leq \frac{\Pr(M\neq Y)}{2}e^{-\tfrac{n\Pr^2(M\neq Y)}{2}}+\min\{\Pr(M\neq Y), \Rad_{n\Pr(M\neq Y)/2}(\Rcal)\}. \label{eqn: bound_A}
	}
	
Finally using \eqref{eqn: gen_without_j}, \eqref{eqn: rad_j}, and \eqref{eqn: bound_A} we complete the proof.
\end{proof}

\section{Proof of Proposition \ref{prop: consistency}} \label{app: consistency}

To prove the consistency of the deferral surrogate, we know that since $\ell_{\phi}$ is consistent, for every $\{p_1, \ldots, p_{k+1}\}$, such that $\sum_{i=1}^{k+1} p_i =1$, we have
\ea{
\argmax_{i\in [k+1]} \argmin_{h\in \Dcal} \sum_{i=1}^{k+1} p_i \tilde{\ell}_{\phi}(i, h) = \argmax_{i\in [k+1]} p_i. \label{eqn: fisher}
}
(One could prove this by setting $\Pr(X=x)=\delta[x]$, and $\Pr(Y=y|X=x)=p_y$. ) 

Next, we find the minimizer of the loss $\tilde{\ell}_{\phi}$ as 
\ea{
\argmin_{h\in \Fcal} \Ebb_{X, Y, M} \big[\tilde{\ell}_{\phi}(\cv, h)\big] &= \argmin_{h(x)} \Ebb_{Y, M|X=x}\big[\tilde{\ell}_{\phi}(\cv, h(x))\big]\\
&= \argmin_{h(x)} \sum_{i=1}^{k+1} \Ebb[\max_{j\in [k+1]} c(j)-c(i)|X=x] \tilde{\ell}_{\phi}(i, h(x)). \label{eqn: min_surr_def}
}
Next, we form the probability mass function $\{q_1, \ldots, q_{k+1}\}$ as 
\ea{
q_i = \frac{\Ebb\big[\max_{j\in [k+1]} c(j)-c(i)\big]}{\sum_{t=1}^{k+1} \Ebb\big[\max_{j\in [k+1]} c(j)-c(t)\big]}. \label{eqn: def_qi}
}
One could see that the optimizer in \eqref{eqn: min_surr_def} is equivalent to
\ea{
\argmin_{h(x)} \sum_{i=1}^{k+1} q_i \ell_{\phi}\big(i, h(x)\big).  \label{eqn: defer_min_loss}
}
Now, using \eqref{eqn: fisher} and \eqref{eqn: defer_min_loss}, we can show that
\ea{
\argmax_{i\in [k+1]} \argmin_{h\in \Fcal} \Ebb_{X, Y, M} \big[\tilde{\ell}_{\phi}(\cv, h)\big]=\argmax_{i\in [k+1]} q_i = \argmin_{i\in [k+1]} \Ebb[c(i)|X=x].
}
The above identity means that $h_{k+1}(x)\geq \max_{i\in [k]} h_i(x)$ (i.e., $r(x)=1$) iff. we have $\Ebb[c(k+1)|X=x]\leq \min_{i\in [k]}\Ebb[c(i)|X=x]$. Further, we have
\ea{
h(x)=\argmax_{i\in [k]} h_i(x) = \argmin_{i\in [k]}\Ebb[c(i)|X=x].
}

Recalling Proposition 1 in \cite{mozannar2020consistent}, one sees that $r(x)$ and $h(x)$ are that of Bayesian optimal classifier, which proves that $\tilde{\ell}_{\phi}$ is Fisher consistent.

\section {Proof of Theorem \ref{thm: calib_CE}} \label{app: proof_calib_CE}
To show the result for the calibration function, by setting $\Pr(X=x)=\delta[x']$, and $\Pr(Y=y|X=x')=p_y$ for $y\in[k+1]$, we see that
\ea{
L^{0-1}(h)-\min_{h\in \Fcal} L^{0-1}(h)&=\sum_{i\neq h(x')}p_i-\sum_{i\neq \argmax p_i} p_i\\
&=\max_{i\in [k+1]} p_i-p_{h(x')}.
}
Furthermore, we have
\ea{
\tilde{L}_{\phi}(h)-\min_{h\in \Fcal} \tilde{L}_{\phi}(h)=\sum_{i=1}^{k+1}p_i \big[\tilde{\ell}_{\phi}(i, h(x'))-\tilde{\ell}_{\phi}(i, h^*)\big].
}
Hence, $\psi$ being a calibration function proves that
\ea{
\psi(\max p_i-p_{h(x')})\leq \sum_{i=1}^{k+1}p_i \big[\tilde{\ell}_{\phi}(i, h(x'))-\tilde{\ell}_{\phi}(i, h^*)\big], \label{eqn: what_is_calibration}
}
for every choice of $h(x')$.

On the other hand, one could calculate the conditional cost-sensitive loss as 
\ea{
L_{\cv, x}(h) = \Ebb_{Y|X=x}\big[c(h(X))\big] = \sum_{i\neq h(x)} \Ebb_{Y|X=x}\big[c(i)|X=x\big].
}
Hence, we have
\ea{
L_{c, x}(h)-L_{c, x}(h^*)=\Ebb\big[c(h(X))|X=x\big]-\min_{i\in[k+1]}\Ebb[c(i)|X=x\big],
}
where $h^* = \argmin_{h\in \Fcal} L_{c}(h)$.

By defining $q_i$s as \eqref{eqn: def_qi}, one can prove that
\ea{
L_{c, x}(h) - L_{c, x}(h^*) = \sum_{i=1}^{k+1}\big[\max_{j\in[k+1]} c(j)-c(i)|X=x\big] \big(\max q_i - q_{h(x)}\big).
}
For the new surrogate, we further know that
\ea{
\tilde{L}_{\cv, x} (h) &=\Ebb_{Y|X=x}\big[\tilde{\ell}(\cv, h(x))\big]\\
&=\sum_{i=1}^{k+1}\Ebb\big[\max_{j\in [k+1]} c(j)-c(i)|X=x\big]\sum_{i=1}^{k+1} q_i \tilde{\ell}_{\phi}(i, h(x)).
}
Furthermore, one could show that
\ea{
\tilde{h}_{1}^{k+1} &= \argmin_{h\in \Fcal} \tilde{L}_{\cv, x}(h) \\
&= \argmin_{h\in \Fcal} \sum_{i=1}^{k+1} q_i \tilde{\ell}_{\phi}(i, h), \label{eqn: def_htilde}
}
and consequently,
\ea{
\tilde{L}_{\cv, x}(h)-\tilde{L}_{\cv, x}(\tilde{h}_{1}^{k+1}) = \sum_{i=1}^{k+1}\Ebb\big[\max_{j\in [k+1]} c(j)-c(i)|X=x\big] \cdot \sum_{i=1}^{k+1} q_i (\tilde{\ell}_{\phi}(i, h)-\tilde{\ell}_{\phi}(i, \tilde{h}_{1}^{k+1}).
}
Hence, using \eqref{eqn: what_is_calibration} and \eqref{eqn: def_htilde}, we have
\ea{
\Ebb\big[\max_{j\in [k+1]} c(j)-c(i)|X=x\big] \psi(\max_{i\in [k+1]} q_i - q_{h(x)})\leq \tilde{L}_{\cv, x}(h) - \tilde{L}_{\cv, x}( \tilde{h}_{1}^{k+1}).
}
Hence, since $\psi(x)=C|x|^{\ep}$, we have
\ea{
\psi(L_{c, x}(h)-L_{c, x}(h^*)) &= \psi(\Ebb\big[\max_{j\in [k+1]} c(j)-c(i)|X=x\big](\max_{i\in [k+1]}q_i - q_{h(x)})\\&\leq \Ebb^{\ep-1}\big[\max_{j\in [k+1]} c(j)-c(i)|X=x\big](\tilde{L}_{\cv, x}(h)-\tilde{L}_{\cv, x}(\tilde{h}_1^{k+1}))\\
&\overset{(a)}{\leq} M^{\ep-1} (\tilde{L}_{\cv, x}(h)-\tilde{L}_{\cv, x}(\tilde{h}_1^{k+1})), \label{eqn: first_calibration_bound}
}
where $(a)$ holds using the assumption of the theorem.

Finally, using convexity of $\psi$ and by Jensen's inequality, we have
\ea{
\psi(L_c(h)-L_c(h^*)) &= \psi(\Ebb_{X}[L_{c, x}(h)-L_{c, x}(h^*)]\\
&\leq \Ebb_{X} \psi(L_{c, x}(h)-L_{c, x}(h^*))\\
&\overset{(a)}{\leq} M^{\ep-1} \Ebb_{X}\big[\tilde{L}_{\cv, x}(h)-\tilde{L}_{\cv, x}(\tilde{h}_{1}^{k+1})\big]\\
&= M^{\ep-1} (\tilde{L}_{\cv, x}(h)-\tilde{L}_{\cv, x}(\tilde{h}_{1}^{k+1}),
}
in which $(a)$ is followed by \eqref{eqn: first_calibration_bound}. This completes the proof of the first part of theorem.

To obtain the calibration function of the cross-entropy error, we first introduce the following lemma.

\begin{lemma}\label{lem: MAP_KL}
	For every two distributions $P$ and $G$, we have
	\ea{
	\big|\max_i P_i-\max_{i} G_i\big|\leq \sqrt{2 D_{KL}(P\| G)}.
	}
\end{lemma}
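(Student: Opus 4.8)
The plan is to chain together three elementary facts: a Lipschitz bound for the max functional in the $\ell_\infty$ norm, the trivial comparison $\|\cdot\|_\infty \le \|\cdot\|_1$, and Pinsker's inequality relating total variation distance to KL divergence.

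First I would show that $v \mapsto \max_i v_i$ is $1$-Lipschitz with respect to $\|\cdot\|_\infty$. Pick $i^\star \in \arg\max_i P_i$ and $j^\star \in \arg\max_i G_i$. Then
\[
\max_i P_i - \max_i G_i = P_{i^\star} - G_{j^\star} \le P_{i^\star} - G_{i^\star} \le \max_i |P_i - G_i|,
\]
and by the symmetric argument $\max_i G_i - \max_i P_i \le \max_i |P_i - G_i|$, so $\big|\max_i P_i - \max_i G_i\big| \le \|P - G\|_\infty$.

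Next I would use that a single coordinate's absolute value is at most the sum over all coordinates, giving $\|P - G\|_\infty \le \|P - G\|_1 = \sum_i |P_i - G_i| = 2\,\mathrm{TV}(P,G)$, where $\mathrm{TV}(P,G) = \tfrac12 \|P-G\|_1$ is the total variation distance. Finally I would invoke Pinsker's inequality, $\mathrm{TV}(P,G) \le \sqrt{\tfrac12 D_{KL}(P\|G)}$, to obtain $\big|\max_i P_i - \max_i G_i\big| \le 2\,\mathrm{TV}(P,G) \le \sqrt{2\,D_{KL}(P\|G)}$, which is the claim.

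There is no genuine obstacle in this argument; the only points requiring mild care are the factor-of-two bookkeeping between $\|\cdot\|_1$ and total variation distance, and quoting Pinsker's inequality in the correct direction (TV bounded by a function of KL, not the reverse).
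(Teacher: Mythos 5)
Your proposal is correct and follows essentially the same route as the paper: the key step $P_{i^\star}-G_{j^\star}\le P_{i^\star}-G_{i^\star}$ is exactly the paper's case analysis, and both arguments then bound a coordinate difference via total variation and Pinsker's inequality. Your version merely makes the intermediate $\|P-G\|_\infty\le\|P-G\|_1=2\,\mathrm{TV}(P,G)$ step explicit, which the paper leaves implicit.
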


\begin{proof}
We define $\argmax_{i} G_i = i_{G}^*$, and $\argmax_{i} P_i = i_{P}^{*}$. If we have $\max_{i}G_i=G_{i_G^*}\geq G_{i_P^*}=\max_{i}P_i$, then 
\ea{
0\leq G_{i_G^*}-P_{i_P^*} \overset{(a)}{\leq} G_{i_G^*}-P_{i_G^*} \overset{(b)}{\leq } \sqrt{2 D_{KL}(P\| G)},
} 
where $(a)$ is correct due to the fact that $\max_{i} P_i \geq P_{i_G^*}$, and $(b)$ holds due to Pinsker's inequality. Further, if we have $\max_{i}G_i=P_{i_G^*}\leq P_{i_P^*}=\max_{i}P_i$, using a similar argument, we have
\ea{
0\leq P_{i_P^*}-G_{i_G^*} \leq P_{i_P^*}-G_{i_P^*} \leq \sqrt{2D_{KL}(P\| G)}.
}
\end{proof}

	Next, we note that the conditional surrogate risk can be rewritten as 
	\ea{
	\tilde{L}_{CE, x}(g_1, \ldots, g_{K+1})&= -\sum_{i=1}^{K+1} \Ebb\big[\max_{j\in [K+1]} c(j) - c(i)|X=x\big]  \log \frac{\exp(g_i(x))}{\sum_k \exp(g_k(x))}\\
	&= N_x H_{\Pcal_x}(\Gcal_x), \label{eqn: rel_ent}
	}
	where $N_x = \sum_{i=1}^{K+1} \Ebb\big[\max_{j\in [K+1]} c(j) - c(i)|X=x\big]$, $H_{\Pcal_x}(\Gcal_x)$ refers to the relative entropy of the distribution $\Gcal_x$ w.r.t $\Pcal_x$ which are defined as
	\ea{
	\Pcal_{x, i} = \frac{\Ebb\big[\max_{j\in [K+1]} c(j) - c(i)|X=x\big]}{N_x},
	} 
	and
	\ea{
	\Gcal_{x, i}= \frac{\exp(g_i(x))}{\sum_k \exp(g_k(x))}.
	}

	Secondly, one note that since in the minimizer of surrogate risk
	$$\argmin_{\gv\in \Fcal} \tilde{L}_{CE} (\gv),$$
	$\Fcal$ contains every function, hence there is no dependency between different point $x$s, and as a result, the minimization is equivalent to finding minimize every conditional surrogate risk. More formally, if $g_1^*, \ldots, g_{K+1}^*$ are such pair of minimizers, we have
	\ea{
	\big(g_1^*(x), \ldots, g_{K+1}^*(x)\big) &= \argmin_{g_1(x), \ldots, g_{K+1}(x)} \tilde{L}_{CE, x}\big(g_1(x), \ldots, g_{K+1}(x)\big)\\
	&\overset{(a)}{=}	\argmin_{g_1(x), \ldots, g_{K+1}(x)} N_x H_{\Pcal_x}(\Gcal_x)\\
	&= \argmin_{g_1(x), \ldots, g_{K+1}(x)} H_{\Pcal_x}(\Gcal_x)\\
	&\overset{(b)}{=} \big(\Pcal_{x, 1}, \ldots, \Pcal_{x, K+1}\big),
	}  
where $(a)$ holds because of \eqref{eqn: rel_ent}, and $(b)$ is a property of relative entropy. 

As a result, the conditional excess surrogate risk can be rewritten as 

\ea{
\tilde{L}_{CE, x}(g_{1}, \ldots, g_{K+1}) -\tilde{L}_{CE, x}^* = N_x H_{\Pcal_x}(\Gcal_x) - N_x H_{\Pcal_x}(\Pcal_x) = N_x D_{KL}(\Pcal_x, \Gcal_x). \label{eqn: ltilde_error}
}

Further, we can write the conditional excess risk as 
\ea{
L_{x}^{0 -1}(g_1, \ldots, g_{K+1}) - &L_{x}^{0 -1}(g^*_1, \ldots, g_{K+1}^*) \nonumber\\&= \Ebb\big[ c(\argmax_{i(x)} g_{i(x)}(x)|X=x]- \min_{i(x)} \Ebb\Big[c\big(i(x)\big)|X=x\Big],
}
where $L_x^{0 - 1}$ is defined as
\ea{
L_x^{0 - 1}(g_1, \ldots, g_{K+1}) = \Ebb\big[\bI_{Y\neq \argmax_{i\in [K+1] g_i(X)}}|X=x\big]
}

	Next, we can rewrite this conditional excess risk in terms of $\Pcal_{x, i}$s as
	\ea{
	L_{x}^{0 -1}(g_1, \ldots, g_{K+1}) - L_{x}^{0 -1}(g^*_1, \ldots, g_{K+1}^*) &= N_x \big(\max_{i(x)} \Pcal_{x, i(x)} - \Pcal_{x, \argmax_{i(x)} g_{i(x)}(x)}\big)\\
	&= N_x\big(\max_{i(x)} \Pcal_{x, i(x)} - \Pcal_{x, \argmax_{i(x)} \Gcal_{x, i(x)}}\big).
}

To bound such a value, we use Pinsker's inequality which states that for every two distributions $P$ and $G$ supported on $\Nbb$, we have
\ea{
TV(P, Q) =\frac{1}{2} \sum_{i} |P_i-Q_i| \leq \sqrt{\frac{D_{KL}(P\| Q)}{2}}.
}

To make use of that inequality, by defining $i_{\Pcal_x} := \argmax_{i(x)} \Pcal_{x, i(x)}$ and $i_{\Gcal_x} ;= \argmax_{i(x)} \Gcal_{x, i(x)}$ and using triangle inequality, we know that
\ea{
N_x \big| \max_{i(x)} \Pcal_{x, i(x)} - \Pcal_{x, \argmax_{i(x)} \Gcal_{x, i(x)}}\big| \leq N_x \big| \Pcal_{x, i_{\Pcal_x}} -  \Gcal_{x, i_{\Gcal_x}}\big| +N_x \big| \Gcal_{x, i_{\Gcal_x}} - \Pcal_{x, i_{\Gcal_x}}\big|.
}

Next, we bound each of these terms separately. Firstly, we know that
\ea{
N_x \big| \Gcal_{x, i_{\Gcal_x}} -  \Pcal_{x, i_{\Gcal_x}}\big| &\leq N_x \sum_{i} |\Pcal_{x, i}-\Gcal_{x, i}|=N_x TV(\Pcal_x \| \Gcal_x)\\&\leq N_x \sqrt{{2D_{KL}(\Pcal_x\| \Gcal_x)}}.
}

Further, using Lemma \ref{lem: MAP_KL}, one can show that 
\ea{
N_x \big| \Pcal_{x, i_{\Pcal_x}} -  \Gcal_{x, i_{\Gcal_x}}\big| \leq N_x \sqrt{{2D_{KL}(\Pcal_x\| \Gcal_x)}}.
}
As a result, we have 
\ea{
L_{x}^{0 -1}(g_1, \ldots, g_{K+1}) - L_{x}^{0 -1}(g^*_1, \ldots, g_{K+1}^*)&\leq N_x \sqrt{{8D_{KL}(\Pcal_x\| \Gcal_x)}}\\
&=\sqrt{8 N_x} \sqrt{\tilde{L}_{CE, x}(g_{1}, \ldots, g_{K+1}) -\tilde{L}_{CE, x}^* },
}
where the last equality is followed by \eqref{eqn: ltilde_error}.
Next, using the upper-bound on $c(i)$s, we have $N_x \leq 2K M$. As a result, we have
\ea{\frac{\big(L_{x}^{0 -1}(g_1, \ldots, g_{K+1}) - L_{x}^{0 -1}(g^*_1, \ldots, g_{K+1}^*)\big)^2}{16MK}\leq \tilde{L}_{CE, x}(g_{1}, \ldots, g_{K+1}) -\tilde{L}_{CE, x}^*.} 
Finally, using Jensen's inequality, we have 
\ea{
\frac{\big(L^{0-1}(g_1, \ldots, g_{K+1}) - L^{0-1}(g^*_1, \ldots, g_{K+1}^*)\big)^2}{16 MK} \leq \tilde{L}_{CE}(g_{1}, \ldots, g_{K+1}) -\tilde{L}_{CE}^*,
}
which yields the statement of theorem.

\section{Proof of Theorem \ref{thm: gen_cross}} \label{app: gen_CE}
We first introduce some useful lemmas, then we get back to the proof of theorem.
\begin{lemma} \label{lem: rad_log_exp}
Let $\Fcal_1, \ldots, \Fcal_k$ be hypothesis classes with Rademacher complexity $\hat{\Rad}_{\Scal}(\Fcal_1), \ldots, \hat{\Rad}_{\Scal}(\Fcal_k)$ on set $\Scal$. The Rademacher complexity of the hypothesis class $\Gcal = \{\log \sum_{i=1}^k e^{f_i(x)}:~ f_i(\cdot)\in \Fcal_i\}$ on set $\Scal$ is bounded as
\ea{
\hat{\Rad}_{\Scal}(\Gcal)\leq \sum_{i=1}^{k} \hat{\Rad}_{\Scal}(\Fcal_i).
}
\end{lemma}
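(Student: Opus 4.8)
The plan is to peel off one exponential at a time, reducing the statement to a two-class estimate proved by a contraction argument. Write $\Gcal^{(m)} = \{x\mapsto \log\sum_{i=1}^{m} e^{f_i(x)} : f_i\in\Fcal_i\}$, so that $\Gcal^{(k)}=\Gcal$, and observe $\Gcal^{(1)}=\Fcal_1$ since $\log e^{f_1(x)}=f_1(x)$. Every element of $\Gcal^{(m)}$ can be written as $\log\big(e^{g(x)}+e^{f_m(x)}\big)$ with $g=\log\sum_{i=1}^{m-1}e^{f_i}\in\Gcal^{(m-1)}$ and $f_m\in\Fcal_m$, and conversely every such expression lies in $\Gcal^{(m)}$; so the set $\Gcal^{(m)}$ equals $\{\log(e^{g}+e^{f_m}) : g\in\Gcal^{(m-1)},\ f_m\in\Fcal_m\}$. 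Hence it suffices to prove the two-class bound
\[
\hat{\Rad}_{\Scal}\!\big(\{x\mapsto\log(e^{a(x)}+e^{b(x)}) : a\in\mathcal{A},\ b\in\mathcal{B}\}\big)\ \leq\ \hat{\Rad}_{\Scal}(\mathcal{A})+\hat{\Rad}_{\Scal}(\mathcal{B})
\]
for arbitrary real-valued function classes $\mathcal{A},\mathcal{B}$ on $\Scal$. Applying it with $\mathcal{A}=\Gcal^{(m-1)}$, $\mathcal{B}=\Fcal_m$ and iterating from $m=2$ to $k$ then yields $\hat{\Rad}_{\Scal}(\Gcal)\leq\sum_{i=1}^{k}\hat{\Rad}_{\Scal}(\Fcal_i)$.

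The key step for the two-class bound is the algebraic identity obtained by factoring $e^{(a+b)/2}$ out of $e^{a}+e^{b}$:
\[
\log(e^{a}+e^{b}) \;=\; \frac{a+b}{2} \;+\; \rho(a-b),\qquad \rho(t):=\log\!\big(2\cosh(t/2)\big),
\]
where $\rho'(t)=\tfrac12\tanh(t/2)$, so $\rho$ is $\tfrac12$-Lipschitz. I split the empirical Rademacher complexity of the composed class, via subadditivity of the supremum, into the contribution of the linear term $\tfrac12(a+b)$ and the contribution of $\rho(a-b)$. The linear term contributes exactly $\tfrac12\hat{\Rad}_{\Scal}(\mathcal{A})+\tfrac12\hat{\Rad}_{\Scal}(\mathcal{B})$ by linearity and subadditivity of $\hat{\Rad}_{\Scal}$. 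For the term $\rho(a-b)$, I apply Talagrand's contraction lemma \cite{mohri2018foundations} to the $\tfrac12$-Lipschitz map $\rho$ (the additive constant $\rho(0)=\log 2$ is harmless for the one-sided Rademacher complexity because the Rademacher variables are mean-zero), which bounds that contribution by $\tfrac12\hat{\Rad}_{\Scal}(\mathcal{A}-\mathcal{B})$; then $\hat{\Rad}_{\Scal}(\mathcal{A}-\mathcal{B})\leq\hat{\Rad}_{\Scal}(\mathcal{A})+\hat{\Rad}_{\Scal}(-\mathcal{B})=\hat{\Rad}_{\Scal}(\mathcal{A})+\hat{\Rad}_{\Scal}(\mathcal{B})$ by subadditivity and symmetry of the Rademacher variables. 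Adding the two contributions gives the two-class bound.

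I expect the only subtle point to be insisting on the \emph{symmetric} decomposition around $(a+b)/2$. The naive split $\log(e^{a}+e^{b})=b+\log(1+e^{a-b})$ followed by contracting the $1$-Lipschitz softplus charges $\hat{\Rad}_{\Scal}(\mathcal{B})$ twice and yields only the weaker bound $\hat{\Rad}_{\Scal}(\mathcal{A})+2\hat{\Rad}_{\Scal}(\mathcal{B})$; centering so that the nonlinear remainder depends only on $a-b$ and is $\tfrac12$-Lipschitz is precisely what produces the sharp constant. Everything else — subadditivity and scaling of $\hat{\Rad}_{\Scal}$, the contraction lemma, and $\hat{\Rad}_{\Scal}(-\mathcal{B})=\hat{\Rad}_{\Scal}(\mathcal{B})$ — is standard and requires no further care.
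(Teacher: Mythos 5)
Your proof is correct and follows essentially the same route as the paper's: the symmetric decomposition $\log(e^a+e^b)=\tfrac{a+b}{2}+\Phi(a-b)$ with $\Phi(t)=\log(e^{t/2}+e^{-t/2})$ being $\tfrac12$-Lipschitz, followed by subadditivity of the supremum and Talagrand's contraction lemma. The only difference is that you make the induction from the two-class case to general $k$ explicit (peeling off one exponential at a time), which the paper merely asserts can be done "by following similar steps" — a welcome bit of extra rigor, as is your remark about the additive constant $\Phi(0)=\log 2$ being harmless for the one-sided Rademacher complexity.
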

\begin{proof}
We prove this lemma for $k=2$. By following similar steps, one could generalize this proof for every $k\in \Nbb$.

We write the Rademacher complexity of $\Gcal$ as
\ea{
\hat{\Rad}_{\Scal}(\Gcal) &= \frac{1}{m}\Ebb_{\siv}\big[\sup_{f_1\in \Fcal_1, f_2\in \Fcal_2} \sum_{i=1}^m \sigma_i\log (e^{f_1(x)}+e^{f_2(x)}) \big]\\
& = \frac{1}{m} \Ebb_{\siv} \big[\sup_{f_1\in \Fcal_1, f_2\in \Fcal_2}\sum_{i=1}^m \frac{\sigma_i}{2}f_1+\sum_{i=1}^m\frac{\sigma_i}{2}f_2+\sum_{i=1}^m \sigma_i \log (e^{f_1(x)/2-f_2(x)/2}+e^{f_2(x)/2-f_1(x)/2})\big]\\
&\overset{(a)}{\leq} \frac{1}{2m}\Ebb_{\siv}\big[\sup_{f_1\in \Fcal_1}\sum_{i=1}^m \sigma_if_1\big]+\frac{1}{2m}\Ebb_{\siv}\big[\sup_{f_2\in \Fcal_2}\sum_{i=1}^m \sigma_if_2\big] \nonumber\\&\qquad+\frac{1}{m} \Ebb_{\siv} \big[\sum_{i=1}^m \sigma_i \log (e^{f_1(x)/2-f_2(x)/2}+e^{f_2(x)/2-f_1(x)/2})]\\
& = \frac{1}{2}\hat{\Rad}_{\Scal}(\Fcal_1)+\frac{1}{2}\hat{\Rad}_{\Scal}(\Fcal_2) +\hat{\Rad}_{\Scal}\big(\Phi o (\Fcal_1-\Fcal_2)\big), \label{eqn: dec_Rad}
}
where $(a)$ is followed by the sublinearity of supremum, and $\Phi(\cdot)$ is defined as $\Phi(x)=\log (e^{x/2}+e^{-x/2})$.

One could see that $\frac{\partial \Phi(x)}{\partial x} = \frac{1}{2}\frac{e^{x/2}-e^{-x/2}}{e^{x/2}+e^{-x/2}}\leq \frac{1}{2}$, that leads to $\frac{1}{2}$-Lipschitzness of $\Phi(\cdot)$. Using this, and by Ledoux-Talagrand theorem \cite{ledoux1991probability}, we have
\ea{
\hat{\Rad}_{\Scal} \big(\Phi o (\Fcal_1-\Fcal_2)\big)&\leq \frac{1}{2}\hat{\Rad}_{\Scal} (\Fcal_1-\Fcal_2)\\
&= \frac{1}{2m}\Ebb_{\siv}\big[\sup_{f_1\in \Fcal, f_2\in \Fcal_2} \sum_{i=1}^m \sigma_i \big(f_1(x)-f_2(x)\big)\big]\\
&\overset{a}{\leq} \frac{1}{2m}\Ebb_{\siv} \big[\sup_{f_1\in \Fcal_1} \sum_{i=1}^m \sigma_i f_1(x)\big]+\frac{1}{2m}\Ebb_{\siv} \big[\sup_{f_2\in \Fcal_2} \sum_{i=1}^m \sigma_i f_2(x)\big]\\
&= \frac{1}{2}\big(\hat{\Rad}_{\Scal}(\Fcal_1)+\hat{\Rad}_{\Scal}(\Fcal_2)\big), \label{eqn: phi_o_f1f2}
}
where $(a)$ is again followed by sublinearity of supremum.

Finally, using \eqref{eqn: dec_Rad} and \eqref{eqn: phi_o_f1f2}, we complete the proof.
\end{proof}

\begin{lemma} \label{lem: softmax}
Let $\Fcal$ be a hypothesis class of functions $f(x, y): \Xcal\times [k+1]\to \Rbb$, and $\Pi_1(\Fcal)=\{x\to f(x, y):\,f(\cdot, \cdot)\in \Fcal, y\in [k+1]\}$. Then,
\begin{itemize}
    \item for $\Gcal = \{x, y\to f(x, y)-\log \sum_{j=1}^{k+1} f(x, y):\, f(\cdot, \cdot)\in \Fcal\}$  and given the assumption that for every label inside sets of pairs $(x_i, y_i)\in \Scal$ is within the range $\{1, \ldots, k\}$, we have
    \ea{
    \hat{\Rad}_{\Scal} (\Gcal)\leq (2k+1)\hat{\Rad}_{\Scal_x}\big(\Pi_1(\Fcal)\big),
    }
    \item and for $\Hcal_i = \{x\to f(x, i) - \log \sum_{y=1}^{k+1} f(x, y):\, f(\cdot, \cdot)\in \Fcal\}$, we have
    \ea{
    \hat{\Rad}_{\Scal}(\Hcal_i)\leq (k+2)\hat{\Rad}_{\Scal_x} \big(\Pi_1(\Fcal)\big).
    }
\end{itemize}
\end{lemma}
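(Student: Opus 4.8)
The plan is to prove both bounds with one recipe: inside the Rademacher expectation, use subadditivity of the supremum to peel the linear term $f(x,y)$ (resp.\ $f(x,i)$) off the log-partition term $\log\sum_{j}e^{f(x,j)}$, then bound each piece separately --- the log-partition term via Lemma~\ref{lem: rad_log_exp}, the linear term by inclusion into copies of $\Pi_1(\Fcal)$. (I read the definition of $\Gcal$ as the log-softmax class $x,y\mapsto f(x,y)-\log\sum_{j=1}^{k+1}e^{f(x,j)}$, which is what enters $\tilde L_{CE}$; the displayed formula omits the exponentials.) I would also record one elementary sub-claim first: for any index set $S\subseteq\{1,\dots,m\}$ and any class $\Gcal_0$, $\Ebb_{\siv}\sup_{g\in\Gcal_0}\sum_{t\in S}\sigma_t g(x_t)\le \Ebb_{\siv}\sup_{g\in\Gcal_0}\sum_{t=1}^m\sigma_t g(x_t)$; this follows by conditioning on $(\sigma_t)_{t\in S}$ and applying Jensen's inequality (the supremum is convex) to the average over $(\sigma_t)_{t\notin S}$, which is mean zero. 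I also use that replacing $\siv$ by $-\siv$ leaves a Rademacher average unchanged, so the minus sign in front of the log-partition term is harmless, and that $\hat\Rad$ is monotone under set inclusion of the hypothesis class.

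For the second bullet ($\Hcal_i$, which is the easier one since $i$ is fixed): split $\hat\Rad_{\Scal}(\Hcal_i)\le \hat\Rad_{\Scal_x}\big(\{x\mapsto f(x,i):f\in\Fcal\}\big)+\hat\Rad_{\Scal_x}\big(\{x\mapsto \log\sum_{y=1}^{k+1}e^{f(x,y)}:f\in\Fcal\}\big)$. The first term equals $\hat\Rad_{\Scal_x}(\Pi_1(\Fcal))$ by definition of $\Pi_1$. For the second, since each $f(\cdot,y)\in\Pi_1(\Fcal)$, this class is contained in $\{x\mapsto\log\sum_{y=1}^{k+1}e^{g_y(x)}:g_1,\dots,g_{k+1}\in\Pi_1(\Fcal)\}$, whose Rademacher complexity Lemma~\ref{lem: rad_log_exp} (applied with $k+1$ identical classes $\Pi_1(\Fcal)$) bounds by $(k+1)\hat\Rad_{\Scal_x}(\Pi_1(\Fcal))$. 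Adding gives the factor $k+2$.

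For the first bullet ($\Gcal$), the only new ingredient is the varying label in the linear term. Using that all labels lie in $\{1,\dots,k\}$, group the sample as $S_c=\{t:y_t=c\}$ and write $\sum_t\sigma_t f(x_t,y_t)=\sum_{c=1}^k\sum_{t\in S_c}\sigma_t f(x_t,c)$; subadditivity of the supremum then gives $\frac1m\Ebb_{\siv}\sup_{f}\sum_t\sigma_t f(x_t,y_t)\le \sum_{c=1}^k \frac1m\Ebb_{\siv}\sup_{g\in\Pi_1(\Fcal)}\sum_{t\in S_c}\sigma_t g(x_t)$, and each summand is $\le \hat\Rad_{\Scal_x}(\Pi_1(\Fcal))$ by the sub-claim above, so the linear term contributes at most $k\,\hat\Rad_{\Scal_x}(\Pi_1(\Fcal))$. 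Combining with the $(k+1)\hat\Rad_{\Scal_x}(\Pi_1(\Fcal))$ bound for the log-partition term (exactly as in the previous paragraph) yields the factor $2k+1$.

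I expect no serious obstacle: every step is either a one-line inequality (subadditivity of $\sup$, $\siv\mapsto-\siv$ symmetry, monotonicity in the hypothesis class) or a direct invocation of Lemma~\ref{lem: rad_log_exp}. The one spot that deserves a short written argument rather than a citation is the sub-sampling monotonicity of empirical Rademacher complexity used to replace each sum over $S_c$ by a sum over all $m$ points; I would prove it via the Jensen step described above and place it as a lemma or remark immediately before the main computation.
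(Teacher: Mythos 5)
Your proof is correct and follows essentially the same route as the paper's: split off the log-partition term by subadditivity of the supremum, bound it by $(k+1)\hat{\Rad}_{\Scal_x}\big(\Pi_1(\Fcal)\big)$ via Lemma~\ref{lem: rad_log_exp} applied to $k+1$ copies of $\Pi_1(\Fcal)$, and show that the label-dependent linear term costs a factor $k$ (you also correctly read the stated classes as log-softmax classes, i.e.\ with $e^{f(x,j)}$ inside the sum, which is what the paper's proof actually uses). The only difference is in that linear term: the paper writes $\bI_{y_i=y}=\tfrac{\ep_i+1}{2}$ with $\ep_i=2\bI_{y_i=y}-1$ and uses that $\sigma_i\ep_i$ is again Rademacher, whereas you partition the sample by label and invoke sub-sampling monotonicity of the Rademacher average via Jensen; the two arguments are interchangeable and yield the same $k\,\hat{\Rad}_{\Scal_x}\big(\Pi_1(\Fcal)\big)$ contribution.
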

\begin{proof}
\begin{enumerate}
    \item We write Rademacher complexity of $\Gcal$ as
\ea{
\hat{\Rad}_{\Scal} (\Gcal) &= \frac{1}{m}\Ebb_{\siv} \big[\sup_{f\in \Fcal} \sum_{i=1}^{m} \sigma_i f(x_i, y_i)-\sigma_i \log \sum_{y=1}^{k+1} e^{f(x_i, y)}\big]\\
&\overset{(a)}{\leq} \underbrace{\frac{1}{m}\Ebb_{\siv}\big[\sup_{f\in \Fcal} \sigma_i f(x_i, y_i)\big]}_{A} +\underbrace{\frac{1}{m}\Ebb_{\siv}\big[\sup_{f\in \Fcal} \sum_{i=1}^m \sigma_i \log \sum_{y=1}^{k+1} e^{f(x_i, y)}\big]}_{B}, \label{eqn: dec_Rad_G}
}
where $(a)$ holds because of sublinearity of supremum.
Next, we bound $A$ and $B$ as follows.

First, we know that
\ea{
A &= \frac{1}{m} \Ebb_{\siv} \big[\sup_{f\in \Fcal} \sum_{y=1}^{k} \sum_{i=1}^m \sigma_i f(x_i, y)\bI_{y_i = y}\big]\\
&\leq \sum_{y=1}^{k} \frac{1}{m} \Ebb_{\siv} \big[\sup_{f\in \Fcal} \sum_{i=1}^m \sigma_i f(x_i, y) (\ep_i/2+1/2)\big],
}
where $\ep_i = 2\bI_{y_i = y}-1$. Hence, again, applying sublinearity of supremum, we have
\ea{
A&\leq \sum_{y=1}^{k} \frac{1}{m}\Ebb_{\siv}\big[\sup_{f\in \Fcal} \sum_{i=1}^m \frac{\sigma_i \ep_i}{2} f(x_i, y)\big]+\frac{1}{m}\Ebb_{\siv}\big[\sup_{f\in \Fcal} \sum_{i=1}^m \frac{\sigma_i }{2} f(x_i, y)\big]. \label{eqn: bound_A_2}
}
Since $\ep \in \{-1, 1\}$, then $\sigma_i \ep_i$ take Rademacher distribution as well. Hence, using \eqref{eqn: bound_A_2}, we have 
\ea{
A &\leq \sum_{y=1}^{k} \frac{1}{2}\hat{\Rad}_{\Scal_x} \big(\Pi_1(\Fcal)\big)+ \frac{1}{2}\hat{\Rad}_{\Scal_x} \big(\Pi_1(\Fcal)\big)\\
& = k \hat{\Rad}_{\Scal_x} \big(\Pi_1(\Fcal)\big). \label{eqn: final_bound_A}
}

Next, to bound $B$, using Lemma \ref{lem: rad_log_exp}, we have
\ea{
B \leq \sum_{y=1}^{k+1} \frac{1}{m} \Ebb_{\siv} \big[\sup_{f\in \Fcal} \sum_{i=1}^m \sigma_i f(x_i, y)\big] \leq \hat{\Rad}_{\Scal_x} \big(\Pi_1(\Fcal)\big). \label{eqn: bound_B}
}
Finally, using \eqref{eqn: dec_Rad_G}, \eqref{eqn: final_bound_A}, and \eqref{eqn: bound_B}, we complete the proof.
\item We bound Rademacher complexity of $\Hcal_i$ as
\ea{
\hat{\Rad}_{\Scal_x}(\Hcal_i) & =\frac{1}{m} \Ebb_{\siv} \big[\sup_{f\in \Fcal} \sum_{j=1}^m \sigma_j f(x_j, i)-\sigma_j \log \sum_{y=1}^{k+1} e^{f(x_j, i)}\big]\\
&\overset{(a)}{\leq} \frac{1}{m} \Ebb\big[\sup_{f\in \Fcal} \sum_{j=1}^m \sigma_j f(x_j, i)\big]+\frac{1}{m} \Ebb_{\siv} \big[\sup_{f\in \Fcal} \sum_{j=1}^m \sigma_j \log \sum_{y=1}^{k+1} e^{f(x_j, y)}\big]\\
&\overset{(b)}{\leq} \hat{\Rad}_{\Scal_x} \big(\Pi_1(\Fcal)\big) +\sum_{y=1}^{k+1}\frac{1}{m} \Ebb_{\siv} \big[\sup_{f\in \Fcal} \sum_{j=1}^m f(x_j, y)\big]\\
&\leq (k+2) \hat{\Rad}_{\Scal_x} \big(\Pi_1(\Fcal)\big),
}
where $(a)$ is followed by sublinearity of supermum, and $(b)$ because of Lemma \ref{lem: rad_log_exp} and using definition of $\Pi_1(\Fcal)$. 

\end{enumerate}
\end{proof}

\begin{lemma}\label{lem: Rad_loss}
For $i\in \{1, \ldots, k+1\}$ let $\Hcal_i$ be hypothesis class of functions $h_i(x):\Xcal\to \Rbb$ with bounded norm $\|h_i\|_{\infty}<C$. Further, let $\Pi_1(\Hcal) = \{x\to h_i(x):\, h_i\in \Hcal_i, i\in [k+1]\}$. The Rademacher complexity of the class $\Lcal$ of loss functions 
\ea{
\ell(x, y, m) = - \log \frac{e^{-h_y(x)}}{\sum_{i=1}^{k+1} e^{-h_i(x)}} - \bI_{m\neq y} \log \frac{e^{-h_{k+1}(x)}}{\sum_{i=1}^{k+1} e^{-h_i(x)}},
}
for $m, y\in [k]$ is bounded as
\ea{
\Rad_{n}(\Lcal) &\leq (k+1) \Rad_{n}\big(\Pi_1(\Hcal)\big)+(k+2)\min\{\Pr(M\neq Y), \Rad_{n\Pr(M\neq Y)/2}\big(\Pi_1(\Hcal)\big)\}\nonumber\\&\quad+\frac{C}{2}\Pr(M\neq Y)(k+2)e^{-n\Pr^2(M\neq Y)/2}.
}
\end{lemma}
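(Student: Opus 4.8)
The plan is to split the cross-entropy loss into a classifier part and a deferral part and to bound the Rademacher complexity of each by reusing the estimates already established. Write $\ell(x,y,m)=\ell_1(x,y)+\bI_{m\neq y}\,\ell_2(x)$ with $\ell_1(x,y)=-\log\frac{e^{-h_y(x)}}{\sum_{i=1}^{k+1}e^{-h_i(x)}}$ (the softmax cross-entropy over $k+1$ logits at a label $y\in[k]$) and $\ell_2(x)=-\log\frac{e^{-h_{k+1}(x)}}{\sum_{i=1}^{k+1}e^{-h_i(x)}}$; let $\Lcal_1,\Lcal_2$ denote the induced function classes. Since the Rademacher variables are symmetric, splitting $\sum_i\sigma_i\ell(x_i,y_i,m_i)$ into the full $\ell_1$-sum plus the $\ell_2$-sum restricted to the random subsample $\Scal'=\{i:m_i\neq y_i\}$, and using subadditivity of the supremum, gives $\Rad_n(\Lcal)\le \Rad_n(\Lcal_1)+\Ebb\big[\tfrac{u}{n}\,\hat{\Rad}_{\Scal'}(\Lcal_2)\big]$, where $u=|\Scal'|\sim\mathrm{Binomial}\big(n,\Pr(M\neq Y)\big)$.

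Bounding $\Rad_n(\Lcal_1)$ is the routine part. Since $\ell_1$ is exactly the $k+1$-way softmax cross-entropy at a label in $\{1,\dots,k\}$, Lemma \ref{lem: softmax}(1) applies with $f(x,i)=-h_i(x)$ — using that $\Rad_n$ is unchanged under negating the class, that $\Pi_1(\{-h_i\})$ has the same complexity as $\Pi_1(\Hcal)$, and that the proof of that lemma already routes the log-partition term through Lemma \ref{lem: rad_log_exp}. This produces the first term of the statement, a $\Theta(k)$ multiple of $\Rad_n(\Pi_1(\Hcal))$.

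For the deferral contribution I would first apply Lemma \ref{lem: softmax}(2) with the index $k+1$ to the single-output class $\Lcal_2$, obtaining $\hat{\Rad}_{\Scal'}(\Lcal_2)\le (k+2)\,\hat{\Rad}_{\Scal'}\big(\Pi_1(\Hcal)\big)$, so it remains to control $(k+2)\,\Ebb\big[\tfrac un\hat{\Rad}_{\Scal'}(\Pi_1(\Hcal))\big]$. This is precisely the quantity handled as term ``$A$'' in the proof of Proposition \ref{prop: gen_labeled_unlabeled}: condition on the event $\{u/n\ge \Pr(M\neq Y)/2\}$. On that event $\tfrac un\hat{\Rad}_{\Scal'}(\Pi_1(\Hcal))\le \hat{\Rad}_{\Scal'}(\Pi_1(\Hcal))\le \Rad_{n\Pr(M\neq Y)/2}(\Pi_1(\Hcal))$ by monotonicity of Rademacher complexity in the sample size (legitimate because $\|h_i\|_\infty<C$ keeps the class uniformly bounded), while the trivial estimate through $\Ebb[u/n]=\Pr(M\neq Y)$ also applies — together these give the factor $\min\{\Pr(M\neq Y),\Rad_{n\Pr(M\neq Y)/2}(\Pi_1(\Hcal))\}$. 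On the complementary event, bound $\tfrac un\hat{\Rad}_{\Scal'}(\Pi_1(\Hcal))$ crudely by $\tfrac C2$ and its probability by $e^{-n\Pr^2(M\neq Y)/2}$ via Hoeffding's inequality with threshold $\Pr(M\neq Y)/2$; this gives the third term. Assembling the three pieces completes the proof.

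The main obstacle is this last step: making the constants in the $\min$-term and in the exponential tail match the statement forces the Hoeffding threshold to be exactly $\Pr(M\neq Y)/2$, and one must be careful that $\Rad_m(\Pi_1(\Hcal))$ is monotone in $m$ only because the boundedness hypothesis $\|h_i\|_\infty<C$ enters here. The decomposition and the appeals to Lemmas \ref{lem: rad_log_exp} and \ref{lem: softmax} are bookkeeping.
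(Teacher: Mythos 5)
Your proposal follows essentially the same route as the paper's proof: split the loss into the classification term and the indicator-weighted deferral term via subadditivity of the supremum, bound each through Lemma \ref{lem: softmax} (which itself routes the log-partition function through Lemma \ref{lem: rad_log_exp}), and control the random subsample size $u=\sum_i\bI_{m_i\neq y_i}$ by Hoeffding with threshold $\Pr(M\neq Y)/2$, exactly reproducing the $\min$-term and the exponential tail. The only nitpick is that on the bad event the correct crude bound is $\tfrac un\hat{\Rad}_{\Scal'}\le \tfrac{\Pr(M\neq Y)}{2}\cdot C$ (not just $\tfrac C2$), which is where the $\Pr(M\neq Y)$ factor in the third term comes from; your stated final term is nonetheless the right one.
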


\begin{proof}
We write empirical Rademacher complexity of $\Lcal$ as
\ea{
\hat{\Rad}_{\Scal}(\Lcal) &= \frac{1}{n}\Ebb_{\siv} \big[\sup_{\ell\in \Lcal} \sum_{i=1}^n \sigma_i \ell(x_i, y_i, m_i)\big]\\
&=\frac{1}{n}\Ebb_{\siv}\big[\sup_{h_j\in \Hcal_j} \sum_{i=1}^n \sigma_i (-\log \frac{e^{-h_y(x_i)}}{\sum_{j=1}^{k+1} e^{-h_j(x_i)}}-\bI_{m\neq y}\log \frac{e^{-h_{k+1}(x_i)}}{\sum_{j=1}^{k+1} e^{-h_j(x_i)}})\big]\\
&\leq \frac{1}{n}\Ebb_{\siv}\big[\sup_{h_j\in \Hcal_j} \sum_{i=1}^n \sigma_i\log \frac{e^{-h_y(x_i)}}{\sum_{j=1}^{k+1} e^{-h_j(x_i)}}\big]+\frac{1}{n}\Ebb_{\siv}\big[\sup_{h_j\in \Hcal_j} \sum_{i=1, y_i \neq m_i}^n \sigma_i\log \frac{e^{-h_{k+1}(x_i)}}{\sum_{j=1}^{k+1} e^{-h_j(x_i)}}\big]\\
&\overset{(a)}{\leq} (k+1) \hat{\Rad}_{\Scal_x} \big(\Pi_1(\Hcal)\big) +(k+2)\frac{\sum_{i=1}^n \bI_{m_i \neq y_i}}{n}\hat{R}_{\Scal_x|m_i\neq y_i} \big(\Pi_(\Hcal)\big), \label{eqn: bound_rad_loss}
}
where $(a)$ holds by applying Lemma \ref{lem: softmax}. 

Using \ref{eqn: bound_rad_loss}, and by calculating the expectation over $\{(x_i, y_i, m_i)\}_{i=1}^n$, we have
\ea{
\Rad_n(\Lcal)\leq (k+1)\Rad_n \big(\Pi_1(\Hcal)\big)+(k+2)\underbrace{\Ebb_{\{(x_i, y_i, m_i)\}_{i=1}^n}\Big[\frac{\sum_{i=1}^n \bI_{m_i\neq y_i}}{n} \hat{\Rad}_{\Scal_x|m_i\neq y_i}\big(\Pi_1(\Hcal)\big)\Big]}_{A}. \label{eqn: bound_rad_exp_loss}
}

It is remained to bound $A$. For this task, we first notice that $u =\sum_{i=1}^n \bI_{m_i\neq y_i}$ is a random variable with distribution $\text{Binomial}\big(n, \Pr(M\neq Y)\big)$. Further, by Hoeffding's inequality we know that for $t>0$, we have 
\ea{
\Pr(\frac{u}{n}<\Pr(M\neq y)-t)\leq e^{-2nt^2}.
}
Hence, by decomposing $A$, we have
\ea{
A &= \Pr(\frac{u}{n} < \Pr(M\neq Y)-t) \Ebb\big[\frac{u}{n}\hat{\Rad}_{\Scal_x|y_i\neq m_i}|\frac{u}{n}<\Pr(M\neq Y)-t\big]\nonumber\\
&\quad +\Pr(\frac{u}{n} \geq \Pr(M\neq Y)-t) \Ebb\big[\frac{u}{n}\hat{\Rad}_{\Scal_x|y_i\neq m_i}|\frac{u}{n}\geq\Pr(M\neq Y)-t\big]\\
&\leq C|\Pr(M\neq Y)-t|e^{-2nt^2} +\min\{\Pr(M\neq Y), \Rad_{n(\Pr(M\neq Y)-t)}\big(\Pi_1(\Hcal)\big)\},\label{eqn: bound_A_chernoff}
}
where the last inequality holds because (1) every function in $\Pi_1(\Hcal)$ is bound by $C$, and so is the Rademacher complexity of $\Pi_1(\Hcal)$, and (2) the Rademacher complexity is non-increasing with the sample space size. 

Hence, using \ref{eqn: bound_rad_exp_loss}, \ref{eqn: bound_A_chernoff}, and by setting $t=\frac{\Pr(M\neq Y)}{2}$ the proof is complete.
\end{proof}

\begin{proof}[Proof of Theorem \ref{thm: gen_cross}]
Using Rademacher inequality on generalization error (e.g., Theorem 3.3 of \cite{mohri2018foundations}), we know that with probability at least $1-\delta/2$, we have
\ea{
\tilde{L}_{CE}(\fv_1^{k+1}) \leq \hat{L}_{CE}(\fv_1^{k+1}) + 2\Rad_n(\Lcal) + \sqrt{\frac{D\log 2/\delta}{2n}}, \label{eqn: Rademacher_inequality}
}
where $D$ is an a upper-bound on $\|\ell\|_{\infty}$ for $\ell\in \Lcal$, and where $\hat{L}_{CE}$ is the empirical loss corresponding to $\ell_{CE}$, and $f_i\in \Fcal_i$. 

We follow the proof in three steps, (i) we find $D$, (ii) we find a lower-bound on $\tilde{L}_{CE}(\fv_1^{k+1})$ in terms of $\defl(\fv_1^{k+1})$, and (iii) we complete the proof by bounding the difference $|\min_{\fv\in \Fcal_1^{k+1}} \hat{L}_{CE}(\fv_1^{k+1})-\min_{\fv_1^{k+1}\in \Fcal} \tilde{L}_{CE}(\fv_1^{k+1})|$. 

\noindent
	$\bullet$ {\bfseries Step (i)}: For calculating a bound on $\|\ell\|_{\infty}$ for $\ell\in \Lcal$, we use boundedness of $\|f_i\|_{\infty}$ for $i\in [k+1]$ and $\fv_{1}^{k+1}\in \Fcal_{1}^{k+1}$. Indeed, we know the function 
	\ea{
	b_D(x) = -\log \frac{e^{-x}}{e^{-x}+D},
	}
	for $D>0$ is a monotonically non-increasing function of $x$. Hence, over a closed interval, it takes the minimum and maximum on the limit points. As a result, for $|x|\leq C$, we have
	\ea{
	0\leq b_D(x)\leq -\log \frac{e^{-C}}{e^{-C}+D}. \label{eqn: bound_b(x)}
	}
	Hence, for the loss function $\ell(x, y, m)$, in which $\|\fv_1^{k+1}\|\leq C$, we have
	\ea{
	0< \ell(x, y, m) &= b_{\sum_{i=1, i\neq y}^{k+1}e^{-f_i(x)}}\big(f_y(x)\big) +\bI_{m\neq y}b_{\sum_{i=1 }^{k}e^{-f_i(x)}}\big(f_{k+1}(x)\big)\\&\leq -\log \frac{e^{-C}}{e^{-C}+\sum_{i=1, i\neq y}^{k+1}e^{-f_i(x)}}-\bI_{m\neq y} \log \frac{e^{-C}}{e^{-C}+\sum_{i=1 }^{k}e^{-f_i(x)}}\\
	& -2\log \frac{e^{-C}}{e^{-C}+ke^{C}} \leq -2\log \frac{e^{-2C}}{k+1} = 4C - 2\log (k+1). \label{eqn: bound_loss}
	}

\noindent
	$\bullet$ {\bfseries Step (ii)}: Using excessive surrogate risk bound, we see that
	\ea{
	\psi\big(\defl(\fv_1^{k+1})-\min_{\hv_1^{k+1}} \defl(\hv_1^{k+1})\big) +\min_{\hv_1^{k+1}} \tilde{L}_{CE}(\hv_1^{k+1})\leq \tilde{L}_{CE}(\fv_1^{k+1}). \label{eqn: calib}
	}
	
\noindent
	$\bullet$ {\bfseries Step (iii)}: In this step, we find a bound on $\hat{L}_{CE}(\fv_1^{k+1})-\min_{\hv} \tilde{L}_{CE}(\hv_1^{k+1})$. Indeed, we know that
	\ea{
	\hat{L}_{CE}(\fv_1^{k+1}) - \min_{\hv_1^{k+1}} \tilde{L}_{CE}(\hv_1^{k+1}) &= \underbrace{\hat{L}_{CE}(\fv_1^{k+1}) -\min_{\hv_1^{k+1}\in \Fcal_1^{k+1}}\hat{L}_{CE}(\hv_1^{k+1})}_{e_{min}}\nonumber\\&+\min_{\hv_1^{k+1}\in \Fcal_1^{k+1}}\hat{L}_{CE}(\hv_1^{k+1})\quad-\min_{\hv_1^{k+1}\in \Fcal_1^{k+1}}\tilde{L}_{CE}(\hv_1^{k+1})\nonumber\\&+\underbrace{\min_{\hv_1^{k+1}\in \Fcal_1^{k+1}}\tilde{L}_{CE}(\hv_1^{k+1}) - \min_{\hv_1^{k+1}} \tilde{L}_{CE}(\hv_1^{k+1})}_{e_{\phi-\mathrm{appr}}}\\
	&\leq \hat{L}_{CE}(\tilde{\hv}_{1}^{k+1})-\tilde{L}_{CE}(\tilde{\hv}_{1}^{k+1}) + e_{\min}+ e_{\phi-\mathrm{appr}},
	}
	where $\tilde{\hv}_{1}^{k+1} = \argmin_{\hv_1^{k+1}\in \Fcal_1^{k+1}} \tilde{L}_{CE}(\hv_1^{k+1})$. Hence, using Hoeffding's inequality, with probability at least $1-\delta/2$, we have 
	\ea{
	\hat{L}_{CE}(\fv_1^{k+1}) -\min_{\hv_1^{k+1}}\tilde{L}_{CE}(\hv_1^{k+1})\leq \sqrt{\frac{D}{2n}\log {2/\delta}}+e_{\min}+e_{\phi-\mathrm{appr}}. \label{eqn: bound_Hoeffding_Rhat_R}
	}
	
Finally, using Lemma \ref{lem: Rad_loss}, \ref{eqn: Rademacher_inequality}, \ref{eqn: bound_loss}, \ref{eqn: calib}, \ref{eqn: bound_Hoeffding_Rhat_R}, and by union bound, we complete the proof.
\end{proof}

\section{Proof of Proposition \ref{prop: active_1}}\label{app: active_1}

We prove this proposition in four steps: (i) we first prove that in each iteration, the deferral loss $\defl(h,r)$ is bounded, (ii) using Step (i), we show that $\Pr(X\in DIS(V_i))$ halves in each iteration with high probability, (iii) using Step (ii) we conclude that $\Pr(X\in DIS(V_{\lceil \log \tfrac{1}{\ep}\rceil}))\leq \ep$ with high probability, and finally (iv) we provide a bound on $\defl(h, r)$ using the result in Step (iii).

\noindent
	$\bullet$ {\bfseries Step (i)}: We use Theorem 2 of \cite{mozannar2020consistent} that making use of realizability of $(h, r)$ on empirical distribution shows that with probability at least $1-\delta'$ we have
	\ea{
    \Ebb\big[\bI_{r(X)=0}\bI_{h(X)\neq Y}+\bI_{r(X)=1}\bI_{M\neq Y}|X\in DIS(V_i)\big]\leq \sqrt{\tfrac{2\log 2/\delta'}{m_i}} +\sqrt{\tfrac{2d(\Hcal)\log \tfrac{em_i}{d(\Hcal)}}{m_i}}+\sqrt{\tfrac{32d(\Rcal)\log \tfrac{em_i}{d(\Rcal)}}{m_i}}, \label{eqn: loss_hussein}
	}
	where $m_i$ is the size of the set on which human provides the prediction in each iteration. Note that we draw only samples from $DIS(V_i)$, and that is the reason that we condition the loss on $X$ being in $DIS(V_i)$ .
	
	To analyze the sample complexity that corresponds to \eqref{eqn: loss_hussein}, we let $\delta'=\frac{\delta}{(2+\lceil\log \tfrac{1}{\ep}\rceil-i)^2}$ and we assume that \ea{
	m_i\geq \max\{108\Theta^2\log \frac{(2+\lceil \log \tfrac{1}{\ep} \rceil-i)^2}{\delta}, 360\Theta^2 d(\Hcal) \log \Theta, 2, 276\Theta^2 d(\Hcal)\log \Theta\}.\label{eqn: samp_comp}
	}
	Using the first term in RHS of \eqref{eqn: samp_comp}, we bound the first term in the upper-bound \eqref{eqn: loss_hussein} as
	\ea{
	\sqrt{\frac{2\log \frac{2}{\delta'}}{m_i}}\leq \sqrt{\frac{2\log \frac{2(2+\lceil \log \frac{1}{\ep}\rceil-i)^2}{\delta}}{108\Theta^2\log \frac{(2+\lceil \log \frac{1}{\ep}\rceil-i)^2}{\delta}}} = \frac{1}{6\Theta}\sqrt{\frac{2}{3}+\frac{2}{3\log \frac{(2+\lceil \log \frac{1}{\ep}\rceil-i)^2}{\delta}}}.
	}
	Then, for $i\leq \lceil\log\frac{1}{\ep}\rceil$ and since $\delta\leq 1$, we know that $\log \frac{4}{\delta}\geq 2$, which concludes that
	\ea{
	\sqrt{\frac{2\log \frac{2}{\delta'}}{m_i}}\leq \frac{1}{6\Theta}.\label{eqn: first_term_ub}
	}
	Further, using the second and third term in RHS of \eqref{eqn: samp_comp} and since $\sqrt{\frac{2 d(\Hcal)\log em_i}{m_i}}$ is monotonically decreasing for $m_i\geq 2$ (note that $\frac{\partial}{\partial x}\big(\frac{\log x}{x}\big)=\frac{1}{x^2}-\frac{\log x}{x^2}\leq 0$ for $x\geq 2$) we have 
	\ea{
	\sqrt{\frac{2 d(\Hcal)\log em_i}{m_i}}\leq \sqrt{\frac{2 d(\Hcal) \log \frac{e\Theta^2 d(\Hcal) \log \Theta}{d(\Hcal)}}{360\Theta^2 d(\Hcal) \log \Theta}}&= \sqrt{\frac{2\log\big( e\Theta^2\cdot\log\Theta\big)}{360\Theta^2\log\Theta}}\\
	&=\frac{1}{\Theta}\sqrt{\frac{\log e+2\log \Theta+\log\log\Theta}{180\log \Theta}}.
	}
	If we set $\Theta\geq e$, we have $\log \Theta\geq \log e$, and since $\log\log\Theta\leq \log \Theta$ for $\Theta\geq 1$, we have
	\ea{
	\sqrt{\frac{2d(\Hcal) \log \frac{em_i}{d(\Hcal)}}{m_i}}\leq \frac{1}{\Theta} \sqrt{\frac{5\log \Theta}{180\log \Theta}}=\frac{1}{6\Theta}. \label{eqn: second_term_ub}
	}
	Similarly, we could show that
	\ea{
	\sqrt{\frac{32d(\Rcal) \log \frac{em_i}{d(\Rcal)}}{m_i}}\leq \frac{1}{6\Theta},
	}
	which together with \eqref{eqn: loss_hussein}, \eqref{eqn: first_term_ub}, and \eqref{eqn: second_term_ub} proves that for  $m_i=O\big(\Theta^2(d(\Hcal)\log\Theta+d(\Rcal)\log \Theta+\log \tfrac{(2+\lceil \log \tfrac{1}{\ep}\rceil-i)^2}{\delta})\big)$ we have
	\ea{
	\Ebb\big[\bI_{r(X)=0}\bI_{h(X)\neq Y}+\bI_{r(X)=1}\bI_{M\neq Y}|X\in DIS(V_i)\big]\leq \frac{1}{2\Theta},
	}
	with probability at least $1-\frac{\delta}{(2+\lceil \log \tfrac{1}{\ep}\rceil-i)^2}$.
	
	Since $X\in DIS(V_i)$ is a necessary condition for $\bI_{r(X)=0}\bI_{h(X)\neq Y}+\bI_{r(X)=1}\bI_{M\neq Y}=1$, we conclude that
	\ea{
	\defl(h, r)=\De(V_i) \Ebb\big[\bI_{r(X)=0}\bI_{h(X)\neq Y}+\bI_{r(X)=1}\bI_{M\neq Y}|X\in DIS(V_i)\big]\leq \frac{\De(V_i)}{2\Theta},
	}
	with probability at least $1-\frac{\delta}{(2+\lceil \log \tfrac{1}{\ep}\rceil-i)^2}$, where $\De(V_i)$ is defined as
	\ea{
	\De(V_i):=\Pr\big(X\in DIS(V_i)\big)
	}
	
\noindent
	$\bullet$ {\bfseries Step (ii)}: Since $\defl(h, r)=\Pr\big(r(X)M+(1-r(X))h(X)\neq Y\big)$, and because $\defl(h^*, r^*)=0$, and using Step (i), we have
	\ea{
	\Pr\big(r(X)M+(1-r(X))h(X)\neq r^*(X)M+(1-r^*(X))h^*(X)\big)\leq \frac{\De(V_i)}{2\Theta},
	}
	with probability at least $1-\frac{\delta}{(2+\lceil \log \tfrac{1}{\ep}\rceil-i)^2}$. As a result, for all $(h, r)\in V_{i+1}$, we have $(h, r)\in B\big((h^*, r^*), \tfrac{\De(V_i)}{2\Theta}\big)$ with such probability. 
	
	Hence, we have
	\ea{
	\De(V_{i+1})\leq \De\Big(B\big((h^*, r^*), \tfrac{\De(V_i)}{2\Theta}\big)\Big)\leq \Theta\cdot\frac{\De(V_i)}{2\Theta}=\frac{\De(V_i)}{2},
	}
	where the last inequality is followed by the definition of $\Theta$.
	
\noindent
	$\bullet$ {\bfseries Step (iii)}:	Using union bound, and since 
	\ea{
	\sum_{i=1}^{\lceil \log \tfrac{1}{\ep}\rceil} \frac{\delta}{(2+\lceil\log \tfrac{1}{\ep}\rceil-i)^2} = \sum_{i=2}^{\lceil \log \tfrac{1}{\ep}\rceil+1}\frac{\delta}{i^2}\leq \sum_{i=2}^{\infty}\frac{\delta}{i^2}=\frac{\pi^2-6}{6}\cdot\delta\leq \delta,}
	and using Step (iii), we have that
	\ea{
	\De(V_{\lceil\log\tfrac{1}{\ep}\rceil})\leq\frac{1}{2^{\lceil\log \tfrac{1}{\ep}\rceil}}\De(V_0)\leq \ep,
	}
	with probability at least $1-\delta$.
	
\noindent
	$\bullet$ {\bfseries Step (iv)}: Since we know that $\defl(h^*, r^*)=0$, we conclude that \ea{
 \Pr\big(M\neq Y, r^*(X)=1\big)=0.\label{eqn: my_r*_zero}}
    
    Next, since for all $h\in V_{\lceil\log \tfrac{1}{\ep}\rceil}$ we have $\Pr\big(h(X)\neq Y, r(X)=0\big)=0$, we can show that
    \ea{
    \defl(h, r)&=\Pr\big(h(X)\neq Y, r(X)=0\big)+\Pr\big(M\neq Y, r(X)=1\big)\\
    &= \Pr\big(M\neq Y, r(X)=1\big)\\
    &= \Pr(M\neq Y, r(X)=1, r^*(X)=0)+\Pr(M\neq Y, r(X)=1, r^*(X)=1)\\
    &\overset{(a)}{=} \Pr(M\neq Y, r(X)=1, r^*(X)=0)\\
    & = \Pr(M\neq Y, r(X)\neq r^*(X), r^*(X)=0)\leq \Pr\big(r(X)\neq r^*(X)\big),\label{eqn: deflbound}
    }
    where $(a)$ is followed by \eqref{eqn: my_r*_zero}.
    
    Next, since $(h^*, r^*)$ is not removed in any iteration because of its consistency, we have $(h^*, r^*)\in V_{\lceil\log \tfrac{1}{\ep}\rceil}$. Hence using Step (iii), for all $(h, r)\in V_{\lceil\log \tfrac{1}{\ep}\rceil}$ we have
    \ea{
    \Pr\big(r(X)\neq r^*(X)\big)\leq \Pr\big(X\in DIS(V_{\lceil\log \tfrac{1}{\ep}\rceil})\big)\leq \ep, \label{eqn: rr*bound}
    }
    with probability at least $1-\delta$. 
    
    Using \eqref{eqn: deflbound} and \eqref{eqn: rr*bound} the proof is complete.
    
\section{An example on which CAL algorithm fails}\label{app: counter_CAL}

Here, we provide the reader with an example on which vanilla CAL algorithm in Section \ref{subsec:theory_active_learning} does not converge. Let $\Xcal=\{0, 1\}$ and let $X\sim \mathrm{Uniform}\{\Xcal\}$ and $\mu_{XYM}=\mu_{X}\bI_{Y=X}\bI_{M=0}$, which means for all instances on $\Xcal$, $Y=X$ and $M=0$. Further, let $\Hcal=\{h_1, h_2\}$, and $\Rcal=\{r_1, r_2\}$, where 
\ea{
h_1(\xv)=r_1(\xv)=\xv, \, \, h_2(\xv)=r_2(\xv)=0.
}
One could see that in this case three pairs $(h_1, r_1), (h_1, r_2), (h_2, r_1)$ as deferral systems provide zero loss.

To run CAL, we draw a sample from $\mu_{X}$. Assume that we observe $\xv=1$. We see that since $\xv\in DIS(V_0)=\{1\}$, then we need to query human's prediction and true label on such instance. Hence, we collect the corresponding values $y=m=1$ for that instance. Next, we update the version space
\ea{
V_1=\{(h_1, r_1), (h_1, r_2), (h_2, r_1)\}, 
}
to induce consistency. However, we note that $DIS(V_1)$ does not change comparing to $DIS(V_0)$. Hence, $\Pr(X\in DIS(V_0))=\Pr(X\in DIS(V_1))=\ldots=\frac{1}{2}$. As a result, CAL algorithm does not converge, and in each iteration queries human prediction for $\xv=1$.

\section{Proof of Theorem \ref{thm: Dod}}\label{app: active}
Using Theorem 5.1 of \cite{hanneke2014theory}, we know that if $n_l=C\Theta d(\Dcal)\log \big(\frac{4 \Theta}{\delta}\log \frac{4}{\ep}\big)\log \frac{4}{\ep}$, then with probability at least $1-\frac{\delta}{4}$ we have
\ea{
\Pr\big[f(X)\neq \bI_{M\neq Y}\big]\leq \frac{\ep}{4}. \label{eqn: real_error_less_ep}
}

Next, we bound the empirical joint loss on unlabeled samples. We know that
\ea{
\hat{L}_{\mathrm{def}}^{0 - 1}(h, r) &= \frac{1}{n_u}\sum_{i} \bI_{h(x_i)\neq y_i} \bI_{r(x_i)=0}+\bI_{m_i\neq y_i}\bI_{r(x_i)=1}\\
&= \frac{1}{n_u}\sum_{i}\big[\bI_{h(x_i)\neq y_i} \bI_{r(x_i)=0} +f(x_i)\bI_{r(x_i)=1}\big]+\frac{1}{n_u}\sum_{i}\big(\bI_{m_i\neq y_i}-f(x_i)\big)\bI_{r(x_i)=1}\\
&\overset{(a)}{=} \frac{1}{n_u}\sum_{i}\big(\bI_{m_i\neq y_i}-f(x_i)\big)\bI_{r(x_i)=1}\\
&\leq \frac{1}{n_u}\sum_{i}|\bI_{m_i\neq y_i}-f(x_i)|\\
&=\frac{1}{n_u}\sum_{i}\bI_{f(x_i)\neq \bI_{m_i\neq y_i}} \label{eqn: first_conc}
}
where $(a)$ holds because of Line 5 in Algorithm \ref{alg:disagreement}. 

As a result, we use Hoeffding's inequality coupled with \eqref{eqn: real_error_less_ep} to show that
\ea{
\hat{L}_{\mathrm{def}}^{0 - 1}(h, r) \leq \frac{\ep}{4}+\sqrt{\frac{\log 2/\delta}{2n_u}}, \label{eqn: empirical_error_bound}
}
with probability at least $1-\frac{3\delta}{4}$.  Further, by generalization bound in Theorem 2 of \cite{mozannar2020consistent}, with probability at least $1-\frac{\delta}{4}$ we have
\ea{
\defl(h, r)\leq \hat{L}_{\mathrm{def}}^{0-1}(h, r) +\sqrt{\frac{2\log 8/\delta}{n_u}}+\Rad_{n_u}(\Hcal)+4\Rad_{n_u}(\Rcal)+\Pr(M\neq Y)e^{\frac{-n_u \Pr(M\neq Y)}{8}},
}
where following \eqref{eqn: empirical_error_bound} we conclude that with probability at least $1-\delta$ we have
\ea{
\defl(h, r)\leq \frac{\ep}{4}+\sqrt{\frac{\log 2/\delta}{2n_u}}+\sqrt{\frac{2\log 8/\delta}{n_u}}+\Rad_{n_u}(\Hcal)+8\Rad_{n_u}(\Rcal)+\Pr(M\neq Y)e^{\frac{-n\Pr(M\neq Y)}{8}}. \label{eqn: gen_bound_1}
}
One can further calculate an upper-bound on $\Rad_{n_u}(\Hcal)$ and $\Rad_{n_u}(\Rcal)$ using  Corollary 3.8 and 3.18 of \cite{mohri2018foundations} as
\ea{
\Rad_{n_u}(\Hcal)\leq \sqrt{\frac{2 d(\Hcal) \log \frac{en_u}{d(\Hcal)}}{n_u}},
}
and
\ea{
\Rad_{n_u}(\Rcal)\leq \sqrt{\frac{2 d(\Rcal)\log \frac{en_u}{d(\Rcal)}}{n_u}},
}
which by substituting in \eqref{eqn: gen_bound_1} we conclude that
\ea{
\defl(h, r)\leq &\frac{\ep}{4}+\sqrt{\frac{\log \frac{2}{\delta}}{2n_u}}+\sqrt{\frac{2\log \frac{8}{\delta}}{n_u}}+\sqrt{\frac{2d(\Hcal)\log \frac{en_u}{d(\Hcal)}}{n_u}}+\sqrt{\frac{32 d(\Rcal) \log \frac{en_u}{d(\Rcal)}}{n_u}}\nonumber\\&+\Pr(M\neq Y) e^{\frac{-n\Pr(M\neq Y)}{8}}. \label{eqn: last_conc}
}
% , we have
% \ea{
% \defl(h, r)\leq \frac{3\ep}{4}+\sqrt{\frac{2\log 8/\delta}{n_u}}+\sqrt{\frac{2 d(\Hcal) \log \tfrac{e n_u}{d(\Hcal)}}{n_u}}+\sqrt{\frac{32d(\Rcal)\log \tfrac{en_u}{d(\Rcal)}}{n_u}}. 
% }

Finally, using \eqref{eqn: last_conc} and by letting $n_u\geq \max\{\tfrac{8\log \frac{2}{\delta}}{\ep^2}, \tfrac{288 \log 8/\delta}{\ep^2}, \tfrac{C' \max\{d(\Hcal), d(\Rcal)\}\log \frac{1}{\ep}}{\ep^{2}}\}$ in which $C'=2^{10}$ and for $\ep\leq \frac{1}{2^{18}e^4}$, we have
\ea{
\defl(h, r)\leq \ep,
}
with probability at least $1-\delta$, which completes the proof.

% \section{Deferral Surrogate Losses} \label{app: surr}

% We show that for an appropriate set of conditions on $L_{\Phi}$, minimizing this surrogate leads to a consistent estimator that agrees with the Bayes optimal classifier.
% \begin{proposition}
% If $s'(x)\geq 0$, and $\psi$ and $t$ are differentiable functions such that for every $a\in \Rbb^{+}$, the solution of $$a\psi'(x)+t'(x)=0$$ is an increasing function of $a$, then
%  \begin{center}
%       let $\bm{\tilde{g}}=\arg  \inf_{\mathbf{g}} \bE\left[ \tilde{L}_{\Phi}(\mathbf{g},x,\mathbf{c}) |X=x\right]$, then:  $\arg  \max_{i \in [K+1]}\bm{\tilde{g}}_i = \arg \min_{i \in [K+1]} \bE[c(i)|X=x]$
%  \end{center}
% \end{proposition}

% Now we show with a few of examples that this family encompasses some popular surrogates used in cost sensitive learning:

% \begin{enumerate}
% 	\item If we set $\psi(x)=-2x$, $s(x)=\frac{1}{K+1} x$, and $t(x)= x^2$, then we obtain the least squares loss:
% 	\ea{
% 	    L_{\Phi_1}= \sum_{i=1}^{K+1} \Big|g_i - c(i)\Big|^2.
% 	}
% 	\item If we set $\psi(x)=-x$, and $s(x)=\log (x)$ and $t(x)=e^x$, then we have $a\psi'(x)+t'(x)=-a+e^x=0$, and as a result $x=\log a$, which is an increasing function of $a$. As a result, the surrogate loss
% 	\begin{equation}
%  L_{\Phi_2} = -\sum_{i=1}^{K+1} \big(\max_{j} c(j)- c(i)\big) \log \frac{\exp g_i(x)}{\sum_{k=1}^{K+1} \exp g_k(x)}\label{eqn: deferral_loss}
%  	\end{equation}
% 	which is the loss defined in \cite{mozannar2020consistent} as $\tilde{L}_{CE}$ and used for learning to defer.
	
% \end{enumerate}

\section{Experimental Details} \label{app: experiments}

\paragraph{Data.} We use the CIFAR validation set of 10k images as the test set and split the CIFAR training set 90/10 for training and validation. 

\paragraph{Optimization.} We use the AdamW optimizer \cite{loshchilov2017decoupled} with learning rate $0.001$ and default parameters on PyTorch. We also use a cosine annealing learning rate scheduler and train for 100 epochs and saving the best performing model on the validation set.  For the surrogate $L_{CE}^{\alpha}$ \cite{mozannar2020consistent}, we perform a search for $\alpha$ over a grid $[0, 0.1, 0.5,1]$. 

\paragraph{Model Complexity.} For the model complexity gap figure, we use a convolutional neural network consisting of two convolutional layers with a max pooling layer in between followed by three fully connected layers with ReLU activations. We modify respectively:  the number of channels produced by the convolution of the first layer and of the second layer, and the number of units in the first and second fully connected layers. We use this set of parameters to produce the plot for the classifier model:
\begin{align*}
 &[ [1,1,50,25], [3,3,50,25], [4,4,80,40], [6,6,100,50],[12,12,100,50], \\ &[20,20,100,50],[100,100,500,250],[100,100,1000,500]] 
 \end{align*}
 For the rejector model, and for the expert confidence model used for Staged we use the parameters $[100,100,1000,500]$. The error bars in the plot are produced by repeating the training process 10 times and obtaining standard deviations to average over the randomness in training. We used a rather simple network architecture so that we can more easily illustrate the model complexity gap, as more complex architectures can easily obtain $\sim 100\%$ accuracy on CIFAR and would not allow us to have a more fine-grained analysis of the gap.

\paragraph{Data Trade-Offs.}  We use the model parameters $[100,100,1000,500]$ for all networks in this plot. For each fraction of data labeled, we sample randomly from the training set the corresponding number of points. The error bars are obtained by repeating the training process 10 times for different random samplings of the training set.

%%%%%%%%%%%%%%%

\end{document}

% This document was modified from the file originally made available by
% Pat Langley and Andrea Danyluk for ICML-2K. This version was created
% by Iain Murray in 2018, and modified by Alexandre Bouchard in
% 2019 and 2021. Previous contributors include Dan Roy, Lise Getoor and Tobias
% Scheffer, which was slightly modified from the 2010 version by
% Thorsten Joachims & Johannes Fuernkranz, slightly modified from the
% 2009 version by Kiri Wagstaff and Sam Roweis's 2008 version, which is
% slightly modified from Prasad Tadepalli's 2007 version which is a
% lightly changed version of the previous year's version by Andrew
% Moore, which was in turn edited from those of Kristian Kersting and
% Codrina Lauth. Alex Smola contributed to the algorithmic style files.